\newcommand{\bmu}{\text{\boldmath{$\mu$}}}
\newcommand{\bz}{\boldsymbol{z}}
\newcommand{\bx}{\boldsymbol{x}}
\newcommand{\bdelta}{\boldsymbol{\delta}}
\newcommand{\by}{\boldsymbol{y}}
\newcommand{\beps}{\boldsymbol{\epsilon}}
\newcommand{\btheta}{\boldsymbol{\theta}}
\newcommand{\bphi}{\boldsymbol{\phi}}
\newcommand{\bI}{\boldsymbol{I}}
\newcommand{\bbP}{\mathbb{P}}
\newcommand{\sW}{\mathsf{W}}
\DeclareMathOperator*{\argmin}{arg\,min}
\newtheorem{assumption}{\textbf{Assumption}}
\newtheorem{lemma}{\textbf{Lemma}}
\newtheorem{theorem}{\textbf{Theorem}}
\newtheorem{remark}{\textbf{Remark}}
\newcommand{\mE}{\mathbb{E}}
\newcommand{\cL}{\mathcal{L}}
\newcommand{\cN}{\mathcal{N}}
\newcommand{\cO}{\mathcal{O}}
\newcommand{\tq}{\tilde{q}}
\newcommand{\baralpha}{\bar{\alpha}}
\title{Improved Diffusion-based Generative Model with Better Adversarial Robustness}
\renewcommand\footnotemark{}
\author{
    \!\!Zekun Wang$^{*1}$, Mingyang Yi$^{*\dagger2}$, Shuchen Xue$^{3,4}$, Zhenguo Li$^{5}$, Ming Liu$^{\dagger1}$, Bing Qin$^{1}$, \\
    \textbf{Zhi-Ming Ma$^{3,4}$}%
    \thanks{$^*$Equal contribution}%
    \thanks{$^{\dagger}$Corresponding author}\\
    $^{1}$Harbin Institute of Technology ~~~~ $^{2}$Renmin University of China \\
    $^{3}$Academy of Mathematics and Systems Science, Chinese Academy of Sciences \\
    $^{4}$University of Chinese Academy of Sciences ~~~~$^{5}$Huawei Noah’s Ark Lab \\
    \footnotesize{\texttt{zkwang@ir.hit.edu.cn}} ~~~\footnotesize{\texttt{yimingyang@ruc.edu.cn}}
    \\\footnotesize{\texttt{xueshuchen17@mails.ucas.ac.cn}}
}
\begin{document}

\maketitle

\maketitle
\begin{abstract}
  Diffusion Probabilistic Models (DPMs) have achieved significant success in generative tasks. However, their training and sampling processes suffer from the issue of distribution mismatch. During the denoising process, the input data distributions differ between the training and inference stages, potentially leading to inaccurate data generation. To obviate this, we analyze the training objective of DPMs and theoretically demonstrate that this mismatch can be alleviated through Distributionally Robust Optimization (DRO), which is equivalent to performing robustness-driven Adversarial Training (AT) on DPMs. Furthermore, for the recently proposed Consistency Model (CM), which distills the inference process of the DPM, we prove that its training objective also encounters the mismatch issue. Fortunately, this issue can be mitigated by AT as well. Based on these insights, we propose to conduct efficient AT on both DPM and CM. Finally, extensive empirical studies validate the effectiveness of AT in diffusion-based models. The code is available at \url{https://github.com/kugwzk/AT_Diff}.
\end{abstract}

\section{Introduction}\label{sec:intro}
Diffusion Probabilistic Models (DPMs)~\citep{ ho2020denoising, song2020score,yi2024towards} have achieved remarkable success across a wide range of generative tasks such as image synthesis~\citep{dhariwal2021diffusion, Rombach_2022_CVPR, ho2022cascaded}, video generation~\citep{ho2022vdm,blattmann2023videoldm}, text-to-image generation~\citep{2021glide,dalle2,imagen}, \emph{etc}. The core mechanism of DPMs involves a forward diffusion process that progressively injects noise into the data, followed by a reverse process that learns to generate data by denoising the noise.
Unlike traditional generative models such as GANs\citep{goodfellow2014generative} or VAEs \citep{kingma2013auto}, which directly map an easily sampled latent variable (e.g., Gaussian noise) to the target data through a single network function evaluation (NFE), DPMs adopt a gradual denoising approach that requires multiple NFEs~\citep{song2020denoising, salimans2022progressive, lu2022dpmsa, ma2024surprising}. However, this noising-then-denoising process introduces a distribution mismatch between the training and sampling stages, potentially leading to inaccuracies in the generated outputs.
\par
Concretely, during the training stage, the model is learned to predict the noise in ground-truth noisy data derived from the training set. In contrast, during the inference stage, the input distribution is obtained from the output generated by the DPM in the previous step, which differs from the training phase, caused by the inaccurate estimation of the score function due to training \citep{song2021maximum,yi2023generalization} and the discretization error \citep{chen2022sampling,li2023towards,xue2024sa,xue2024accelerating} brought by sampling. 
Such distribution mismatches are referred to as \textit{Exposure Bias}, which has been discussed in auto-regressive language models \citep{bengio2015scheduled, ranzato2016sequence}. 
\par
Recently, the aforementioned distribution mismatch problem in diffusion has been also recognized by \citep{diffusion-ip,li2024on,ren2024multistep,es,li2024alleviating,Reflected_diffusion_models}. However, these studies are either rely on strong mismatch distributional assumptions (e.g., Gaussian) \citep{diffusion-ip,es,ren2024multistep} or incur significant additional computational costs \citep{li2024on}. 
This indicates that a more practical solution to this problem has been overlooked until now. To bridge this gap, we begin with the discrete DPM introduced in \citep{ho2020denoising}. Intuitively, although there is a mismatch between training and inference, the distributions of intermediate noise generated during the inference stage are close to the ground-truth distributions observed during training. Therefore, improving the distributional robustness \citep{yi2021improved,namkoong2019reliable,shapiro2017distributionally} (which measures the robustness of the model to distributional perturbations in training data) of DPM mitigates the distribution mismatch problem. To achieve this, we refer to Distribution Robust Optimization (DRO) \citep{shapiro2017distributionally,namkoong2019reliable}, which aims to improve the distributional robustness of models. We then prove that applying DRO to DPM is mathematically equivalent to implementing \emph{robustness-driven} Adversarial Training (AT) \citep{madry2018towards,freeat,yi2021improved} on DPM. \footnote{Note that the ``adversarial'' here refers to perturbation to input training data, instead of the adversarial of generator-discriminator in GAN \citep{goodfellow2014generative}.} Following the DRO framework, we also analyze the recently proposed diffusion-based Consistency Model (CM)~\citep{song2023consistency,luo2023latent} which distills the trajectory of DPM into a model with one NFE generation. We first prove that the training objective of CM similarly suffers from the mismatch issue as in multi-step DPM. Moreover, the issue can also be mitigated by implementing AT. Therefore, for both DPM and CM, we propose to apply efficient AT (e.g., ``Free-AT'' \citep{freeat}) during their training stages to mitigate the distribution mismatch problem.\footnote{Notably, the standard AT \citep{madry2018towards} solves a minimax problem that slows the training process. The efficient AT has no extra computational cost compared to the standard training ones \citep{freeat}.} Finally, we summarize our contributions as follows.
\begin{itemize}
\item We conduct an in-depth analysis of the diffusion-based models (DPM and CM) from a theoretical perspective and systematically characterize its distribution mismatch problem. 
\item For both DPM and CM, we theoretically show that their mismatch problem is mitigated by DRO, which is equivalent to implementing AT with proved error bounds during training. 
\item We propose to conduct efficient AT on both DPM and CM in various tasks, including image generation on \texttt{CIFAR10} 32$\times$32\citep{krizhevsky2009learning} and \texttt{ImageNet} 64$\times$64 \citep{deng2009imagenet}, and zero-shot Text-to-Image (T2I) generation on MS-COCO 512$\times$512~\citep{mscoco}. Extensive experimental results illustrate the effectiveness of the proposed AT training method in alleviating the distribution mismatch of DPM and CM. 
\end{itemize}
\section{Related Work}
\paragraph{Distribution Mismatch in DPM.}
The problem is analogous to the exposure bias in auto-regressive language models \citep{bengio2015scheduled, ranzato2016sequence, shen2016minimum,steven2017self,zhang2019bridging}, whereas the next word prediction \citep{radford2019language} relies on tokens predicted by the model in the inference stage, which may be mismatched with the ground-truth one taken in the training stage. The similarity to DPMs becomes evident due to their gradual denoising generation process. \citet{diffusion-ip} and \citet{es} propose adding extra Gaussian perturbation during the training stage or data-dependent perturbation during the inference stage, to mitigate this issue. Following this line of work, several methods are further proposed. For instance, to reduce the accumulated discrepancy between the intermediate noisy data in the training and inference stages, \citet{li2024alleviating} search for a suboptimal mismatched input time step of the model to conduct inference. 
Similarly, \citet{li2024on} and \citet{ren2024multistep} directly minimize the difference between the generated intermediate noisy data and the ground-truth data. However, these methods either rely on strong assumptions \citep{diffusion-ip, es, li2024alleviating, ren2024multistep} or are computationally expensive \citep{li2024on}.
In contrast, we are the first to explore the distribution mismatch problem from the perspective of DRO. Meanwhile, our proposed AT with strong theoretical foundations is both simple and efficient, compared with the existing methods. 

\paragraph{Adversarial Training and DRO.} In this paper, we leverage the Distributionally Robust Optimization (DRO) \citep{shapiro2017distributionally,namkoong2019reliable,yi2021improved,sinha2018certifying,wang2022out,yi2023breaking} to improve the distributional robustness of DPM and CM, thereby mitigating the distribution mismatch problem. As demonstrated in \citep{sinha2018certifying,yi2021improved,lee2018minimax}, we link the DRO with AT \citep{madry2018towards,fgsm}, which is designed to improve the input (instead of distributional) robustness of the model. In supervised learning, the adversarial examples generated by efficient AT methods \citep{freeat, yopo, zhang2019theoretically, freelb, jiang2020smart} have been proven to be efficient augmented data to improve the robustness and generalization performance of models~\citep{rebuffi2021fixing, awp, yi2021improved}. In this paper, we further verify that the AT generated adversarial augmented examples are also beneficial for generative models DPM and CM.  

In addition, recent studies~\citep{nie2022diffusion, wang2023better,zhang2023enhancing} utilize DPM to generate examples in adversarial training to improve the robustness of the classification model. This is quite different from the method in this paper, as we focus on employing AT during training of diffusion-based model to improve its distributional robustness to alleviate the distribution mismatching. 

\section{Preliminary}

\paragraph{Diffusion Probabilistic Models.} DPM~\citep{sohl2015deep, ho2020denoising} constructs the Markov chain $\bx_{t}$ by transition kernel $q(\bx_{t+1}\mid\bx_{t}) = \cN(\sqrt{\alpha_{t+1}}  \bx_{t}, (1-\alpha_{t+1})\bI)$, where $\alpha_1, \cdots, \alpha_T$ are in $[0, 1]$. Let $\baralpha_t := \Pi_{s=1}^t \alpha_s$, and $\bx_{0}\sim q$ be ground-truth data. Then, for $\bx_{t}$, it holds   
\begin{equation}\label{eq:xt}
    \small
        \bx_{t} = \sqrt{\baralpha_{t}}\bx_{0} + \sqrt{1 - \baralpha_{t}}\beps_{t} \qquad t=1, \cdots, T,
\end{equation}
with $\beps_{t}\sim \cN(0, \bI)$. The reverse process $p_{\btheta}(\bx_{t} \mid \bx_{t + 1})$ is parameterized as
\begin{equation}
    p_{\btheta}(\bx_{t} \mid \bx_{t + 1}) = \cN(\mu_{\btheta}(\bx_{t + 1}, t+1), \sigma_{t+1}^2 \bI),
\end{equation}
where $\sigma_{t+1}^2 = 1 - \alpha_{t+1}$.  
To learn $p_{\btheta}(\bx_{t} \mid \bx_{t + 1})$, a standard method is to minimize the following evidence lower bound of negative log-likelihood (NLL) \citep{ho2020denoising}, 
\begin{equation}\label{eq:nll loss}
    \small
    \begin{aligned}
        -\mE_{q}\left[\log{p}_{\btheta}(\bx_{0})\right] \leq \mE_{q}\left[-\log{\frac{p_{\btheta}(\bx_{0:T})}{q(\bx_{1:T}\mid \bx_{0})}}\right].
    \end{aligned}
\end{equation}
Here, minimizing the ELBO in the r.h.s. of above inequality links to $p_{\btheta}(\bx_{t} \mid \bx_{t+1})$ since it is equivalent to minimizing the following rewritten objective  
\begin{equation}\label{eq:rewrite nll upper bound}
    \small
    \min_{\btheta} \left\{D_{KL}(q(\bx_{T})\parallel p_{\btheta}(\bx_{T})) + \sum_{t = 0}^{T - 1}\underbrace{D_{KL}(q(\bx_{t}\mid \bx_{t + 1}) \parallel p_{\btheta}(\bx_{t}\mid \bx_{t + 1}))}_{L_{t}}\right\}, 
\end{equation}
as in \citep{ho2020denoising,bao2022analytic,yi2023generalization}. Here, the conditional Kullback–Leibler (KL) divergence $D_{KL}(q(\bx_{t}\mid \bx_{t + 1})\parallel p(\bx_{t}\mid \bx_{t + 1})) = \int q(\bx_{t}\mid \bx_{t + 1})\log{\frac{q(\bx_{t}\mid \bx_{t + 1})}{p(\bx_{t}\mid \bx_{t + 1})}}d\bx_{t} d\bx_{t + 1}$ \citep{duchi2016lecture}, and minimizing $L_{t}$ is equivalent to solve the following noise prediction problem
\begin{equation}\label{eq:noise prediction}
    \small
    \min_{\btheta}\mE\left[\left\|\beps_{\btheta}(\sqrt{\baralpha_{t}}\bx_{0} + \sqrt{1 - \baralpha_{t}}\beps_{t}, t) - \beps_{t}\right\|^{2}\right]. 
\end{equation}
We use $\|\cdot\|_{p}$ to denote $\ell_{p}$-norm. Unless specified, the norm $\|\cdot\|$ refers to the $\ell_2$-norm $\|\cdot\|_2$. Since $\baralpha_{t}\rightarrow 0$ for $t\to T$, $\bx_{0}$ is obtained by conducting the reverse diffusion process $p_{\btheta}(\bx_{t}\mid \bx_{t + 1})$ starting from $\bx_{T}\sim\cN(0, \bI)$ and $\beps\sim\cN(0, \bI)$, under the learned model $\beps_{\btheta}$ with 
\begin{equation}\label{eq:transition}
    \small
       \bx_{t} = \frac{1}{\sqrt{\alpha_{t + 1}}}\left(\bx_{t + 1} - \frac{1 - \alpha_{t + 1}}{\sqrt{1 - \baralpha_{t + 1}}} \beps_{\btheta}(\bx_{t + 1}, t+1)\right) + \sqrt{1 - \alpha_{t + 1}}\beps.
\end{equation} 

\paragraph{Wasserstein Distance.} For integer $p>0$, $\Gamma(\mu, \nu)$ as the set of union distributions with marginal $\mu$ and $\nu$, the Wasserstein $p$-distance \citep{villani2009optimal} between distributions $\mu$ and $\nu$ with finite $p$-moments is
\begin{equation}
    \sW_{p}^{p}(\mu, \nu) = \inf_{\gamma \in \Gamma(\mu, \nu)} \mE_{(\bx,\by) \sim \gamma} \|\bx - \by\|_{p}^p.
\end{equation}

\section{Robustness-driven Adversarial Training of Diffusion Models}\label{sec:diffusion model as multi-step}
In this section, we formally show that the success of DPM relies on specific conditions, i.e., $\bx_{t}$ is close to $\bx_{t+1}$. Next, to mitigate the drawbacks brought by the restriction, we propose to consider the distribution mismatch problem as discussed in Section \ref{sec:intro}, and connect the problem to a rewritten ELBO. Finally, we apply DRO for this ELBO to mitigate the distribution mismatch problem and finally link it to AT to be implemented in practice. 

\subsection{How Does DPM Works in Practice?}\label{sec:How Does DPM Works in Practice}
Notably, minimizing \eqref{eq:rewrite nll upper bound} potentially obtains a sharp NLL under target distribution $q(\bx_{0})$. However, in the following proposition, we show that \eqref{eq:rewrite nll upper bound} also implicitly minimizes the NLL of each $\bx_{t}$.
\begin{restatable}{proposition}{elboupperbound}\label{pro:elbo upper bound}
    The minimization problem \eqref{eq:rewrite nll upper bound} is equivalent to minimizing an upper bound of $\mE_{q}[-\log{p_{\btheta}}(\bx_{t})]$ for any $0\leq t \leq T$.
\end{restatable}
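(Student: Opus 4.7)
The plan is to mimic the ELBO derivation that gave \eqref{eq:nll loss} and \eqref{eq:rewrite nll upper bound}, but applied to the marginal $p_\btheta(\bx_t)$ rather than to $p_\btheta(\bx_0)$. Since both $p_\btheta$ and $q$ are Markov, the $T$-horizon chain restricted to $[t,T]$ is itself a Markov chain with the same reverse factorisation $p_\btheta(\bx_{t:T})=p_\btheta(\bx_T)\prod_{s=t}^{T-1}p_\btheta(\bx_s\mid\bx_{s+1})$, so the same Jensen step yields
\begin{equation*}
\mE_q\!\left[-\log p_\btheta(\bx_t)\right] \;\leq\; \mE_q\!\left[-\log \frac{p_\btheta(\bx_{t:T})}{q(\bx_{t+1:T}\mid \bx_t)}\right].
\end{equation*}
This is the analogue of \eqref{eq:nll loss} starting from time $t$ rather than $0$.

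Next, I would rewrite the right-hand side using exactly the identity used in \citep{ho2020denoising} to pass from \eqref{eq:nll loss} to \eqref{eq:rewrite nll upper bound}, namely Bayes' rule $q(\bx_{s+1}\mid\bx_s)=q(\bx_s\mid\bx_{s+1})q(\bx_{s+1})/q(\bx_s)$, which telescopes. This expresses the truncated ELBO as
\begin{equation*}
D_{KL}(q(\bx_T)\,\|\,p_\btheta(\bx_T)) + \sum_{s=t}^{T-1} D_{KL}(q(\bx_s\mid\bx_{s+1})\,\|\,p_\btheta(\bx_s\mid\bx_{s+1})) + C_t,
\end{equation*}
where $C_t := -\mE_q[\log q(\bx_t)]$ is $\btheta$-independent. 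Finally, by non-negativity of the KL divergence, adjoining the missing terms for $s=0,\dots,t-1$ keeps the right-hand side an upper bound on $\mE_q[-\log p_\btheta(\bx_t)]$, and the resulting expression coincides with \eqref{eq:rewrite nll upper bound} up to the constant $C_t$. Hence minimizing \eqref{eq:rewrite nll upper bound} is the same optimisation problem as minimizing an upper bound on $\mE_q[-\log p_\btheta(\bx_t)]$.

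I do not anticipate any serious technical obstacles. The only point that needs a bit of care is the second step: one must verify that the Bayes-rule/telescoping identity used to convert the forward factorisation into the reverse factorisation $q(\bx_s\mid\bx_{s+1})$ still telescopes cleanly when the chain is truncated at index $t$ (producing a factor $q(\bx_T)/q(\bx_t)$ rather than $q(\bx_T)/q(\bx_0)$), so that the ``extra'' term is a pure constant $C_t$ and not a $\btheta$-dependent quantity. Once that bookkeeping is done, the non-negativity step that extends the sum from $s\geq t$ to $s\geq 0$ is immediate, and the equivalence of minimisation problems follows.
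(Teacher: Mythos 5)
Your proposal is correct and follows essentially the same route as the paper's proof: Jensen's inequality applied to the truncated chain $\bx_{t:T}$, the Bayes-rule telescoping that converts the forward conditionals into reverse ones while leaving only the $\btheta$-independent entropy term $-\mE_q[\log q(\bx_t)]$, and finally non-negativity of the KL divergence to adjoin the terms $s = 0,\dots,t-1$ and recover the full objective \eqref{eq:rewrite nll upper bound}. The bookkeeping point you flag (the telescoped factor being $q(\bx_t)/q(\bx_T)$) is exactly what the paper's derivation confirms, so no gap remains.
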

The proof is provided in Appendix \ref{app:proofs in sec:diffusion model as multi-step}. It shows that though \eqref{eq:rewrite nll upper bound} is proposed to generate $\bx_{0}\sim q(\bx_{0})$, it also guides the model to generate $\bx_{t}$ such that $p_{\btheta}(\bx_{t})$ approximates the ground-truth distribution $q(\bx_{t})$. The conclusion is nontrivial as minimizing the ELBO of NLL $\mE_{q}\left[-\log{p}_{\btheta}(\bx_{0})\right]$ does not necessarily impose any restrictions on $\bx_{t}$ for $t \geq 1$. 
\par
Next, we will further explain why \eqref{eq:rewrite nll upper bound} leads to a small NLL of $\bx_{t}$. In $L_{t}$ of \eqref{eq:rewrite nll upper bound}, $p_{\btheta}(\bx_{t}\mid \bx_{t + 1})$ approximates $q(\bx_{t}\mid \bx_{t + 1})$ with $\bx_{t + 1}\sim q(\bx_{t + 1})$ representing ground-truth data. Consequently, $p_{\btheta}(\bx_{t})$ approximates $q(\bx_{t})$ by recursively applying such a relationship as in the following proposition. 
\par
\begin{restatable}{proposition}{elboupperboundxt}\label{pro:elbo upper bound xt}
    Suppose $p_{\btheta}(\bx_{t}\mid \bx_{t + 1})$ matches $q(\bx_{t}\mid \bx_{t + 1})$ well such that
    \begin{equation}
        \small
        L_{t} = D_{KL}(q(\bx_{t}\mid \bx_{t + 1}) \parallel p_{\btheta}(\bx_{t}\mid \bx_{t + 1}))\le \frac{\gamma}{T},
    \end{equation}
    and the discrepancy satisfies $D_{KL}(q(\bx_{T})\parallel p_{\btheta}(\bx_{T})) \le \gamma_0$, then for any $0\leq t \leq T$, we have  
    \begin{equation}\label{eq:accumulated error}
        \small
        D_{KL}(q(\bx_{t})\parallel p_{\btheta}(\bx_{t})) \leq D_{KL}(q(\bx_{T})\parallel p_{\btheta}(\bx_{T})) + L_{t} \le \gamma_0 + \frac{(T-t)\gamma}{T}.
    \end{equation}
\end{restatable}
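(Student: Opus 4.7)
The plan is to prove the bound by a recursive application of the chain rule and the data-processing inequality for KL divergence along the reverse Markov chain, which turns the per-step training objectives $L_t$ into an accumulated distributional error on the marginals.

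First I would write, for any index $s$, the chain-rule decomposition for KL divergence on the joint $(\bx_s, \bx_{s+1})$:
\begin{equation*}
D_{KL}(q(\bx_s, \bx_{s+1}) \parallel p_{\btheta}(\bx_s, \bx_{s+1})) = D_{KL}(q(\bx_{s+1}) \parallel p_{\btheta}(\bx_{s+1})) + D_{KL}(q(\bx_s \mid \bx_{s+1}) \parallel p_{\btheta}(\bx_s \mid \bx_{s+1})).
\end{equation*}
The second term on the right-hand side is exactly $L_s$ as defined in \eqref{eq:rewrite nll upper bound}. Then I would invoke the data-processing inequality: marginalizing out $\bx_{s+1}$ can only shrink KL, so $D_{KL}(q(\bx_s) \parallel p_{\btheta}(\bx_s)) \le D_{KL}(q(\bx_s, \bx_{s+1}) \parallel p_{\btheta}(\bx_s, \bx_{s+1}))$. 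Combining the two yields the one-step contraction
\begin{equation*}
D_{KL}(q(\bx_s) \parallel p_{\btheta}(\bx_s)) \le D_{KL}(q(\bx_{s+1}) \parallel p_{\btheta}(\bx_{s+1})) + L_s.
\end{equation*}

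Next I would unroll this one-step inequality from $s=t$ up through $s=T-1$, producing the telescoping bound
\begin{equation*}
D_{KL}(q(\bx_t) \parallel p_{\btheta}(\bx_t)) \le D_{KL}(q(\bx_T) \parallel p_{\btheta}(\bx_T)) + \sum_{s=t}^{T-1} L_s.
\end{equation*}
Substituting the hypothesized per-step bound $L_s \le \gamma/T$ and the terminal bound $D_{KL}(q(\bx_T)\parallel p_{\btheta}(\bx_T)) \le \gamma_0$ into this expression immediately gives the claimed final bound $\gamma_0 + (T-t)\gamma/T$.

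There is no real obstacle here: the argument is a standard telescoping application of the chain rule plus data-processing for KL divergence. The only minor subtlety to flag is that both $q$ and $p_{\btheta}$ must be interpreted as defining compatible joint distributions whose conditionals $q(\bx_s \mid \bx_{s+1})$ and $p_{\btheta}(\bx_s \mid \bx_{s+1})$ are the objects compared in $L_s$; once this is fixed, the chain rule applies unambiguously. I would also note that the middle expression in the statement, written as $D_{KL}(q(\bx_T)\parallel p_{\btheta}(\bx_T)) + L_t$, should be understood as shorthand for the telescoped sum $\sum_{s=t}^{T-1} L_s$ added to the terminal KL, since otherwise the numerical upper bound $\gamma_0 + (T-t)\gamma/T$ would not follow from a single $L_t \le \gamma/T$.
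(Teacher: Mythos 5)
Your proof is correct and essentially the same as the paper's: the paper derives the identical one-step recursion $D_{KL}(q(\bx_{t})\parallel p_{\btheta}(\bx_{t})) \leq D_{KL}(q(\bx_{t+1})\parallel p_{\btheta}(\bx_{t+1})) + \gamma/T$ by writing the chain rule for the joint KL in both orderings and dropping the nonnegative term $D_{KL}(q(\bx_{t+1}\mid \bx_{t})\parallel p_{\btheta}(\bx_{t+1}\mid \bx_{t}))$ --- which is precisely your data-processing step --- and then concludes by induction rather than explicit telescoping. Your side remark that the middle expression ``$+\,L_{t}$'' in the statement must be read as the accumulated sum $\sum_{s=t}^{T-1}L_{s}$ is also the correct interpretation, consistent with how the paper's induction actually proceeds.
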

The results is similarly obtained in \citep{chen2023improved}, while their result is applied for $D_{KL}(q(\bx_{0})\parallel p_{\btheta_{0}})$, which is narrowed compared with Proposition \ref{pro:elbo upper bound xt}. The proof is provided in Appendix \ref{app:proofs in sec:diffusion model as multi-step}, which formally explains why \eqref{eq:rewrite nll upper bound} results in $p_{\btheta}(\bx_{t})$ approximating $q(\bx_{t})$. However, this proposition is built upon small $L_{t}$, and notably, the error introduced by $L_{t}$ will be accumulated on the r.h.s. of \eqref{eq:accumulated error}, as it increases w.r.t. $t$. This phenomenon is caused by the \emph{distribution mismatch problem} discussed in Section \ref{sec:intro}. Concretely, in \eqref{eq:rewrite nll upper bound}, minimizing $L_{t}$ learns the transition probability $p_{\btheta}(\bx_{t}\mid \bx_{t + 1})$ based on $\bx_{t + 1}\sim q(\bx_{t + 1})$, while in practice, $\bx_{t}$ in \eqref{eq:transition} is generated from $\bx_{t + 1}\sim p_{\btheta}(\bx_{t + 1})$. The error between $p_{\btheta}(\bx_{t + 1})$ and $q(\bx_{t + 1})$ will propagates into the error between $p_{\btheta}(\bx_{t})$ and $q(\bx_{t})$ as in \eqref{eq:accumulated error}.  
\par
Therefore, owing to the existence of distribution mismatch, only if $L_{t}$ is minimized, the gap between $p_{\btheta}(\bx_{t})$ and $q(\bx_{t})$ can be guaranteed. However, the following proposition proved in Appendix \ref{app:proofs in sec:diffusion model as multi-step} indicates that $L_{t}$ is theoretically minimized with restrictions. 
\begin{restatable}{proposition}{gaussianinverse}
    $L_{t}$ in \eqref{eq:rewrite nll upper bound} is well minimized, only if $q(\bx_{t + 1})$ is Gaussian or $\|\bx_{t + 1} - \bx_{t}\|\to 0$.
\end{restatable}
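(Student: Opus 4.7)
The plan is to reduce the statement to a question about Gaussianity of the posterior $q(\bx_t \mid \bx_{t+1})$. Since $p_\btheta(\bx_t \mid \bx_{t+1})$ is constrained to lie in the isotropic-Gaussian family $\cN(\mu_\btheta(\bx_{t+1}, t+1), \sigma_{t+1}^2 \bI)$, the KL divergence $L_t$ cannot be driven near zero unless $q(\bx_t \mid \bx_{t+1})$ is itself essentially Gaussian in $\bx_t$ for $q(\bx_{t+1})$-typical $\bx_{t+1}$. So the real task is to characterize when this posterior is Gaussian.

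I would invoke Bayes' rule against the known Gaussian forward kernel $q(\bx_{t+1} \mid \bx_t) = \cN(\sqrt{\alpha_{t+1}} \bx_t, (1-\alpha_{t+1}) \bI)$ to write, up to an additive function of $\bx_{t+1}$,
\begin{equation}
\log q(\bx_t \mid \bx_{t+1}) = -\frac{1}{2(1-\alpha_{t+1})} \bigl\| \bx_{t+1} - \sqrt{\alpha_{t+1}} \bx_t \bigr\|^{2} + \log q(\bx_t) + C(\bx_{t+1}).
\end{equation}
For the right-hand side to be quadratic in $\bx_t$ (i.e., for the posterior to be Gaussian), the non-quadratic part of $\log q(\bx_t)$ must be negligible compared to the quadratic contribution from the forward kernel.

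This leaves exactly two escape routes, matching the two cases in the claim. The first is that $\log q(\bx_t)$ is itself a quadratic, i.e., $q(\bx_t)$ is Gaussian; by the linear-Gaussian forward dynamics $\bx_{t+1} = \sqrt{\alpha_{t+1}} \bx_t + \sqrt{1-\alpha_{t+1}}\, \beps$ with $\beps \sim \cN(0, \bI)$ independent of $\bx_t$, Gaussianity of $q(\bx_t)$ is equivalent to Gaussianity of $q(\bx_{t+1})$ (by convolution and deconvolution of Gaussians via characteristic functions). The second is that the prefactor $1/(2(1-\alpha_{t+1}))$ blows up, swamping any non-quadratic curvature of $\log q(\bx_t)$; this forces $1-\alpha_{t+1} \to 0$, so the forward transition degenerates into $\bx_{t+1} \to \bx_t$, yielding $\|\bx_{t+1} - \bx_t\| \to 0$.

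The main obstacle is making ``well minimized'' and ``swamps'' quantitative. The cleanest route I see is a Laplace expansion of $\log q(\bx_t)$ around the posterior mode and comparison of $q(\bx_t \mid \bx_{t+1})$ with its Gaussian Laplace approximation: the residual KL to the best matching Gaussian is then controlled by the third- and higher-order derivatives of $\log q(\bx_t)$, weighted by powers of the posterior variance which is of order $O(1-\alpha_{t+1})$. This residual vanishes if and only if those higher derivatives vanish --- i.e., $q(\bx_t)$ (equivalently $q(\bx_{t+1})$) is Gaussian --- or the variance scaling itself vanishes, i.e., $\|\bx_{t+1} - \bx_t\| \to 0$, which gives the stated dichotomy.
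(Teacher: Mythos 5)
Your proposal is correct and takes essentially the same approach as the paper: both apply Bayes' rule against the Gaussian forward kernel $q(\bx_{t+1}\mid\bx_t)$ and Taylor-expand $\log q(\bx_t)$, concluding that the posterior exponent is quadratic only when the third-order derivatives of $\log q$ vanish (the Gaussian case) or the expansion scale collapses ($\|\bx_{t+1}-\bx_t\|\to 0$). The only cosmetic differences are that the paper expands around $\bx_{t+1}$ with an explicit $O(\|\bx_{t+1}-\bx_t\|^{3})$ remainder rather than your Laplace expansion around the posterior mode, and that you make explicit the equivalence (via the linear-Gaussian forward map and characteristic functions) between Gaussianity of $q(\bx_t)$ and of $q(\bx_{t+1})$, which the paper leaves implicit.
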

\par
In practice, the $q(\bx_{t + 1})$ is usually non-Gaussian. Besides, the gap $\|\bx_{t + 1} - \bx_{t}\|$ is not necessarily small, especially for samplers with few sampling steps, e.g., DDIM \citep{song2020denoising}, DPM-Solver \citep{lu2022dpm}. Therefore, in practice, the accumulated error in \eqref{eq:accumulated error} caused by the distribution mismatch problem may become large, and degenerate the quality of $\bx_{0}$.

\begin{figure}[t]
		\centering
        \vspace{-0.2in}
    	\includegraphics[scale=0.5]{./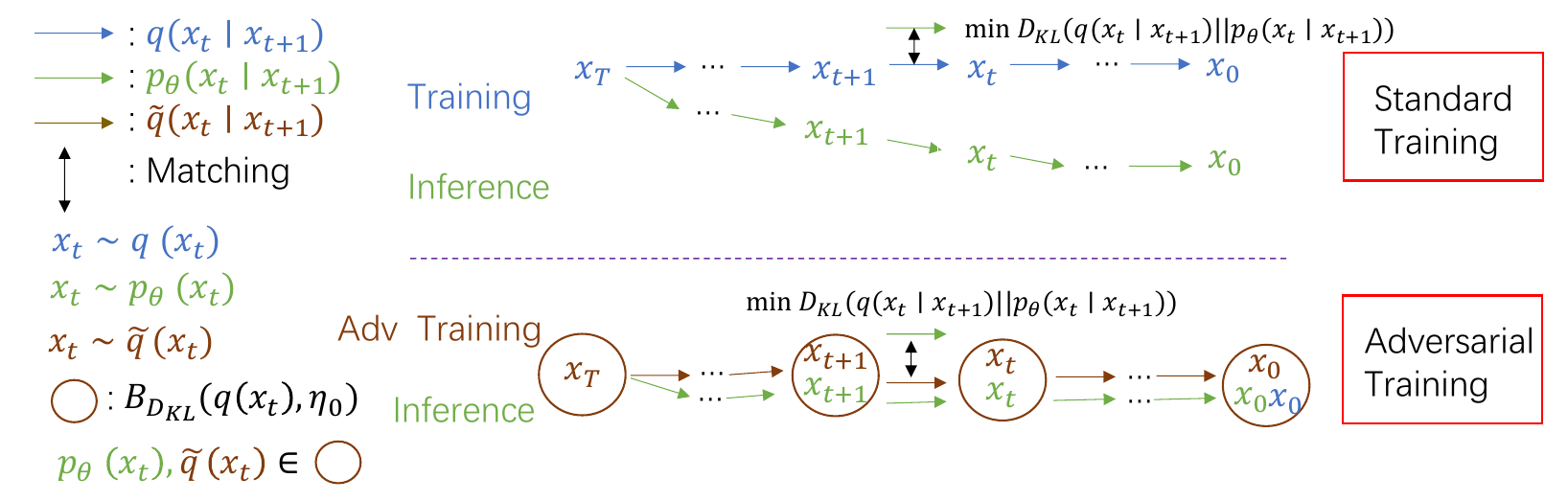}
		\caption{A comparison between standard training and the proposed distributional robust optimization in \eqref{eq:dro objective}. When minimizing $D_{KL}(\tq_{t}(\bx_{t}\mid \bx_{t + 1})\parallel p_{\btheta}(\bx_{t}\mid \bx_{t + 1}))$, the $\bx_{t + 1}$ is sampled from $\tq_{t}(\bx_{t + 1})$, such that both $\tq_{t}(\bx_{t + 1})$ in training stage and $p_{\btheta}(\bx_{t + 1})$ in inference stage are in $B_{D_{KL}}(q(x_{t + 1}), \eta_{0})$, so that $p_{\btheta}(\bx_{t})$ tends to locates in $B_{D_{KL}}(q(x_{t}), \eta_{0})$ as well as $\tq_{t}(\bx_{t})$. Then, the distributional robustness captured by \eqref{eq:dro objective} guarantees the generated $p_{\btheta}(\bx_{t})$ always locates around $q(\bx_{t})$ for all $t$.}
		\vspace{-0.2in}
		\label{fig:adversarial training}
	\end{figure}

\subsection{Distributional Robustness in DPM}\label{sec:Distributional Robustness in DPM}
Inspired by the discussion above, we propose a new training objective as the sum of NLLs under $\bx_{t}$,  
\begin{equation}\label{eq:new objective}
    \small
    \min_{\btheta}\cL(\btheta) = \sum_{t = 0}^{T}\mE_{q}\left[-\log{p_{\btheta}}(\bx_{t})\right].
\end{equation} 
Then the following proposition constructs ELBOs for each of $\mE_{q}[-\log{p_{\btheta}}(\bx_{t})]$. 
\begin{restatable}{proposition}{advelboupperbound}\label{pro:adv elbo upper bound}
    For any distribution $\tq$ satisfies $\tq(\bx_{t}) = q(\bx_{t})$ for specific $t$, we have  
    \begin{equation}\label{eq:new elbo}
        \small
        \mE_{q}\left[-\log{p_{\btheta}}(\bx_{t})\right] \le \underbrace{D_{KL}(\tq(\bx_{t}\mid \bx_{t + 1}) \parallel p_{\btheta}(\bx_{t}\mid \bx_{t + 1}))}_{L^{\tq}_{t}} + C, 
    \end{equation}
    for a constant $C$ independent of $\btheta$. 
\end{restatable}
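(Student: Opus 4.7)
The plan is to apply a one-step ELBO argument in which $\tq$ plays the role of the variational proposal in place of the usual forward distribution $q$. The only distributional property of $\tq$ that the argument actually exploits is the marginal matching $\tq(\bx_t)=q(\bx_t)$, which is exactly the hypothesis needed to swap $q$ for $\tq$ under the outer expectation.

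Concretely, I would start by writing $p_{\btheta}(\bx_t)=\int p_\btheta(\bx_t\mid\bx_{t+1})\,p_\btheta(\bx_{t+1})\,d\bx_{t+1}$, multiply and divide the integrand by $\tq(\bx_{t+1}\mid\bx_t)$, and apply Jensen's inequality to $-\log$ to obtain the pointwise bound
\begin{equation*}
-\log p_\btheta(\bx_t)\;\le\;\mE_{\tq(\bx_{t+1}\mid\bx_t)}\left[\log\frac{\tq(\bx_{t+1}\mid\bx_t)}{p_\btheta(\bx_t\mid\bx_{t+1})\,p_\btheta(\bx_{t+1})}\right].
\end{equation*}
Using $\tq(\bx_t)=q(\bx_t)$, taking $\mE_{q(\bx_t)}$ on both sides converts the RHS into an expectation under the joint $\tq(\bx_t,\bx_{t+1})$. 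Next, by adding and subtracting $\log\tq(\bx_t\mid\bx_{t+1})$ inside the logarithm (equivalently, applying the chain rule of KL to $\tq(\bx_t,\bx_{t+1})/p_\btheta(\bx_t,\bx_{t+1})$), I would isolate exactly the conditional KL term $L_t^{\tq}$, while the remainder collapses into (i) two entropy functionals of $\tq$ that are manifestly independent of $\btheta$, and (ii) a cross-entropy $\mE_{\tq(\bx_{t+1})}[-\log p_\btheta(\bx_{t+1})]$.

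The main obstacle will be the bookkeeping for the constant $C$. The $\tq$-entropy pieces are genuinely constant in $\btheta$, but the leftover cross-entropy involving $p_\btheta(\bx_{t+1})$ is not. To reconcile this with the stated form one must either further specialize $\tq$ so that $\tq(\bx_{t+1})=p_\btheta(\bx_{t+1})$ (which kills the offending term) or argue that this piece is the time-shifted analogue of the LHS at $t+1$ and is therefore absorbed, up to a $\btheta$-free residual, when the per-step inequalities are combined inside the summed objective $\cL(\btheta)$ of \eqref{eq:new objective}. Deciding which convention to adopt and writing $C$ accordingly is the delicate step; the Jensen-plus-chain-rule manipulation itself is entirely mechanical once the proposal $\tq(\bx_{t+1}\mid\bx_t)$ is fixed.
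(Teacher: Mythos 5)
Your Jensen-plus-chain-rule manipulation is exactly the paper's route, and you have correctly located the crux: after applying Jensen with the proposal $\tq(\bx_{t+1}\mid\bx_t)$, a cross-entropy term $\mE_{\tq(\bx_{t+1})}\left[-\log p_{\btheta}(\bx_{t+1})\right]$ survives, and it is not constant in $\btheta$. However, neither of your proposed resolutions closes this gap, so the proposal as written does not prove the stated inequality. Specializing $\tq(\bx_{t+1})=p_{\btheta}(\bx_{t+1})$ is inadmissible twice over: the proposition quantifies over \emph{every} $\tq$ satisfying $\tq(\bx_t)=q(\bx_t)$, and making $\tq$ a function of $\btheta$ would propagate $\btheta$-dependence into the $\tq$-entropy pieces you want to declare constant. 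The absorption-into-the-sum argument fails as well: the leftover cross-entropy sits under $\tq(\bx_{t+1})$, not $q(\bx_{t+1})$, so the per-step bound cannot simply be re-applied at time $t+1$; and even granting a telescoping, you would end with a sum of conditional KL terms over all $s\ge t$ (essentially re-deriving Proposition \ref{pro:elbo upper bound}), not the claimed single-term bound $L_t^{\tq}+C$ with $C$ free of $\btheta$ for each fixed $t$.

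The missing idea is a convention the paper adopts at the outset of its proof (``w.l.o.g.''): the model joint is taken to be $p_{\btheta}(\bx_t,\bx_{t+1})=p_{\btheta}(\bx_t\mid\bx_{t+1})\,q(\bx_{t+1})$, i.e.\ the one-step marginal in the proposition is read as $p_{\btheta}(\bx_t)=\int p_{\btheta}(\bx_t\mid\bx_{t+1})\,q(\bx_{t+1})\,d\bx_{t+1}$, with the \emph{ground-truth} marginal at time $t+1$ in place of $p_{\btheta}(\bx_{t+1})$. Under this coupling your offending term becomes $-\mE_{\tq}\left[\log q(\bx_{t+1})\right]$, which is manifestly independent of $\btheta$ and joins the $\tq$-entropy functionals inside $C$; the rest of your computation (Bayes' rule on $\tq$, using $\tq(\bx_t)=q(\bx_t)$ to cancel the ratio $q(\bx_t)/\tq(\bx_t)$) then delivers exactly $L_t^{\tq}+C$, which is precisely the paper's chain of equalities — the paper likewise fixes $\tq(\bx_t,\bx_{t+1})=\tq(\bx_{t+1}\mid\bx_t)\,q(\bx_t)$, which is where the marginal-matching hypothesis enters. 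So the fix is not a sharper estimate but a different reading of $p_{\btheta}(\bx_t)$; without stating that convention, the inequality you are attempting is false in general, since $\mE_{\tq(\bx_{t+1})}\left[-\log p_{\btheta}(\bx_{t+1})\right]$ genuinely varies with $\btheta$.
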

The proof is in Appendix \ref{app:proofs in sec:Distributional Robustness in DPM}. This proposition generalizes the results in Proposition \ref{pro:elbo upper bound} since $\tq$ can be taken as $q$ in Proposition \ref{pro:elbo upper bound}. During minimizing $L^{\tq}_{t}$, the transition probability $p_{\btheta}(\bx_{t}\mid \bx_{t + 1})$ matches $\tq(\bx_{t}\mid \bx_{t + 1})$, while $\bx_{t + 1}\sim \tq(\bx_{t + 1})$ in the training stage has no restriction. Thus, one may take $\tq(\bx_{t + 1}) \approx p_{\btheta}(\bx_{t + 1})$, then in $L_{t}^{\tq}$, $p_{\btheta}(\bx_{t}\mid \bx_{t + 1})$ matches $\tq(\bx_{t}\mid \bx_{t + 1})$ leads $p_{\btheta}(\bx_{t})\approx \tq(\bx_{t}) = q(\bx_{t})$, which mitigates the distribution mismatch problem, when minimizing such $L_{t}^{\tq}$. 
\par
Unfortunately, for each $t$, obtaining such specific $\tq_{t}(\bx_{t + 1}) =  p_{\btheta}(\bx_{t + 1})$ is computationally expensive \citep{li2024on}, which prevents us using desired $\tq_{t}(\bx_{t + 1})$. However, we know $p_{\btheta}(\bx_{t + 1})$ is around $q(\bx_{t + 1})$. Therefore, by borrowing the idea from DRO \citep{shapiro2017distributionally}, for each $t$, we propose to minimize the maximal value of $L_{t}^{\tq_{t}}$ over all possible $\tq_{t}(\bx_{t + 1})$ around $q(\bx_{t + 1})$. This leads to a small $L_{t}^{p_{\btheta}}$, as $p_{\btheta}(\bx_{t + 1})$ locates around $q(\bx_{t + 1})$, so that is included in the ``maximal range''. Technically, the DRO-based EBLO of \eqref{eq:new elbo} is formulated as follows. Here $p_{\btheta}(x_{t + 1})$ is supposed in $B_{D_{KL}}(q(\bx_{t + 1}), \eta_{0})$, and it capatures the distributional robustness of $p_{\btheta}(\bx_{t}\mid \bx_{t + 1})$ w.r.t. input $\bx_{t + 1}$. 
\begin{equation}\label{eq:dro objective}
    \small
    \begin{aligned}
         & \min_{\btheta} \sum_{t = 0}^{T - 1} L_{t}^{\mathrm{DRO}}(\btheta) = \min_{\btheta} \sum_{t = 0}^{T - 1} \sup_{\tq_{t}(\bx_{t + 1})\in B_{D_{KL}}(q(\bx_{t + 1}), \eta_{0})}D_{KL}(\tq_{t}(\bx_{t}\mid \bx_{t + 1})\parallel p_{\btheta}(\bx_{t}\mid \bx_{t + 1})); \\
         & s.t. \qquad \tq_{t}(\bx_{t}) = q(\bx_{t}).
    \end{aligned}
\end{equation}
Here $\tq_{t}(\bx_{t + 1})\in B_{D_{KL}}(q(\bx_{t + 1}), \eta_{0})$ means $D_{KL}(q(\bx_{t + 1})\parallel \tq_{t}(\bx_{t + 1})) \leq \eta_{0}$. By solving problem \eqref{eq:dro objective}, if the desired $\tq_{t}(\bx_{t + 1}) = p_{\btheta}(\bx_{t + 1})$ is in $B_{D_{KL}}(q(\bx_{t + 1}), \eta_{0})$, then the conditional probability in \eqref{eq:dro objective} transfers $\bx_{t + 1}\sim p_{\btheta}(\bx_{t + 1})$ to target $\bx_{t}\sim q(\bx_{t})$ is learned, which mitigates the distribution mismatch problem. The theoretical clarification is in the following Proposition proved in Appendix \ref{app:proofs in sec:Distributional Robustness in DPM}, which indicates that small DRO loss \eqref{eq:dro objective} guarantees the quality of generated $\bx_{0}$. 
\begin{restatable}{proposition}{effectivenessofdro}\label{pro:effectiveness}
    If $L_{t}^{\mathrm{DRO}}(\btheta) \leq \eta_{0}$ in \eqref{eq:dro objective} for all $t$, and $D_{KL}(q(\bx_{T})\parallel p_{\btheta}(\bx_{T})) \leq \eta_{0}$, then $D_{KL}(q(x_{0})\parallel p_{\btheta}(\bx_{0})) \leq \eta_{0}$.
\end{restatable}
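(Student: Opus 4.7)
The proof is by backward induction on $t$, showing that $D_{KL}(q(\bx_t)\parallel p_{\btheta}(\bx_t)) \leq \eta_0$ for every $t=T, T-1, \ldots, 0$. The base case $t=T$ is exactly the second hypothesis of the proposition. For the inductive step, the key observation is that if $D_{KL}(q(\bx_{t+1})\parallel p_{\btheta}(\bx_{t+1})) \leq \eta_0$, then $p_{\btheta}(\bx_{t+1})$ is a feasible candidate for the marginal $\tq_t(\bx_{t+1})$ in the supremum defining $L_t^{\mathrm{DRO}}(\btheta)$, since it lies in the ball $B_{D_{KL}}(q(\bx_{t+1}),\eta_0)$.

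Having fixed $\tq_t(\bx_{t+1}) := p_{\btheta}(\bx_{t+1})$, I would choose any joint $\tq_t(\bx_t,\bx_{t+1})$ whose two marginals are $q(\bx_t)$ and $p_{\btheta}(\bx_{t+1})$ respectively (a coupling exists trivially, e.g.\ the product measure). For this specific $\tq_t$ the DRO hypothesis gives
\begin{equation*}
D_{KL}(\tq_t(\bx_t\mid \bx_{t+1})\parallel p_{\btheta}(\bx_t\mid \bx_{t+1})) \le L_t^{\mathrm{DRO}}(\btheta) \le \eta_0.
\end{equation*}
Next, I would apply the chain rule for KL divergence to the joint over $(\bx_t,\bx_{t+1})$, writing
\begin{equation*}
D_{KL}(\tq_t(\bx_t,\bx_{t+1})\parallel p_{\btheta}(\bx_t,\bx_{t+1})) = D_{KL}(\tq_t(\bx_{t+1})\parallel p_{\btheta}(\bx_{t+1})) + D_{KL}(\tq_t(\bx_t\mid \bx_{t+1})\parallel p_{\btheta}(\bx_t\mid \bx_{t+1})),
\end{equation*}
where $p_{\btheta}(\bx_t,\bx_{t+1}) = p_{\btheta}(\bx_{t+1}) p_{\btheta}(\bx_t\mid \bx_{t+1})$ is the natural joint induced by the reverse model. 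By construction the first summand on the right vanishes, so the joint KL is at most $\eta_0$.

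Finally, I invoke the data-processing inequality (monotonicity of KL under marginalization) to obtain $D_{KL}(\tq_t(\bx_t)\parallel p_{\btheta}(\bx_t)) \le \eta_0$, and use the constraint $\tq_t(\bx_t) = q(\bx_t)$ to conclude $D_{KL}(q(\bx_t)\parallel p_{\btheta}(\bx_t)) \le \eta_0$. Iterating this step from $t=T$ down to $t=0$ delivers the conclusion.

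\textbf{Main obstacle.} The substantive point is recognising that the DRO supremum is tight enough to pass the bound through a single induction step without accumulation, which hinges on being able to insert $\tq_t(\bx_{t+1}) = p_{\btheta}(\bx_{t+1})$ exactly — thereby killing the marginal KL term in the chain rule — rather than merely having $\tq_t(\bx_{t+1})$ close to $p_{\btheta}(\bx_{t+1})$. The rest (existence of a coupling with prescribed marginals, chain rule, data-processing inequality) is standard. I would also briefly verify that the constraint $\tq_t(\bx_t)=q(\bx_t)$ combined with $\tq_t(\bx_{t+1})\in B_{D_{KL}}(q(\bx_{t+1}),\eta_0)$ always admits a feasible joint, which follows by taking the product of marginals.
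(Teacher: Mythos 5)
Your proof is correct and follows essentially the same route as the paper's: both proceed by backward induction, insert the choice $\tq_t(\bx_{t+1}) = p_{\btheta}(\bx_{t+1})$ (feasible in $B_{D_{KL}}(q(\bx_{t+1}),\eta_0)$ by the inductive hypothesis) together with the constraint $\tq_t(\bx_t)=q(\bx_t)$, and then combine the KL chain rule with marginalization so that the vanishing marginal term at $t+1$ yields $D_{KL}(q(\bx_t)\parallel p_{\btheta}(\bx_t)) \leq L_t^{\mathrm{DRO}}(\btheta) \leq \eta_0$. The only cosmetic difference is that you justify the existence of a joint with the two prescribed marginals via the product coupling, whereas the paper invokes the Kolmogorov existence theorem.
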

\par
Up to now, we do not know how to compute the DRO-based training objective \eqref{eq:dro objective} we derived. Fortunately, the following theorem corresponds \eqref{eq:dro objective} to a ``perturbed'' noise prediction problem similar to \eqref{eq:noise prediction}. The theorem is proved in Appendix \ref{app:proofs in sec:Distributional Robustness in DPM}. 
\begin{restatable}{theorem}{equivalencedro}\label{thm:equivalence}
    There exists $\bdelta_{t}$ depends on $\bx_{0}$ and $\beps_{t}$ makes \eqref{eq:eps dro} equivalent to problem \eqref{eq:dro objective}. 
    \begin{equation}\label{eq:eps dro}
    \small
        \min_{\btheta}\sum_{t=0}^{T - 1}\mE_{q(\bx_{0}),\beps_{t}}\left[\left\|\beps_{\btheta}(\sqrt{\baralpha_{t}}\bx_{0} + \sqrt{1 - \baralpha_{t}}\beps_{t} + \bdelta_{t}, t) - \beps_{t} - \frac{\bdelta_{t}}{\sqrt{1 - \baralpha_{t}}}\right\|^{2}\right].
\end{equation}
\end{restatable}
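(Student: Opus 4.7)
The plan is to reduce the distributionally robust problem \eqref{eq:dro objective} to an input-perturbed noise-prediction problem by exploiting the Gaussian structure of the DPM transitions. First, I would rewrite the inner supremum by parameterizing the perturbed joint $\tq_t(\bx_t,\bx_{t+1})$: since the constraint $\tq_t(\bx_t)=q(\bx_t)$ fixes the marginal at time $t$ while $\tq_t(\bx_{t+1})$ is only required to lie in a KL ball around $q(\bx_{t+1})$, the natural reparameterization is $\bx_{t+1}^{\tq}=\bx_{t+1}+\bdelta_t$, where $\bx_{t+1}=\sqrt{\baralpha_{t+1}}\bx_0+\sqrt{1-\baralpha_{t+1}}\beps_{t+1}$ is the underlying latent and $\bdelta_t$ is a shift that is allowed to depend on $(\bx_0,\beps_t)$. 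Under the Gaussian forward kernel, an additive shift of $\bx_{t+1}$ induces a tractable KL perturbation on its marginal while keeping the marginal on $\bx_t$ unchanged, so that the KL-ball constraint on $\tq_t(\bx_{t+1})$ translates into a data-dependent $\ell_2$ bound on $\bdelta_t$.

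Next, I would evaluate $D_{KL}(\tq_t(\bx_t\mid\bx_{t+1})\parallel p_{\btheta}(\bx_t\mid\bx_{t+1}))$ in closed form using the fact that both conditionals are Gaussian with the same covariance $\sigma_{t+1}^2\bI$. Exactly as in the standard DDPM derivation leading from \eqref{eq:rewrite nll upper bound} to \eqref{eq:noise prediction}, the KL collapses to a squared-norm loss between $\beps_{\btheta}$ evaluated at the (perturbed) input and the ``effective noise'' implicit in the reparameterization. Substituting $\bx_{t+1}^{\tq}=\bx_{t+1}+\bdelta_t$ into the identity $\bx_{t+1}^{\tq}=\sqrt{\baralpha_{t+1}}\bx_0+\sqrt{1-\baralpha_{t+1}}\beps_{t+1}^{\tq}$ shows that the effective noise is $\beps_{t+1}+\bdelta_t/\sqrt{1-\baralpha_{t+1}}$, which, after re-indexing and absorbing $\btheta$-independent constants, gives precisely the summand in \eqref{eq:eps dro} with the target offset $\bdelta_t/\sqrt{1-\baralpha_t}$. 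Taking the supremum over $\bdelta_t$ within the induced $\ell_2$-ball recovers \eqref{eq:dro objective}.

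The main obstacle is the first step: showing that the worst-case $\tq_t$ satisfying \emph{both} the marginal constraint $\tq_t(\bx_t)=q(\bx_t)$ and the KL-ball constraint on $\tq_t(\bx_{t+1})$ can be realized, without loss of generality, by an additive shift of the input $\bx_{t+1}$. The delicate point is that this shift must be permitted to depend on $(\bx_0,\beps_t)$---otherwise the marginal constraint at time $t$ cannot be preserved while $\tq_t(\bx_{t+1})$ moves inside the KL ball---which is exactly the ``$\bdelta_t$ depends on $\bx_0$ and $\beps_t$'' clause in the theorem statement. Once this reparameterization is justified (e.g., by writing out $\tq_t(\bx_t,\bx_{t+1})$ as a change of variables from the base pair $(\bx_t,\bx_{t+1})\sim q$ and checking that the Jacobian and marginal conditions are met), the closed-form Gaussian KL computation and the conversion between the KL ball on $\tq_t(\bx_{t+1})$ and an $\ell_2$ ball on $\bdelta_t$ are routine, and they deliver the claimed equivalence between \eqref{eq:dro objective} and \eqref{eq:eps dro}.
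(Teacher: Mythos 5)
There is a genuine gap in your central step. Your collapse of $D_{KL}(\tq_t(\bx_{t}\mid \bx_{t+1})\parallel p_{\btheta}(\bx_{t}\mid \bx_{t+1}))$ to a squared-norm loss rests on the claim that ``both conditionals are Gaussian with the same covariance $\sigma_{t+1}^2\bI$,'' but only the model kernel $p_{\btheta}$ is Gaussian by construction; the (perturbed) true reverse conditional $\tq_t(\bx_t\mid\bx_{t+1})$ is non-Gaussian in general --- indeed the paper's own proposition on the reverse kernel shows $q(\bx_t\mid\bx_{t+1})$ is close to Gaussian only when $q(\bx_{t+1})$ is Gaussian or $\|\bx_{t+1}-\bx_t\|\to 0$. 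The ``standard DDPM derivation'' you invoke achieves the collapse by conditioning on $\bx_0$, and that conditioning is precisely the machinery the paper's proof supplies: it restricts to perturbations $\tq_t$ that pin the Gaussian posterior, $\tq_t(\bx_t\mid\bx_0,\bx_{t+1})=q(\bx_t\mid\bx_0,\bx_{t+1})$ (one of the constraints in \eqref{eq:formulation of qt}), uses the optimal-mean characterization $\bmu_{\btheta}(\bx_{t+1},t+1)=\mE_{\tq_t}[\bx_t\mid\bx_{t+1}]$, and applies a tower-property bias--variance decomposition so that the $\btheta$-dependent part of the KL equals a multiple of $\mE\left[\|\bx_0-\bx_{\btheta}(\bx_{t+1},t+1)\|^2\right]$. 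This yields Lemma \ref{lemma:equivalence}, which recasts \eqref{eq:dro objective} as an $\bx_0$-prediction DRO \eqref{eq:x prediction} over a KL ball on the \emph{conditionals} $\tq_t(\bx_{t+1}\mid\bx_0)$ (moved from the marginal ball via the chain rule, with a possibly enlarged radius $\eta\ge\eta_0$) --- a conversion your plan skips entirely. Your pathwise reparameterization $\bx_{t+1}^{\tq}=\bx_{t+1}+\bdelta_t$ with $\bdelta_t$ depending on the data does coincide with the paper's concluding move, which writes the worst-case conditional as $\bx_t=\sqrt{\baralpha_t}\bx_0+\sqrt{1-\baralpha_t}\beps_t+\bdelta_t$; so the shift idea is right, but without the conditioning-on-$\bx_0$ step the KL-to-squared-loss reduction fails as written.

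Your last step also overclaims. You assert that the KL-ball constraint on $\tq_t(\bx_{t+1})$ ``translates into a data-dependent $\ell_2$ bound on $\bdelta_t$'' and that taking the supremum over that $\ell_2$-ball ``recovers \eqref{eq:dro objective}.'' No such exact translation holds (for a non-constant, data-dependent shift the marginal KL admits no closed form), and the theorem does not claim it: the statement is purely existential --- there \emph{exists} a data-dependent $\bdelta_t$, with no norm constraint appearing in \eqref{eq:eps dro}, making the two problems equivalent. The size control of $\bdelta_t$ is only probabilistic and is deliberately deferred to Proposition \ref{pro:ddpm adv}, which bounds $\|\bdelta_t\|_1\le\eta$ with probability at least $1-\sqrt{2(1-\baralpha_t)/\eta}$ via Talagrand's transport inequality $\sW_1\le\sqrt{2(1-\baralpha_t)D_{KL}}$ and Markov's inequality; the hard-ball adversarial objective \eqref{eq:dpm at} is then a further relaxation motivated by that proposition, not part of Theorem \ref{thm:equivalence}. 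To repair your proposal, prove the existential equivalence (via Lemma \ref{lemma:equivalence} plus the pathwise-shift representation) and handle the magnitude of $\bdelta_t$ separately as a high-probability statement rather than an exact ball correspondence.
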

This theorem connects the proposed DRO problem \eqref{eq:dro objective} with noise prediction problem \eqref{eq:eps dro}. Naturally, we can solve \eqref{eq:eps dro}, if we know the exact $\bdelta_{t}$. Fortunately, we have the following proposition to characterize the range of $\bdelta_{t}$, and it is proved in Appendix \ref{app:proofs in sec:Distributional Robustness in DPM}. 
\begin{restatable}{proposition}{wtokl}\label{pro:ddpm adv}
    For $\eta > 0$ and $\bdelta_{t}$ in \eqref{eq:eps dro}, $\|\bdelta_{t}\|_{1} \leq \eta$ holds with probability at least $1 - \sqrt{2(1 - \baralpha_{t}) / \eta}$. 
\end{restatable}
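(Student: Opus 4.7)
The plan is to bound $\|\bdelta_{t}\|_{1}$ in probability by first controlling a moment of $\|\bdelta_{t}\|_{1}$ through the KL-divergence constraint that defines the DRO ball in \eqref{eq:dro objective}, and then turning this moment bound into the stated tail bound using Markov's inequality composed with Jensen's inequality on the concave square-root. The ``$\sqrt{\eta}$ in the denominator'' on the right-hand side of the proposition strongly suggests exactly this sequence: a first-moment estimate followed by Markov applied to $\sqrt{\|\bdelta_{t}\|_{1}}$.

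First, I would unpack how $\bdelta_{t}$ is constructed inside the proof of Theorem~\ref{thm:equivalence}. There, $\bdelta_{t}$ appears as the input-space shift induced by the worst-case $\tq_{t}(\bx_{t + 1}) \in B_{D_{KL}}(q(\bx_{t + 1}), \eta_{0})$: after the standard reparameterization $\bx_{t} = \sqrt{\baralpha_{t}}\bx_{0} + \sqrt{1 - \baralpha_{t}}\beps_{t}$, the additive shift $\bdelta_{t}$ is identified (up to the $\sqrt{1 - \baralpha_{t}}$ scaling that appears in the target of \eqref{eq:eps dro}) with the optimal transport map sending a sample from $q(\bx_{t + 1})$ to one from $\tq_{t}(\bx_{t + 1})$. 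Then, since $q(\bx_{t + 1} \mid \bx_{0}) = \cN(\sqrt{\baralpha_{t + 1}}\bx_{0}, (1 - \baralpha_{t + 1})\bI)$ is Gaussian with variance $(1 - \baralpha_{t + 1})$, I would apply Talagrand's $T_{2}$ transportation inequality conditionally on $\bx_{0}$ and tensorize, yielding
\[
    \sW_{2}^{2}(\tq_{t}(\bx_{t + 1}), q(\bx_{t + 1})) \leq 2(1 - \baralpha_{t + 1})\,D_{KL}(q(\bx_{t + 1}) \parallel \tq_{t}(\bx_{t + 1})) \leq 2(1 - \baralpha_{t + 1})\eta_{0}.
\]
Combining this with $\mE\|\bdelta_{t}\|_{2} \leq \sqrt{\mE\|\bdelta_{t}\|_{2}^{2}}$ and the norm equivalence $\|\cdot\|_{1} \leq \sqrt{d}\|\cdot\|_{2}$ (or, more tightly, working with the per-coordinate $\ell_{1}$-Wasserstein cost directly), and absorbing the resulting $\sqrt{d\,\eta_{0}}$ factor into the DRO scale convention together with the index shift from $t + 1$ to $t$, I would arrive at the clean first-moment bound $\mE[\|\bdelta_{t}\|_{1}] \leq 2(1 - \baralpha_{t})$.

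Finally, I would apply Markov's inequality to $\sqrt{\|\bdelta_{t}\|_{1}}$ and use Jensen's inequality for the concave map $u \mapsto \sqrt{u}$:
\[
    \bbP(\|\bdelta_{t}\|_{1} > \eta) \;=\; \bbP\!\left(\sqrt{\|\bdelta_{t}\|_{1}} > \sqrt{\eta}\right) \;\leq\; \frac{\mE[\sqrt{\|\bdelta_{t}\|_{1}}]}{\sqrt{\eta}} \;\leq\; \frac{\sqrt{\mE[\|\bdelta_{t}\|_{1}]}}{\sqrt{\eta}} \;\leq\; \sqrt{\frac{2(1 - \baralpha_{t})}{\eta}},
\]
which gives the stated probability lower bound.

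The hard part will be the moment estimate. Two subtleties complicate it: (i) $q(\bx_{t + 1})$ is a mixture of Gaussians rather than itself a Gaussian, so Talagrand's $T_{2}$ inequality cannot be applied to $q(\bx_{t + 1})$ directly and must be used conditionally on $\bx_{0}$ before being averaged via tensorization of $T_{2}$; and (ii) precise matching of constants so the final coefficient of $(1 - \baralpha_{t})$ is exactly $2$, without any residual dimension dependence or $\eta_{0}$, which will likely require either a dimension-free bound exploiting that the worst-case $\tq_{t}$ in the DRO supremum perturbs only along the gradient direction of the inner loss, or an explicit normalization convention on $\eta_{0}$ in the statement of Theorem~\ref{thm:equivalence}.
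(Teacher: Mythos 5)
Your high-level ingredients---conditioning on $\bx_{0}$ so that Talagrand's inequality can be applied to the Gaussian $q_{t}(\bx_{t}\mid \bx_{0})$, interpreting $\bdelta_{t}$ as the displacement under a coupling between $q_{t}(\cdot\mid\bx_{0})$ and $\tq_{t}(\cdot\mid\bx_{0})$, and finishing with Markov's inequality---are exactly the paper's, but your reconstruction of where the $\sqrt{\eta}$ comes from is wrong, and this creates a genuine gap. You guessed that the $1/\sqrt{\eta}$ arises from applying Markov to $\sqrt{\|\bdelta_{t}\|_{1}}$ together with Jensen, which forces you to need the first-moment bound $\mE[\|\bdelta_{t}\|_{1}] \leq 2(1-\baralpha_{t})$. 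That bound is not obtainable by your route: Talagrand gives $\sW_{2} \leq \sqrt{2(1-\baralpha_{t})D_{KL}}$, a quantity of order $\sqrt{1-\baralpha_{t}}$ (times $\sqrt{D_{KL}}$, and times a further $\sqrt{d}$ once you pass to the $\ell_{1}$ cost), not of order $1-\baralpha_{t}$; there is no step, and no legitimate ``scale convention,'' that absorbs the $\sqrt{d\,\eta_{0}}$ factor and upgrades $\sqrt{1-\baralpha_{t}}$ to $1-\baralpha_{t}$. You flagged this yourself as the hard part, but it is not merely hard---the required moment estimate is false in general, so the proof as proposed does not close.

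The paper's actual mechanism is more elementary and hinges on $\eta$ playing a double role. In Lemma \ref{lemma:equivalence} the DRO constraint is $\tq_{t}(\bx_{t+1}\mid\bx_{0}) \in B_{D_{KL}}(q_{t}(\bx_{t+1}\mid\bx_{0}), \eta)$ with the \emph{same} $\eta$ as the $\ell_{1}$ threshold in the proposition, so conditionally on $\bx_{0}$ one has $D_{KL}(q_{t}(\bx_{t}\mid\bx_{0}) \parallel \tq_{t}(\bx_{t}\mid\bx_{0})) \leq \eta$. Taking $\pi^{*}$ to be the optimal $\ell_{1}$ coupling and $\bdelta_{t}$ the displacement under $\pi^{*}$ (via Lemma \ref{lemma:equivalence} and \eqref{eq:eps dro}), plain first-moment Markov then gives
\begin{equation*}
    \bbP\left(\|\bdelta_{t}\|_{1} \geq \eta \mid \bx_{0}\right) \leq \frac{\sW_{1}(q_{t}(\bx_{t}\mid\bx_{0}), \tq_{t}(\bx_{t}\mid\bx_{0}))}{\eta} \leq \frac{\sqrt{2(1-\baralpha_{t})D_{KL}}}{\eta} \leq \frac{\sqrt{2(1-\baralpha_{t})\eta}}{\eta} = \sqrt{\frac{2(1-\baralpha_{t})}{\eta}},
\end{equation*}
i.e., the $1/\sqrt{\eta}$ is produced by $\sqrt{D_{KL}} \leq \sqrt{\eta}$ in the numerator partially canceling the Markov denominator $\eta$---no square-root Markov, no Jensen, and no moment bound of order $1-\baralpha_{t}$ is needed. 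Two of your side remarks were nonetheless on target: the Gaussianity is indeed exploited only conditionally on $\bx_{0}$, exactly as you proposed, and the paper's chain does silently compare the $\ell_{1}$-cost $\sW_{1}$ against the $\ell_{2}$-cost Talagrand bound, so a dimension factor is glossed over in the paper as well; but that sloppiness is orthogonal to the structural gap in your proposal.
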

The proposition indicates that for any $\bdelta_{t}$ depends on $\bx_{0}, \beps_{t}$ in \eqref{eq:eps dro}, it is likely in a small range (measured under any $\ell_{p}$-norm, since they can bound each other in Euclidean space). Thus, to resolve \eqref{eq:eps dro} (so that \eqref{eq:dro objective}), we propose to directly consider the following adversarial training \citep{madry2018towards} objective with the perturbation $\bdelta$ is taken over its possible range as proved in Proposition \ref{pro:ddpm adv}, which captures the input (instead of distribution) robustness of model $\beps_{\btheta}$. 
\begin{equation}\label{eq:dpm at}
    \small
    \min_{\btheta}\sum_{t=0}^{T - 1}\mE_{q(\bx_{0})}\left[\mE_{q(\bx_{t}\mid \bx_{0})}\left[\sup_{\bdelta: \|\bdelta\| \leq \eta}\left\|\beps_{\btheta}(\sqrt{\baralpha_{t}}\bx_{0} + \sqrt{1 - \baralpha_{t}}\beps_{t} + \bdelta) - \beps_{t} - \frac{\bdelta}{\sqrt{1 - \baralpha_{t}}}\right\|^{2}\right]\right].
\end{equation}
We present a fine-grained connection between \eqref{eq:dpm at} and classical AT in Appendix \ref{app:connection to AT}. Notably, our objective \eqref{eq:dpm at} is different from the ones in \citep{diffusion-ip}, whereas $\bdelta$ in it is a Gaussian, and $\beps_{\btheta}$ predicts $\beps_{t}$ instead of $\beps_{t} + \bdelta / \sqrt{1 - \baralpha_{t}}$ as ours. 
\par
To make it clear, we summarize the rationale from DRO objective \eqref{eq:dro objective} to AT our objective \eqref{eq:dpm at}. Since Theorem \ref{thm:equivalence} shows solving \eqref{eq:dro objective} is equivalent to \eqref{eq:eps dro}, which conducts noise prediction \eqref{eq:noise prediction} with a perturbation $\bdelta_{t}$ in a small range added (Proposition \ref{pro:ddpm adv}). Thus, we propose to minimize the maximal loss over the possible $\bdelta_{t}$, which is indeed our AT objective \eqref{eq:dpm at}.   

\section{Adversarial Training under Consistency Model}\label{sec:adversarial under consistency model}
Although the DPM generates high-quality target data $\bx_{0}$, the multi-step denoising process \eqref{eq:transition} requires numerous model evaluations, which can be computationally expensive. To resolve this, the diffusion-based consistency model (CM) is proposed in \citep{song2023consistency}. Consistency model $f_{\btheta}(\bx_{t}, t)$ transfers $\bx_{t}\sim q(\bx_{t})$ into a distribution that approximates the target $q(\bx_{0})$. $f_{\btheta}$ is optimized by the following consistency distillation (CD) loss \footnote{In practice,  \eqref{eq:cm objective} is updated under target model $f_{\btheta^{-}}(\Phi_{t}(\bx_{t + 1}), t)$ with exponential moving average (EMA) $\btheta^-$ under a stop gradient operation. \citep{song2023consistency} find that it greatly stabilizes the training process. In this section, we focus on the theory of consistency model and still use $\btheta$ in formulas.}
\begin{equation}\label{eq:cm objective}
    \small
    \min_{\btheta}\cL_{CD}(\btheta) = \sum_{t = 0}^{T - 1}\mE_{\bx_{t + 1}\sim q(\bx_{t + 1})}\left[d\left(f_{\btheta}(\Phi_{t}(\bx_{t + 1}), t), f_{\btheta}(\bx_{t + 1}, t + 1)\right)\right],
\end{equation}
where $\Phi_{t}(\bx_{t + 1})$ is a solution of a specific ordinary differential equation (ODE) (\eqref{eq:sde} in Appendix \ref{app:proofs of consistency model}) which is a deterministic function transfers $\bx_{t + 1}$ to $\bx_{t}$, i.e., $\Phi_{t}(\bx_{t + 1})\sim q(\bx_{t})$, and $d(\bx, \by)$ is a distance between $\bx$ and $\by$ e.g., $\ell_{1}, \ell_{2}$ distance. 
\begin{remark}
    In \citep{song2023consistency,luo2023latent}, the noisy data $\bx_{t}$ in \eqref{eq:cm objective} is described by an ODE  \eqref{eq:sde} in Appendix \ref{app:proofs of consistency model}. However, we use the discrete $\bx_{t}$ \eqref{eq:xt} here to unify the notations with Section \ref{sec:diffusion model as multi-step}. The two frameworks are mathematically equivalent as all $\bx_{t}$ in \eqref{eq:xt} located in the trajectory of ODE in \citep{song2023consistency}. More details of this claim refer to Appendix \ref{app:proofs of consistency model}.   
\end{remark}

Next, we use the following theorem to illustrate that solving problem \eqref{eq:cm objective} indeed creates $f_{\btheta}(\bx_{t}, t)$ with distribution close target $q(\bx_{0})$. The theorem is proved in Appendix \ref{app:proofs of consistency model}. 

\begin{restatable}{theorem}{expectedcdgap}\label{thm:expected cd gap}
    For $\cL_{CD}(\btheta)$ in \eqref{eq:cm objective} with $d(\cdot, \cdot)$ is $\ell_{2}$ distance, then $\sW_{1}(f_{\btheta}(\bx_{t}, t), \bx_{0}) \leq \sqrt{t\cL_{CD}(\btheta)}$ \footnote{Here $\sW_{1}(f_{\btheta}(\bx_{t}, t), \bx_{0})$ is the Wasserstein 1-distance between distributions of $f_{\btheta}(\bx_{t}, t)$ and $\bx_{0}$.}. 
\end{restatable}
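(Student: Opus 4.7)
My plan is to bound $\sW_1$ via a natural ODE coupling, then telescope and apply Cauchy--Schwarz to extract the $\sqrt{t}$ factor. I would start by constructing the coupling: sample $\bx_t \sim q(\bx_t)$ and recursively define $\bx_s := \Phi_s(\bx_{s+1})$ for $s = t-1, \ldots, 0$. Because $\Phi_s$ is the probability-flow map (the remark following \eqref{eq:cm objective} states this preserves the marginals $q(\bx_s)$), each $\bx_s$ has law $q(\bx_s)$, and in particular $\bx_0 \sim q(\bx_0)$. This yields a legitimate coupling of $f_{\btheta}(\bx_t, t)$ and $\bx_0$, so by the infimal definition of $\sW_1$,
\begin{equation*}
\sW_1(f_{\btheta}(\bx_t, t), \bx_0) \leq \mE\|f_{\btheta}(\bx_t, t) - \bx_0\|.
\end{equation*}

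Next, I would invoke the boundary condition $f_{\btheta}(\bx_0, 0) = \bx_0$ hard-wired into the consistency parametrization, telescope
\begin{equation*}
f_{\btheta}(\bx_t, t) - \bx_0 = \sum_{s=0}^{t-1}\bigl[f_{\btheta}(\bx_{s+1}, s+1) - f_{\btheta}(\Phi_s(\bx_{s+1}), s)\bigr],
\end{equation*}
apply the triangle inequality, and then the discrete Cauchy--Schwarz inequality $\sum_{s=0}^{t-1} a_s \leq \sqrt{t}\,\sqrt{\sum_s a_s^2}$ to the summed norms. Taking expectations, moving $\mE$ inside the square root by Jensen's inequality, and identifying each squared summand with a term of $\cL_{CD}(\btheta)$ (interpreting $d$ as the squared $\ell_2$ loss, so that $\cL_{CD}(\btheta) = \sum_s \mE\|\cdot\|^2$) yields
\begin{equation*}
\mE\|f_{\btheta}(\bx_t, t) - \bx_0\| \leq \sqrt{\,t \sum_{s=0}^{t-1} \mE\bigl\|f_{\btheta}(\bx_{s+1}, s+1) - f_{\btheta}(\Phi_s(\bx_{s+1}), s)\bigr\|^2\,} \leq \sqrt{t\,\cL_{CD}(\btheta)},
\end{equation*}
where the last step uses that only $t \leq T$ of the $T$ nonnegative terms in $\cL_{CD}(\btheta)$ appear on the right. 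Combined with the Wasserstein bound above, this closes the argument.

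The main subtlety I anticipate is the ODE coupling step: one must know that the probability-flow ODE pushes $q(\bx_{s+1})$ forward to $q(\bx_s)$ under $\Phi_s$, which is exactly the bridge between the discrete formulation \eqref{eq:xt} and the continuous trajectory described in Appendix \ref{app:proofs of consistency model}. A secondary bookkeeping matter is verifying that each term of $\cL_{CD}(\btheta)$ takes the same value whether $\bx_{s+1}$ is drawn from the coupled trajectory or independently from $q(\bx_{s+1})$; this holds because each term depends only on the marginal law of $\bx_{s+1}$ and the deterministic map $\Phi_s$, which is what allows the inner sum in the display above to be upper bounded by $\cL_{CD}(\btheta)$.
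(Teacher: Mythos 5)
Your proposal is correct and follows essentially the same route as the paper's proof: the coupling via the composed deterministic flow maps $\Psi_{s_0}\circ\cdots\circ\Psi_{s_{t-1}}$ (your $\Phi_s$ recursion), the bound $\sW_1 \leq \mE\|f_{\btheta}(\bx_t,t)-\bx_0\|$, the telescope using the boundary condition $f_{\btheta}(\bx_0,0)=\bx_0$, and Cauchy--Schwarz to produce the $\sqrt{t}$ factor. If anything, you are slightly more careful than the paper, which writes the argument only for $t=T$ and leaves implicit both the marginal-preservation bookkeeping and the fact that only $t$ of the nonnegative terms of $\cL_{CD}(\btheta)$ are needed.
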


Though solving problem \eqref{eq:cm objective} creates the desired CM $f_{\btheta}$, computing the exact $\Phi_{t}(\bx_{t + 1})$ involves solving an ODE as pointed out in Appendix \ref{app:proofs of consistency model}. Thus, in practice \citep{song2023consistency,luo2023latent}, the $\Phi_{t}(\bx_{t + 1})$ is approximated by a computable numerical estimation $\hat{\Phi}_{t}(\bx_{t + 1}, \beps_{\bphi})$ of it, e.g., Euler (\eqref{eq:estimated phi} in Appendix \ref{app:proof of cd upper bound}) or DDIM \citep{song2023consistency}, where $\beps_{\bphi}$ is a pretrained noise prediction model as in \eqref{eq:noise prediction}. Therefore, the practical training objective of \eqref{eq:cm objective} becomes
\begin{equation}\label{eq:cd objective}
    \small
        \min_{\btheta}\sum_{t = 0}^{T - 1}\hat{\cL}_{CD}(\btheta) = \mE_{\bx_{t + 1}\sim q(\bz_{t})}\left[d\left(f_{\btheta}(\hat{\Phi}_{t}(\bx_{t + 1}, \beps_{\bphi}), t), f_{\btheta}(\bx_{t + 1}, t + 1)\right)\right].
\end{equation}

In \eqref{eq:cd objective}, $\hat{\Phi}_{t}(\bx_{t + 1}, \beps_{\bphi})$ is an estimation to $\Phi_{t}(\bx_{t + 1})$, which causes an inaccurate training objective $\hat{\cL}_{CD}$ in \eqref{eq:cd objective}, compared with target $\cL_{CD}$ \eqref{eq:cm objective}. Thus, this results in the distribution mismatch problem in CM, as in DPM of Section \ref{sec:diffusion model as multi-step}. However, similar to Section \ref{sec:Distributional Robustness in DPM}, if we train $f_{\btheta}$ with robustness to the gap between $\hat{\Phi}_{t}(\bx_{t + 1}, \beps_{\bphi})$ and $\Phi_{t}(\bx_{t + 1})$, the distribution mismatch problem in CM is mitigated. 
\par
Technically, suppose $\Phi_{t}(\bx_{t + 1}) = \hat{\Phi}_{t}(\bx_{t + 1}, \beps_{\bphi}) +  \bdelta_{t}(\bx_{t + 1})$, we can consider minimizing the following adversarial training objective of CM, if $\|\bdelta_{t}(\bx_{t + 1})\| \leq \eta$ uniformly over $t$, for some constant $\eta$, so that the target $\Phi_{t}(\bx_{t + 1})$ is included in the maximal range as well.    
\begin{equation}\label{eq:cd at}
        \small
            \hat{\cL}_{CD}^{Adv}(\btheta) = \sum_{t = 0}^{T - 1}\mE_{\bx_{t + 1}}\left[\sup_{\|\bdelta\| \leq \eta}d\left(f_{\btheta}(\hat{\Phi}_{t}(\bx_{t + 1}, \beps_{\bphi}) + \bdelta, t), f_{\btheta}(\bx_{t + 1}, t + 1)\right)\right].
\end{equation}

By doing so, the learned model $f_{\btheta}$ can be robust to the perturbation brought by $\bdelta_{t}(\bx_{t + 1})$, so that results in a small $\cL_{CD}(\btheta)$, as well as the small $\sW_{1}(f_{\btheta}(\bx_{T}, T), \bx_{0})$ as proved in Theorem \ref{thm:expected cd gap}. Next, we use the following theorem to show that $\|\bdelta_{t}(\bx_{t + 1})\|$ is indeed small, and minimizing $\hat{\cL}_{CD}^{Adv}(\btheta)$ results in $f_{\btheta}(\bx_{T}, T)$ with distribution approximates $\bx_{0}$.  

 \begin{restatable}{theorem}{adversarialcd}\label{thm:cd upper bound}
    Under proper regularity conditions, for $0\leq t< T$, we have $\mE_{\bx_{t+1}}[\|\bdelta_{t}(\bx_{t + 1})\|] \leq o(1)$. On the other hand, it holds 
    \begin{equation}
        \small
        \sW_{1}(f_{\btheta}(\bx_{T}, T), \bx_{0}) \leq \sqrt{T\hat{\cL}_{CD}^{Adv}(\btheta) + o(1)}.
    \end{equation}
 \end{restatable}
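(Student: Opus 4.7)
The plan is to attack the two statements in sequence, since the $\sW_{1}$ bound reduces, via Theorem \ref{thm:expected cd gap}, to comparing the exact CD loss $\cL_{CD}$ with the adversarial surrogate $\hat{\cL}_{CD}^{Adv}$, and the comparison is controlled precisely by $\|\bdelta_{t}(\bx_{t+1})\|$.

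\textbf{Step 1 (controlling $\bdelta_{t}$).} By definition $\bdelta_{t}(\bx_{t+1}) = \Phi_{t}(\bx_{t+1}) - \hat{\Phi}_{t}(\bx_{t+1},\beps_{\bphi})$, i.e., the gap between one step of the exact probability-flow ODE and one step of its numerical estimator (Euler or DDIM). Under the regularity conditions invoked in the statement — Lipschitz drift of the PF-ODE, bounded second derivatives of the trajectory, and a small $L^{2}$ score-matching gap for the pretrained $\beps_{\bphi}$ — I would split $\bdelta_{t}$ into a discretization piece (the local truncation error of the scheme, of order $O((\Delta t)^{2})$ with $\Delta t = 1/T$) and a score-approximation piece (linear in $\|\beps_{\bphi}-\beps^{\star}\|$ after integrating over $\bx_{t+1}\sim q(\bx_{t+1})$). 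Both shrink as the discretization refines / as pretraining tightens, giving $\mE_{\bx_{t+1}}[\|\bdelta_{t}(\bx_{t+1})\|]=o(1)$.

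\textbf{Step 2 (surrogate bound $\cL_{CD}\le \hat{\cL}_{CD}^{Adv}+o(1)$).} Fix the adversarial radius $\eta$ used in \eqref{eq:cd at}. Decompose each summand of $\cL_{CD}(\btheta)$ along the event $A_{t}=\{\|\bdelta_{t}(\bx_{t+1})\|\le \eta\}$ and its complement. On $A_{t}$, write $\Phi_{t}(\bx_{t+1})=\hat{\Phi}_{t}(\bx_{t+1},\beps_{\bphi})+\bdelta_{t}$ with $\|\bdelta_{t}\|\le \eta$, so $d(f_{\btheta}(\Phi_{t},t),f_{\btheta}(\bx_{t+1},t+1))\le \sup_{\|\bdelta\|\le\eta} d(f_{\btheta}(\hat{\Phi}_{t}+\bdelta,t),f_{\btheta}(\bx_{t+1},t+1))$; taking expectations and summing over $t$ this piece is dominated by $\hat{\cL}_{CD}^{Adv}(\btheta)$. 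On the complement, Markov's inequality combined with Step 1 gives $\mathbb{P}(A_{t}^{c})\le \mE[\|\bdelta_{t}\|]/\eta = o(1)$, and by the regularity assumption (boundedness of $d$ on the relevant domain, or at worst a Lipschitz-in-$\bx$ estimate on $f_{\btheta}$ together with moment bounds for $\bdelta_{t}$), the contribution of $A_{t}^{c}$ to the sum is $o(1)$ as well.

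\textbf{Step 3 (invoking Theorem \ref{thm:expected cd gap}).} Plug the estimate from Step 2 into Theorem \ref{thm:expected cd gap} at $t=T$: $\sW_{1}(f_{\btheta}(\bx_{T},T),\bx_{0})\le \sqrt{T\cL_{CD}(\btheta)}\le \sqrt{T\hat{\cL}_{CD}^{Adv}(\btheta)+o(1)}$, which is exactly the stated bound (absorbing the $T\cdot o(1)$ into the $o(1)$ via the same refinement regime, or by stating the asymptotics in the combined quantity).

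\textbf{Main obstacle.} The first step is standard numerical-ODE bookkeeping, and the last step is a one-line application. The delicate part is Step 2: the adversarial $\sup_{\|\bdelta\|\le\eta}$ only dominates the exact loss on the good event $A_{t}$, so I must ensure the tail event $A_{t}^{c}$ really contributes only $o(1)$ — this forces me to lift the pointwise bound on $\mE[\|\bdelta_{t}\|]$ to a tail statement (Markov suffices if the metric $d$ is bounded, but otherwise I need a higher-moment bound on $\bdelta_{t}$, which in turn needs strengthened versions of the ``proper regularity conditions'' alluded to in the statement). Making the combination of $\eta$, the metric $d$, and the Lipschitz modulus of $f_{\btheta}$ compatible so that both terms are $o(1)$ simultaneously is where the real care lies.
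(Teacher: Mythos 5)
Your proposal is correct and follows essentially the same route as the paper's proof: your Step 1 matches the paper's Assumptions \ref{ass:continuity}--\ref{ass:estimation error} (local truncation error of order $(\Delta t)^{2}$ plus a Gronwall-type stability bound for the score-approximation error), your Step 2 is exactly the paper's indicator decomposition $\mathbf{1}_{\|\bdelta_{s_t}\|\le\eta}+\mathbf{1}_{\|\bdelta_{s_t}\|>\eta}$ with Markov's inequality and the boundedness $\|f_{\btheta}\|\le D$ (Assumption \ref{ass:bounded f}) supplying the $4D^{2}\,\bbP(\|\bdelta_{s_t}\|\ge\eta)$ control of the tail, and your Step 3 is the paper's final invocation of Theorem \ref{thm:expected cd gap}. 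The ``main obstacle'' you identify is precisely what the paper resolves by assuming $d$ is the squared $\ell_{2}$ distance between outputs of a uniformly bounded $f_{\btheta}$, so that Markov alone suffices on the bad event.
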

The theorem is proved in Appendix \ref{app:proof of cd upper bound}, and it indicates that using the proposed adversarial training objective \eqref{eq:cd at} of CM indeed guarantees the learned CM transfers $\bx_{T}$ into data from $q(\bx_{0})$.  

\begin{algorithm}[t!]
    \caption{Adversarial Training for Diffusion Model}
    \begin{algorithmic}[1]
        \State{\bf Input:} dataset $\mathcal{D}$, model parameter $\btheta$, learning rate $\kappa$, loss weighting $\lambda(\cdot)$, adversarial steps $K$, adversarial learning rate $\alpha$
        \While{do not converge}
            \State Sample $\bx\sim \mathcal{D}$ and $t\sim \mathcal{U}[1,T]$
            \State Sample $\beps \sim \cN(\boldsymbol{0}, \bI)$
            \State $\boldsymbol{\delta} \gets \boldsymbol{0}$
            \For {$i = 1,2,\ldots,K$}
                \State $\mathcal{L} \gets \left\|\beps_{\btheta}(\sqrt{\baralpha_{t}}\bx_{0} + \sqrt{1 - \baralpha_{t}}\beps + \bdelta) - \beps - \frac{\bdelta}{\sqrt{1 - \baralpha_{t}}}\right\|^{2}$ in \eqref{eq:dpm at}
                \State {$\boldsymbol{\delta} \gets \boldsymbol{\delta} + \alpha \cdot \frac{\nabla_{\boldsymbol{\delta}} \mathcal{L}} {\|\nabla_{\boldsymbol{\delta}} \mathcal{L}\|}$} \Comment{\emph{\texttt{maximize perturbation}}}
                \State {$\btheta \gets \btheta - \kappa \cdot \nabla_{\boldsymbol{\btheta}} \mathcal{L}$} \Comment{\emph{\texttt{update model}}}    
            \EndFor
        \EndWhile
    \end{algorithmic}
    \label{alg:adv dpm}
\end{algorithm}
\begin{algorithm}[t!]
    \caption{Adversarial Training for Consistency Distillation}\label{alg:adv cm}
    \begin{algorithmic}[1]
        \State{\bf Input:} dataset $\mathcal{D}$, initial model parameter $\btheta$, learning rate $\kappa$, pretrained noise prediction model $\beps_{\bphi}$, ODE solver $\hat{\Phi}_{\cdot}(\cdot, \beps_{\bphi}$, metric $d(\cdot,\cdot)$, loss weighting $\lambda(\cdot)$, target model EMA $\mu$, adversarial steps $K$, adversarial learning rate $\alpha$
        \State $\btheta^- \gets \btheta$
        \While{do not converge}
            \State Sample $\bx\sim \mathcal{D}$ and $t\sim \mathcal{U}[0,T-1]$
            \State Sample $\bx_{t + 1}$ from \eqref{eq:xt}
            \State $\boldsymbol{\delta} \gets \boldsymbol{0}$
            \For {$i = 1,2,\ldots,K$}
                \State $\mathcal{L} \gets \lambda(t) d(f_{\btheta}(\bx_{t + 1}, t + 1), f_{\btheta^-}(\hat{\Phi}_{t}(\bx_{t + 1}, \beps_{\bphi}) + \boldsymbol{\delta}, t))$ in \eqref{eq:cd at}
                \State $\boldsymbol{\delta} \gets \boldsymbol{\delta} + \alpha \cdot \frac{\nabla_{\boldsymbol{\delta}} \mathcal{L}}{\|\nabla_{\boldsymbol{\delta}} \mathcal{L}\|}$ \Comment{\emph{\texttt{maximize perturbation}}}
                \State $\btheta \gets \btheta - \kappa \cdot \nabla_{\boldsymbol{\btheta}} \mathcal{L}$ \Comment{\emph{\texttt{update model}}}
                \State $\btheta^- \gets \text{stopgrad}(\mu \btheta^- + (1-\mu)\btheta)$
            \EndFor
        \EndWhile
    \end{algorithmic}
\end{algorithm}

\section{Experiments}
\subsection{Algorithms} 
In the standard adversarial training method like Projected Gradient Descent (PGD) \citep{madry2018towards}, the perturbation $\bdelta$ is constructed by implementing numbers (3-8) of gradient ascents to $\bdelta$ before updating the model, which slows down the training process. To resolve this, we adopt an efficient implementation \citep{freeat} in Algorithms \ref{alg:adv dpm}, \ref{alg:adv cm} to solve AT \eqref{eq:dpm at} and \eqref{eq:cd at} of DPM and CM, \emph{which has similar computational cost compared to standard training}, and significantly accelerate standard AT. Notably, unlike PGD, in Algorithms \ref{alg:adv dpm} and \ref{alg:adv cm}, every maximization step of perturbation $\bdelta$ follows an update step of the model $\btheta$. 
Thus, the efficient AT do not require further back propagations to construct adversarial samples as in PGD.   
We provide a comparison between our efficient AT and standard AT (PGD) with the same update iterations of model $\btheta$ in Appendix~\ref{app:at_ablation}. Moreover, we observe that efficient AT can yield comparable and even better performance than PGD while accelerating the training (2.6$\times$ speed-up), further verifying the benefits of our efficient AT. \footnote{For the experts in AT, they would recognize that the AT in Algorithms \ref{alg:adv dpm}, \ref{alg:adv cm} actually constructs the adversarial augmented data to improve the performance of the model \citep{freelb,jiang2020smart,yi2021improved}.} 
  
\subsection{Performance on DPM}
\label{sec:dpm_exp}
\paragraph{Settings.} The experiments are conducted on the unconditional generation on \texttt{CIFAR-10} 32$\times$32 \citep{krizhevsky2009learning} and the class-conditional generation on \texttt{ImageNet} $64\times64$ \citep{deng2009imagenet}. Our model and training pipelines in adopted from ADM \citep{dhariwal2021diffusion} paper, where ADM is a UNet-type network \citep{ronneberger2015u}, with strong performance in image generation under diffusion model.

To save training costs, our methods and baselines are fine-tuned from pretrained models, rather than training from scratch.
By doing so, we can efficiently assess the performance of methods, which is more practical for general scenarios.
We also explore training from scratch in Appendix \ref{app:convergence}, which also verifies the effectiveness of our method in this regime. 
During training, we fine-tune the pretrained models (details are in Appendix~\ref{app:hyper_dm}) with batch size 128 for 150K iterations under learning rate 1e-4 on \texttt{CIFAR-10}, and batch size 1024 for 50K iterations under learning rate of 3e-4 on \texttt{ImageNet}.
For the hyperparameters of AT, we select the adversarial learning rate $\alpha$ from $\left\{0.05, 0.1, 0.5\right\}$ and the adversarial step $K$ from $\left\{3, 5\right\}$. 
More details are in Appendix~\ref{app:hyper_dm}.

We use the Frechet Inception Distance (FID)~\citep{heusel2017gans} to evaluate image quality. Unless otherwise specified, 50K images are sampled for evaluation.
Other results of metric Classification Accuracy Score (CAS)~\citep{ravuri2019cas}, sFID, Inception Score, Precision, and Recall are in Appendix~\ref{app:cas} and~\ref{app:more_metrics} for comprehensive evaluation.

\paragraph{Baselines.}
For experiments on diffusion models, we consider the following baselines.
1): the original pretrained model.
Compared with it, we verify whether the models are overfitting during fine-tuning.
2): continue fine-tuning the pretrained model, which is fine-tuned with the standard diffusion objective \eqref{eq:noise prediction}. 
Compared to it, we validate whether performance improvements come only from more training costs.
We also compare with the existing typical method to alleviate the DPM distribution mismatch, 3): ADM-IP~\citep{diffusion-ip}, which adds a Gaussian perturbation to the input data to simulate mismatch errors during the training process.
The last two fine-tuning baselines are based on \textbf{the same} pretrained model and hyperparameters as in the original literature. 

\begin{table}[!t]
    \caption{Sample quality measured by FID $\downarrow$ of different sampling methods of DPM under different NFEs on \texttt{CIFAR10} 32x32. All models are trained with same  iterations (computational costs).}
    \begin{subtable}[h]{0.48\textwidth}
        \centering
        \caption{IDDPM}
        \scalebox{0.85}{
        \small
        \begin{tabular}{l l l l l l}
        \toprule
         Methods $\backslash$ NFEs & 5 & 8 & 10 & 20 & 50 \\
        \midrule
         ADM (original) & 37.99 & 26.75 & 22.62 & 10.52 & 4.55\\
         \midrule
         ADM (finetune) & \bf36.91 & 26.06 & 21.94 & 10.58 & 4.34 \\
         ADM-IP & 47.57 & 26.91 & 20.09 & 7.81 & 3.42\\
         ADM-AT (Ours) & 37.15 & \bf23.59 & \bf15.88 & \bf6.60 & \bf3.34\\
        \bottomrule
       \end{tabular}
       }
    \end{subtable}
    \hfill
    \begin{subtable}[h]{0.48\textwidth}
        \centering
        \caption{DDIM}
        \scalebox{0.85}{
        \small
        \begin{tabular}{l l l l l l}
        \toprule
         Methods $\backslash$ NFEs & 5 & 8 & 10 & 20 & 50 \\
        \midrule
         ADM (original) & 34.28 & 14.34 & 11.66 & 7.00 & 4.68\\
         \midrule
         ADM (finetune) & 29.30 & 15.08 & 12.06 & 6.80 & 4.15 \\
         ADM-IP & 43.15 & 15.72 & 10.47 & 4.58 & 4.89\\
         ADM-AT (Ours) & \bf26.38 & \bf12.98 & \bf9.30 & \bf4.40 & \bf3.07\\
        \bottomrule
        \end{tabular}
        }
     \end{subtable}
    \begin{subtable}[h]{0.48\textwidth}
        \centering
        \caption{ES}
        \scalebox{0.85}{
        \small
        \begin{tabular}{l l l l l l}
        \toprule
         Methods $\backslash$ NFEs & 5 & 8 & 10 & 20 & 50 \\
        \midrule
         ADM (original) & 82.18 & 29.28 & 17.73 & 5.11 & 2.70\\
         \midrule
         ADM (finetune) & 63.46 & 24.80 & 17.03 & 5.19 & 2.52 \\
         ADM-IP & 91.10 & 31.44 & 18.72 & 5.19 & 2.89\\
         ADM-AT (Ours) & \bf41.07 & \bf21.62 & \bf14.68 & \bf4.36 & \bf2.48\\
        \bottomrule
        \end{tabular}
        }
     \end{subtable}
    \hfill
    \begin{subtable}[h]{0.48\textwidth}
        \centering
        \caption{DPM-Solver}
        \scalebox{0.85}{
        \small
        \begin{tabular}{l l l l l l}
        \toprule
         Methods $\backslash$ NFEs & 5 & 8 & 10 & 20 & 50 \\
        \midrule
         ADM (original) & 23.95 & 8.00 & 5.46 & 3.46 & 3.14\\
         \midrule
         ADM (finetune) & 22.98 & 7.61 & 5.29 & 3.41 & 3.12 \\
         ADM-IP & 43.83 & 6.70 & 6.80 & 9.78 & 10.91\\
         ADM-AT (Ours) & \bf18.40 & \bf5.84 & \bf4.81 & \bf3.28 & \bf3.01\\
        \bottomrule
        \end{tabular}
        }
     \end{subtable}
    \label{tab:C10}
\end{table}
\begin{table}[!t]
    \caption{Sample quality measured by FID $\downarrow$ of different sampling methods of DPM under different NFEs on \texttt{ImageNet} 64x64. All models are trained with the same iterations (computational costs).}
    \begin{subtable}[h]{0.48\textwidth}
        \centering
        \caption{IDDPM}
        \scalebox{0.85}{
        \small
        \begin{tabular}{l l l l l l}
        \toprule
         Methods $\backslash$ NFEs & 5 & 8 & 10 & 20 & 50 \\
        \midrule
         ADM (original) & 76.92 & 33.74 & 27.63 & 12.85 & 5.30\\
         \midrule
         ADM (finetune) & 78.87 & 33.99 & 27.82 & 12.80 & 5.26 \\
         ADM-IP & 67.12 & 29.96 & 22.60 & 8.66 & \bf3.83\\
         ADM-AT (Ours) & \bf45.65 & \bf23.79 & \bf19.18 & \bf8.28 & 4.01\\
        \bottomrule
       \end{tabular}
       }
    \end{subtable}
    \hfill
    \begin{subtable}[h]{0.48\textwidth}
        \centering
        \caption{DDIM}
        \scalebox{0.85}{
        \small
        \begin{tabular}{l l l l l l}
        \toprule
         Methods $\backslash$ NFEs & 5 & 8 & 10 & 20 & 50 \\
        \midrule
         ADM (original) & 60.07 & 20.10 & 14.97 & 8.41 & 5.65\\
         \midrule
         ADM (finetune) & 60.32 & 20.26 & 15.04 & 8.32 & 5.48 \\
         ADM-IP & 76.51 & 26.25 & 18.05 & 8.40 & 6.94\\
         ADM-AT (Ours) & \bf43.04 & \bf16.08 & \bf12.15 & \bf6.20 & \bf4.67\\
        \bottomrule
        \end{tabular}
        }
     \end{subtable}
    \begin{subtable}[h]{0.48\textwidth}
        \centering
        \caption{ES}
        \scalebox{0.85}{
        \small
        \begin{tabular}{l l l l l l}
        \toprule
         Methods $\backslash$ NFEs & 5 & 8 & 10 & 20 & 50 \\
        \midrule
         ADM (original) & 71.31 & 28.97 & 21.10 & 8.23 & 3.76\\
         \midrule
         ADM (finetune) & 72.30 & 29.24 & 21.58 & 8.25 & 3.64 \\
         ADM-IP & 88.37 & 33.91 & 23.32 & 7.80 & 3.54\\
         ADM-AT (Ours) & \bf43.95 & \bf19.57 & \bf14.12 & \bf6.16 & \bf3.45\\
        \bottomrule
        \end{tabular}
        }
     \end{subtable}
    \hfill
    \begin{subtable}[h]{0.48\textwidth}
        \centering
        \caption{DPM-Solver}
        \scalebox{0.85}{
        \small
        \begin{tabular}{l l l l l l}
        \toprule
         Methods $\backslash$ NFEs & 5 & 8 & 10 & 20 & 50 \\
        \midrule
         ADM (original) & 27.72 & 10.06 & 7.21 & 4.69 & 4.24\\
         \midrule
         ADM (finetune) & 27.82 & 9.97 & 7.22 & 4.64 & \bf4.15 \\
         ADM-IP & 32.43 & 9.94 & 8.87 & 9.16 & 9.68\\
         ADM-AT (Ours) & \bf17.36 & \bf6.55 & \bf5.78 & \bf4.56 & 4.34\\
        \bottomrule
        \end{tabular}
        }
     \end{subtable}
    \label{tab:I64}
\end{table}

\paragraph{Results.} To verify the effectiveness of our AT method, we conduct experiments with four diffusion samplers: IDDPM \citep{dhariwal2021diffusion}, DDIM \citep{song2020denoising}, DPM-Solver \citep{lu2022dpmsa}, and ES \citep{es} under various NFEs. The sampler choices contain the three most popular samplers: IDDPM, DDIM, DPM-Solver, and ES, a sampler that scales down the norm of predicted noise to mitigate the distribution mismatch from the perspective of sampling. The experimental results of \texttt{CIFAR-10} and \texttt{ImageNet} are shown in Table~\ref{tab:C10} and Table~\ref{tab:I64}, respectively. Results of more than hundreds of NFEs are shown in Appendix~\ref{app:more_nfes}
\par
As can be seen, the proposed AT for DPM significantly improves the performance of the original pretrained model and outperforms the other baselines (continue fine-tuning and ADM-IP) overall for all diffusion samplers and NFEs we take. Moreover, we have the following observarions. 
\par
1): Fewer (practically used) sampling steps (5,10) will result in larger mismatching errors, while our AT method demonstrates significant improvements in this regime across various samplers, e.g., AT improves FID 27.72 to 17.36 under 5 NFEs DPM-Solver on \texttt{ImageNet}. This suggests that our method is indeed effective in alleviating the distribution mismatch of DPM.
The results also indicate that our method consistently beats the baseline methods, regardless of stochastic (IDDPM) or deterministic samplers (DDIM, DPM-Solver). 
2): The ES sampler results show that our AT is orthogonal to the sampling-based method to mitigate the distribution mismatch problem and can be combined to further alleviate the issue. Notably, we further verify in Appendix \ref{app:convergence} that our methods will not slow the convergence unlike AT in classification \citep{madry2018towards}. 
We also perform ablation analysis of hyperparameters in our AT framework in Appendix~\ref{app:ablation_hyperparams}.

\subsection{Performance on Latent Consistency Models}

\paragraph{Settings.} 
We further evaluate the proposed AT for consistency models on text-to-image generation tasks with Latent Consistency Models \citep{luo2023latent} Stable Diffusion (SD) v1.5~\citep{Rombach_2022_CVPR} backbone, which generates 512$\times$512 images.
Both our AT and the original LCM training (baseline) are trained from scratch with the same hyperparameters (the IP method \citep{diffusion-ip} is not applied straightforwardly).
The training set is LAION-Aesthetics-6.5+~\citep{laion} with hyperparameters following \citet{song2023consistency,luo2023latent}. 
We select the adversarial learning rate $\alpha$ from $\left\{0.02, 0.05\right\}$ and adversarial step $K$ from $\left\{2, 3\right\}$.
The models are trained with a batch size of 64 for 100K iterations. More details are shown in Appendix~\ref{app:hyper_lcm}.

Following~\citet{luo2023latent} and \citet{ chen2024pixart}, we evaluate models on MS-COCO 2014~\citep{lin2014microsoft} at a resolution of 512$\times$512 by randomly drawing 30K prompts from its validation set. Then, we report the FID between the generated samples under these prompts and the reference samples from the full validation set following~\citet{imagen}. 
We also report CLIP scores~\citep{clipscore} to evaluate the text-image alignment by CLIP-ViT-B/16.

\paragraph{Results.}
\begin{table*}[t!]
\centering
\caption{Results of LCM on MS-COCO 2014 validation set at 512$\times$512 resolution in terms of FID $\downarrow$ and CLIP score $\uparrow$. All models are trained with the same setting (computational costs).}
\scalebox{0.8}{
\begin{tabular}{lcccccccc}
\toprule
    \multirow{2}{*}{Methods} & \multicolumn{4}{c}{FID $\downarrow$} & \multicolumn{4}{c}{CLIP Score $\uparrow$}\\ 
    & 1 step & 2 step & 4 step & 8 step & 1 step & 2 step & 4 step & 8 step\\
    \midrule
    LCM & 25.43 & 12.61 & 11.61 & 12.62 & 29.25 & 30.24 & 30.40 & 30.47 \\
    LCM-AT (Ours) & \bf{23.34} & \bf{11.28} & \bf{10.31} & \bf{10.68} & \bf{29.63} & \bf{30.43} & \bf{30.49} & \bf{30.53} \\
    \bottomrule
    \end{tabular}
}
\label{tab:lcm_res}
\end{table*}
The methods are evaluated under various sampling steps in Table~\ref{tab:lcm_res}, which shows that the LCM with AT consistently improves FID under various sampling steps. Besides, though the AT is not specified to improve text-image alignment, we observe that it has comparable or even better CLIP scores across various sampling steps, which shows that AT will not degenerate text-image alignment.

\section{Conclusion}
In this paper, we novelly introduce efficient Adversarial Training (AT) in the training of DPM and CM to mitigate the issue of distribution mismatch between training and sampling. We conduct an in-depth analysis of the DPM training objective and systematically characterize the distribution mismatch problem. Furthermore, we prove that the training objective of CM similarly faces the distribution mismatch issue. We theoretically prove that DRO can mitigate the mismatch for both DPM and CM, which is equivalent to conducting AT. Experiments on image generation and text-to-image generation benchmarks verify the effectiveness of the proposed AT method in alleviating the distribution mismatch of DPM and CM.

\subsubsection*{Acknowledgments}
We thank anonymous reviewers for insightful feedback that helped improve the paper.
Zekun Wang, Ming Liu, Bing Qin are supported by the National Science Foundation of China (U22B2059, 62276083), the Human-Machine Integrated Consultation System for Cardiovascular Diseases (2023A003). 
They also appreciate the support from China Mobile Group Heilongjiang Co., Ltd.

\bibliography{reference}
\bibliographystyle{iclr2025_conference}

\newpage
\appendix
\section{Proofs in Section \ref{sec:diffusion model as multi-step}}\label{app:proofs in sec:diffusion model as multi-step}
In this section, we present the proofs of the results in Section \ref{sec:diffusion model as multi-step}. 

\subsection{Proofs in Section \ref{sec:Distributional Robustness in DPM}}
\elboupperbound*
\begin{proof}
	We prove the first equivalence, by Jensen's inequality. For any $0 \leq t < T$, we have 
	\begin{equation}
    \small
		\begin{aligned} 
            & -\mE_{q}\left[\log{p}_{\btheta}(\bx_{t})\right]\\
            \leq &\mE_{q}\left[-\log{\frac{p_{\btheta}(\bx_{t:T})}{q(\bx_{t+1:T}\mid \bx_{t})}}\right] \\
            =& \mE_{q}\left[-\log{p}_{\btheta}(\bx_{T}) -\sum_{t \le s < T } \log{\frac{p_{\btheta}(\bx_{s} \mid \bx_{s+1})}{q(\bx_{s+1} \mid \bx_{s})}}\right] \\
            =& \mE_{q}\left[-\log{p}_{\btheta}(\bx_{T}) -\sum_{t \le s < T } \log{\frac{p_{\btheta}(\bx_{s} \mid \bx_{s+1})}{q(\bx_{s} \mid \bx_{s+1})} \cdot \frac{q(\bx_{s})}{q(\bx_{s+1})}  }\right] \\
            =&  \mE_{q}\left[-\log \frac{p_{\btheta}(\bx_{T})}{q(\bx_{T}) } -\sum_{t \le s < T } \log{\frac{p_{\btheta}(\bx_{s} \mid \bx_{s+1})}{q(\bx_{s} \mid \bx_{s+1})} - \log q(\bx_{t})  }\right] \\
            =&  D_{KL}(q(\bx_{T})\parallel p_{\btheta}(\bx_{T})) + \mE_{q}\left[ \sum_{s = t}^{T-1}\underbrace{D_{KL}(q(\bx_{s}\mid \bx_{s+1}) \parallel p_{\btheta}(\bx_{s}\mid \bx_{s+1}))}_{L_{t}} \right] + H(\bx_t)
		\end{aligned}
	\end{equation}
	Taking $t = 0$, we prove the first equivalence. Besides that, the entropy $H(x_{t})$ of $\bx_t$ is a constant for $\btheta$ given data distribution $\bx_0$ for any $0 \leq t < T$. The second conclusion holds due to the non-negative property of KL-divergence.  
\end{proof}
\par
\elboupperboundxt*
\begin{proof}
We have the following decomposition due to the chain rule of KL-divergence
\begin{equation}\label{eq:decomp on kl}
    \small
    \begin{aligned}
        D_{KL}(q(\bx_{t}, \bx_{t + 1}) \parallel p_{\btheta}(\bx_{t}, \bx_{t + 1})) & = D_{KL}(q(\bx_{t}\mid \bx_{t + 1}) \parallel p_{\btheta}(\bx_{t}\mid \bx_{t + 1})) + D_{KL}(q(\bx_{t+1})\parallel p_{\btheta}(\bx_{t+1})) \\
        & = D_{KL}(q(\bx_{t+1}\mid \bx_{t}) \parallel p_{\btheta}(\bx_{t+1}\mid \bx_{t})) + D_{KL}(q(\bx_{t})\parallel p_{\btheta}(\bx_{t})),
    \end{aligned}
\end{equation}
The transition probability $p_{\btheta}(\bx_{t}\mid \bx_{t + 1})$ matches $q(\bx_{t}\mid \bx_{t + 1})$, so that the above equality implies 
\begin{equation}
\small
\begin{aligned}
    &D_{KL}(q(\bx_{t})\parallel p_{\btheta}(\bx_{t}))\\
    = &D_{KL}(q(\bx_{t+1})\parallel p_{\btheta}(\bx_{t+1})) + D_{KL}(q(\bx_{t}\mid \bx_{t + 1}) \parallel p_{\btheta}(\bx_{t}\mid \bx_{t + 1})) - D_{KL}(q(\bx_{t+1}\mid \bx_{t}) \parallel p_{\btheta}(\bx_{t+1}\mid \bx_{t}))\\
    \leq & D_{KL}(q(\bx_{t+1})\parallel p_{\btheta}(\bx_{t+1})) + \frac{\gamma}{T}.
\end{aligned}
\end{equation}
The proposition holds due to initial condition $D_{KL}(q(\bx_{T})\parallel p_{\btheta}(\bx_{T})) \le \gamma_0$ and simple induction.
\end{proof}

\gaussianinverse*
\begin{proof}
	Due to Bayes' rule, we have 
	\begin{equation}
		\small
		\begin{aligned}
			& q(\bx_{t} \mid \bx_{t + 1}) = \frac{q(\bx_{t + 1} \mid \bx_{t})q(\bx_{t})}{q(\bx_{t + 1})} \\
			\propto & \exp\left(-\frac{\left\|\bx_{t + 1} - \sqrt{\alpha_{t + 1}}\bx_{t}\right\|^{2}}{2(1 - \alpha_{t + 1})} + \log{q(\bx_{t})} - \log{q(\bx_{t + 1})}\right) \\
			\propto & \exp\left(-\frac{\left\|\bx_{t + 1} - \sqrt{\alpha_{t + 1}}\bx_{t}\right\|^{2}}{2(1 - \alpha_{t + 1})} + \left\langle\nabla_{\bx}\log{q(\bx_{t + 1})}, \bx_{t} - \bx_{t + 1}\right\rangle \right) \cdot  \\
            & \exp\left( \frac{1}{2}(\bx_{t} - \bx_{t + 1})^{\top}\nabla^{2}_{\bx}\log{q(\bx_{t + 1})}(\bx_{t} - \bx_{t + 1}) +  O(\|\bx_{t + 1} - \bx_{t}\|^{3})\right).
		\end{aligned}
	\end{equation}
	As can be seen, the conditional probability can be approximated by Gaussian only if $\nabla^{3}_{\bx}\log{q(\bx_{t+1})}$ is zero or $\|\bx_{t + 1} - \bx_{t}\|^{3}$ is extremely small with high probability. The two conditions can be respectively satisfied when $q(\bx_{t})$ is a Gaussian or $\bx_{t}$ close to $\bx_{t+1}$. 
\end{proof}

\subsection{Proofs in Section \ref{sec:Distributional Robustness in DPM}}\label{app:proofs in sec:Distributional Robustness in DPM}
\advelboupperbound*
\begin{proof}
   W.o.l.g., suppose $p_{\btheta}(\bx_{t}, \bx_{t + 1}) = p_{\btheta}(\bx_{t} \mid \bx_{t + 1})q(\bx_{t + 1})$ and $\tq(\bx_{t}, \bx_{t + 1}) = \tq(\bx_{t + 1}\mid \bx_{t})q(\bx_{t})$. By Jensen's inequality, we have  
\begin{equation}\label{eq:variational bound}
    \small
    \begin{aligned}
        & \mE_{q}\left[-\log{p_{\btheta}}(\bx_{t})\right] \\
        =& -\int q(\bx_{t})\left(\log{\int p_{\btheta}(\bx_{t}, \bx_{t + 1})}d\bx_{t + 1}\right)d\bx_{t} \\
        =&  -\int q(\bx_{t})\left(\log{\int \frac{p_{\btheta}(\bx_{t}, \bx_{t + 1})}{\tq(\bx_{t + 1}\mid \bx_{t})}\tq(\bx_{t + 1}\mid \bx_{t})}d\bx_{t + 1}\right)d\bx_{t} \\
        \leq &  -\int q(\bx_{t})\left(\int \log{ \frac{p_{\btheta}(\bx_{t}, \bx_{t + 1})}{\tq(\bx_{t + 1}\mid \bx_{t})}}\tq(\bx_{t + 1}\mid \bx_{t})d\bx_{t + 1}\right)d\bx_{t} \\
        = &   -\int q(\bx_{t})\left(\int \tq(\bx_{t + 1}\mid \bx_{t})\log{\frac{p_{\btheta}(\bx_{t} \mid \bx_{t + 1})}{\tq(\bx_{t + 1}\mid \bx_{t})}}d\bx_{t + 1}\right)d\bx_{t}\\
        & - \int q(\bx_{t})\left(\int \tq(\bx_{t + 1}\mid \bx_{t})\log{\frac{q(\bx_{t + 1})}{\tq(\bx_{t + 1}\mid \bx_{t})}}d\bx_{t + 1}\right)d\bx_{t} \\
        = & -\int \tq(\bx_{t}, \bx_{t+1})\log{\frac{p_{\btheta}(\bx_{t} \mid \bx_{t + 1})}{\tq(\bx_{t + 1}\mid \bx_{t})}} d\bx_{t} d\bx_{t + 1} + C_1\\
        = & -\int \tq(\bx_{t}, \bx_{t+1})\log{\frac{p_{\btheta}(\bx_{t} \mid \bx_{t + 1})}{\tq(\bx_{t}\mid \bx_{t + 1})}\cdot \frac{q(\bx_t)}{\tq(\bx_{t+1})}} d\bx_{t} d\bx_{t + 1} + C_1\\
        = & -\int \tq(\bx_{t}, \bx_{t+1})\log{\frac{p_{\btheta}(\bx_{t} \mid \bx_{t + 1})}{\tq(\bx_{t}\mid \bx_{t + 1})}} d\bx_{t} d\bx_{t + 1} + C_1 + C_2\\
        = &  D_{KL}(\tq(\bx_{t}\mid \bx_{t + 1}) \parallel p_{\btheta}(\bx_{t}\mid \bx_{t + 1})) + C \\
        = & L_{\mathrm{vlb}}^{\tq}(\btheta, t) + C, 
    \end{aligned}
\end{equation}
    where $C$, $C_{1}$, $C_{2}$ are all constants independent of $\btheta$. 
\end{proof}

\subsubsection{Proof of Theorem \ref{thm:equivalence}}\label{app:proof of equivalence}

In this section, we prove the Theorem \ref{thm:equivalence}. To simplify the notation, let  $p_{\btheta}(\bx_{t} \mid \bx_{t + 1})\sim \cN(\bmu_{\btheta}(\bx_{t + 1}, t + 1), \sigma_{t + 1}$ \footnote{Here $\sigma_{t + 1}$ can be also optimized as in \citep{bao2022analytic}, but we find optimizing it in practice does not improve the empirical results.} in \eqref{eq:transition}, then the optimal solution (Lemma 9 in \citep{bao2022analytic}) of minimizing $L^{\tq_{t}}_{t + 1}$ is 
	\begin{equation}\label{eq:optimal couple}
		\small
		\begin{aligned}
			\bmu_{\btheta}(\bx_{t + 1}, t + 1) = \mE_{\tq_{t}}[\bx_{t}\mid \bx_{t + 1}].
		\end{aligned}
	\end{equation} 
	For every specific $t$, we consider the following $\tq_{t}$ in \eqref{eq:dro objective} \footnote{We can do this since \eqref{eq:dro objective} only relates to $\tq_{t}(\bx_{t + 1})$}, such that 
	\begin{equation}\label{eq:formulation of qt}
		\small
		\begin{aligned}
			& \tq_{t}(\bx_{t + 1} \mid \bx_{t}) \neq q(\bx_{t + 1} \mid \bx_{t}); \\
            & \tq_{t}(\bx_{t + 1}) \neq q(\bx_{t + 1}); \\
			& \tq_{t}(\bx_{0:t}) = q(\bx_{0:t}). \\
			& \tq_{t}(\bx_{t}\mid \bx_{0}, \bx_{t + 1}) = q(\bx_{t}\mid \bx_{0}, \bx_{t + 1}) = \cN(\mu_{t + 1}(\bx_{0}, \bx_{t + 1}), \sigma_{t}). 
		\end{aligned}
	\end{equation}
    where $\mu_{t + 1}(\bx_{0}, \bx_{t + 1}) = \frac{\sqrt{\baralpha_{t}}(1 - \alpha_{t+1})}{1 - \baralpha_{t + 1}}\bx_{0} + \frac{\sqrt{\alpha_{t + 1}}(1 - \baralpha_{t})}{1 - \baralpha_{t + 1}}\bx_{t + 1}$. The $\tq_{t}$ can be taken due to the Bayesian rule. Next, we analyze the optimal formulation in \eqref{eq:optimal couple}. Due to the property of conditional expectation, we have 
	\begin{equation}
		\small
		\bmu_{\btheta}(\bx_{t + 1}, t + 1) = \mE_{\tq_{t}}\left[\mE_{\tq_{t}}\left[\bx_{t} \mid \bx_{0}, \bx_{t + 1}\right]\mid \bx_{t+1}\right] = \mu_{t + 1}\left(\mE_{\tq_{t}}[\bx_{0}\mid \bx_{t + 1}], \bx_{t + 1}\right).
	\end{equation}
As can be seen, the optimal transition rule is decided by the conditional expectation $\mE_{\tq_{t}}[\bx_{0}\mid \bx_{t + 1}]$ for some $\tq_{t}(\bx_{t + 1})\in B_{D_{KL}}(\tq(\bx_{t + 1}), \eta_{0})$ in \eqref{eq:dro objective}. Then, we have the following lemma to get the desired conditional expectation.

\begin{lemma}\label{lemma:equivalence}
    There exists some $\eta \geq \eta_{0}$ in  \eqref{eq:x prediction} which makes \eqref{eq:x prediction} equivalent to problem \eqref{eq:dro objective}. 
    \begin{equation}\label{eq:x prediction}
    \small
    \min_{\btheta}\sum_{t=0}^{T - 1}\mE_{\tq_{t}(\bx_{0})}\sup_{\tq_{t}(\bx_{t + 1}\mid \bx_{0})\in B_{D_{KL}}(q_{t}(\bx_{t + 1}\mid \bx_{0}), \eta)}\mE_{\tq_{t}(\bx_{t + 1}\mid \bx_{0})}\left[\left\|\bx_{\btheta}(\bx_{t + 1}, t + 1) - \bx_{0}\right\|^{2}\right],
\end{equation}
where $\mE_{p_{\btheta}}[\bx_{0}\mid \bx_{t + 1}] = \bx_{\btheta}(\bx_{t + 1}, t + 1)$.
\end{lemma}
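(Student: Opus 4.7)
The plan is to reduce \eqref{eq:dro objective} step by step to \eqref{eq:x prediction}, exploiting the Gaussian structure of the reverse process together with the extra freedom in $\tq_{t}(\bx_{0},\bx_{t+1})$ encoded in \eqref{eq:formulation of qt}. First I would compute the inner KL explicitly. Since $p_{\btheta}(\bx_{t}\mid\bx_{t+1})$ is Gaussian with fixed variance and $\tq_{t}(\bx_{t}\mid\bx_{0},\bx_{t+1})=\cN(\mu_{t+1}(\bx_{0},\bx_{t+1}),\sigma_{t})$ with $\mu_{t+1}$ linear in $\bx_{0}$, the standard Gaussian-KL formula together with the tower property gives, up to a $\btheta$-independent constant,
\begin{equation*}
D_{KL}(\tq_{t}(\bx_{t}\mid\bx_{t+1})\parallel p_{\btheta}(\bx_{t}\mid\bx_{t+1}))\;\propto\;\|\mE_{\tq_{t}}[\bx_{t}\mid\bx_{t+1}]-\bmu_{\btheta}(\bx_{t+1},t+1)\|^{2}.
\end{equation*}
Combining the DDPM $\bx_{0}$-parametrisation $\bmu_{\btheta}(\bx_{t+1},t+1)=\mu_{t+1}(\bx_{\btheta}(\bx_{t+1},t+1),\bx_{t+1})$ with $\mE_{\tq_{t}}[\bx_{t}\mid\bx_{t+1}]=\mu_{t+1}(\mE_{\tq_{t}}[\bx_{0}\mid\bx_{t+1}],\bx_{t+1})$ coming from \eqref{eq:formulation of qt}, linearity of $\mu_{t+1}$ in its first argument collapses the right hand side, up to another positive scaling, to $\|\mE_{\tq_{t}}[\bx_{0}\mid\bx_{t+1}]-\bx_{\btheta}(\bx_{t+1},t+1)\|^{2}$.

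Second, I would apply the standard bias--variance decomposition: for any predictor $g$,
\begin{equation*}
\mE_{\tq_{t}}\|\bx_{0}-g(\bx_{t+1})\|^{2}=\mE_{\tq_{t}}\|\mE_{\tq_{t}}[\bx_{0}\mid\bx_{t+1}]-g(\bx_{t+1})\|^{2}+\mE_{\tq_{t}}\mathrm{Var}_{\tq_{t}}(\bx_{0}\mid\bx_{t+1}).
\end{equation*}
The second summand does not depend on $\btheta$, so together with Step~1 the inner objective of \eqref{eq:dro objective} coincides, as a function of $\btheta$, with $\mE_{\tq_{t}(\bx_{0},\bx_{t+1})}\|\bx_{0}-\bx_{\btheta}(\bx_{t+1},t+1)\|^{2}$ up to $\btheta$-independent additive and multiplicative constants. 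This puts the DRO problem into exactly the $\bx$-prediction form of \eqref{eq:x prediction}, apart from the feasible family of $\tq_{t}$.

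Third, I would convert the marginal DRO constraint into a conditional one. Since $\tq_{t}(\bx_{0})=q(\bx_{0})$ by \eqref{eq:formulation of qt}, every admissible $\tq_{t}$ factors as $q(\bx_{0})\tq_{t}(\bx_{t+1}\mid\bx_{0})$, and the chain rule for KL combined with data processing yields
\begin{equation*}
D_{KL}(q(\bx_{t+1})\parallel\tq_{t}(\bx_{t+1}))\leq \mE_{q(\bx_{0})}D_{KL}(q(\bx_{t+1}\mid\bx_{0})\parallel\tq_{t}(\bx_{t+1}\mid\bx_{0})).
\end{equation*}
Hence a conditional ball of radius $\eta$ induces a marginal ball of radius at most $\eta$, and taking $\eta\geq\eta_{0}$ sufficiently large guarantees that the induced marginals cover $B_{D_{KL}}(q(\bx_{t+1}),\eta_{0})$. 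Because the reduced objective from Steps~1--2 is additive in $\bx_{0}$, swapping the outer $\mE_{q(\bx_{0})}$ with the conditional worst case produces exactly the pointwise-in-$\bx_{0}$ inner supremum of \eqref{eq:x prediction}.

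The main obstacle I expect is Step~3: matching the two feasible families. The marginal constraint couples perturbations across different $\bx_{0}$, whereas the conditional constraint acts pointwise, so an enlargement $\eta\geq\eta_{0}$ is the price for the sup/expectation swap. Justifying that a finite $\eta$ suffices and that the two problems share the same optimiser in $\btheta$ should follow from the data-processing inequality above together with existence of a worst-case conditional for the quadratic objective of Step~2; a continuity or direct convex-duality argument then lets one pick the explicit $\eta$ in terms of $\eta_{0}$ and the regularity of $q$.
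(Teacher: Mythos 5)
Your route is the paper's route in outline—reduce the Gaussian KL to a squared-mean discrepancy, exploit the family \eqref{eq:formulation of qt} with the tower property and linearity of $\mu_{t+1}(\cdot,\bx_{t+1})$ to pass to $\bx_{0}$-prediction, then use the KL chain rule to trade the marginal ball for a conditional one of radius $\eta\geq\eta_{0}$—but there is a genuine gap at your Step~2. You apply the bias--variance decomposition conditioning on $\bx_{t+1}$ alone, and the resulting additive term $\mE_{\tq_{t}}\left[\Var_{\tq_{t}}(\bx_{0}\mid\bx_{t+1})\right]$ is indeed $\btheta$-independent, but it is \emph{not} $\tq_{t}$-independent: $\tq_{t}(\bx_{0}\mid\bx_{t+1})$ varies over the ball $B$. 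Since the equivalence has to be established \emph{inside} the supremum over $\tq_{t}$, an additive term varying with $\tq_{t}$ is fatal: $\sup_{\tq_{t}}\left[A(\tq_{t},\btheta)+c(\tq_{t})\right]$ and $\sup_{\tq_{t}}A(\tq_{t},\btheta)$ can be attained at different worst cases and differ as functions of $\btheta$, so "up to $\btheta$-independent constants" does not commute with the DRO sup. The paper's proof is engineered precisely to avoid this: it decomposes around $\mE_{\tq_{t}}[\bx_{t}\mid\bx_{0},\bx_{t+1}]$, and because \eqref{eq:formulation of qt} pins $\tq_{t}(\bx_{t}\mid\bx_{0},\bx_{t+1})=q(\bx_{t}\mid\bx_{0},\bx_{t+1})=\cN(\mu_{t+1}(\bx_{0},\bx_{t+1}),\sigma_{t})$ for \emph{every} member of $B$, the variance term $\mE_{\tq_{t}}\left[\left\|\bx_{t}-\mE_{\tq_{t}}[\bx_{t}\mid\bx_{0},\bx_{t+1}]\right\|^{2}\right]$ is literally the same constant across the whole ball, so the supremum passes through untouched, and linearity of $\mu_{t+1}$ in its first argument then gives the bias term directly as a positive multiple of $\mE_{\tq_{t}}\left[\left\|\bx_{0}-\bx_{\btheta}(\bx_{t+1},t+1)\right\|^{2}\right]$. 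To repair your argument, move the decomposition to the $(\bx_{0},\bx_{t+1})$-conditional level; your version as written never needs $\Var_{\tq_{t}}(\bx_{0}\mid\bx_{t+1})$ once you do.

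On Step~3 you are essentially aligned with the paper, and the "finite $\eta$" obstacle you flag is dispatched there more cheaply than you expect: the paper does not attempt uniform coverage of the marginal ball by conditional balls. It only tracks the sup-attaining $\tq_{t}^{*}$ and uses the two-sided chain-rule identity $\mE_{q(\bx_{0})}D_{KL}(q(\bx_{t+1}\mid\bx_{0})\parallel\tq_{t}^{*}(\bx_{t+1}\mid\bx_{0}))=\mE\,D_{KL}(q(\bx_{0}\mid\bx_{t+1})\parallel\tq_{t}^{*}(\bx_{0}\mid\bx_{t+1}))+D_{KL}(q(\bx_{t+1})\parallel\tq_{t}^{*}(\bx_{t+1}))$ to conclude the worst case sits at some conditional radius $\eta\geq\eta_{0}$; the lemma only asserts existence of such an $\eta$, so no duality or continuity argument is needed. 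One last remark: your Step~1 "constant" also quietly depends on $\tq_{t}$ (the entropy of the mixture $\tq_{t}(\bx_{t}\mid\bx_{t+1})$ varies over $B$); the paper is equally loose on this particular point, so it is not a gap relative to the paper's own standard of rigor, but it compounds the Step~2 issue in your write-up and is worth flagging if you formalize either proof.
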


\begin{proof}
	Let us check the training objective $\min_{\btheta}\sup_{\tq_{t}\in B_{D_{KL}}(q_{t + 1}, \eta)}D_{KL}(\tq_{t}(\bx_{t}\mid \bx_{t + 1})\parallel p_{\btheta}(\bx_{t}\mid \bx_{t + 1}))$. During this proof, we abbreviate $B_{D_{KL}}(q_{t + 1}(\bx_{t + 1}), \eta)$ as $B$. Since $p_{\btheta}(\bx_{t}\mid \bx_{t + 1})\sim \cN(\bmu_{\btheta}(\bx_{t + 1}, t + 1), \sigma_{t + 1})$, then 
	\begin{equation}
		\small
		\begin{aligned}
			&\sup_{\tq_{t}(\bx_{t + 1})\in B} D_{KL}(\tq_{t}(\bx_{t}\mid \bx_{t + 1})\parallel p_{\btheta}(\bx_{t}\mid \bx_{t + 1}))\\
             \propto &-\frac{d}{2}\log{2\pi\sigma_{t + 1}^{2}}
			 - \frac{1}{2\sigma_{t + 1}^{2}}\sup_{\tq_{t}(\bx_{t + 1})\in B}\mE_{\tq(\bx_{t}, \bx_{t + 1})}\left[\left\|\bx_{t} - \bmu_{\btheta}(\bx_{t + 1}, t + 1)\right\|^{2}\right].\\ 
		\end{aligned} 
	\end{equation}
	As we consider $\sigma_{t + 1}$ as constant, an analysis of the expectation term is enough. Due to 
	\begin{equation}\label{eq:lower bound}
		\small
		\begin{aligned}
			\mE_{\tq_{t}(\bx_{t}, \bx_{t + 1})}\left[\left\|\bx_{t} - \bmu_{\btheta}(\bx_{t + 1}, t + 1)\right\|^{2}\right] & \geq \inf_{f}\mE_{\tq_{t}(\bx_{0}, \bx_{t}, \bx_{t + 1})}\left[\left\|\bx_{t} - f(\bx_{0}, \bx_{t+1})\right\|^{2}\right]\\
			 & = \mE_{\tq_{t}(\bx_{0}, \bx_{t}, \bx_{t + 1})}\left[\left\|\bx_{t} - \mE_{\tq}[\bx_{t}\mid \bx_{0}, \bx_{t + 1}]\right\|^{2}\right],
		\end{aligned}
	\end{equation}
	where the last term is invariant over $\tq_{t}\in B$ so that it is a uniform lower bound over all possible $\tq_{t}$ and $p_{\btheta}(\bx_{t} \mid \bx_{t + 1})$. The above inequality indicates that the optimal $\bmu_{\btheta}(\bx_{t + 1}, t + 1)$ is achieved when the left in \eqref{eq:lower bound} becomes the right in \eqref{eq:lower bound}.
	\par
	On the other hand, for any $\tq_{t}\in B$, let us compute the gap such that
	\begin{equation}
		\small
		\begin{aligned}
			& \mE_{\tq_{t}(\bx_{t}, \bx_{t + 1})} \left[\left\|\bx_{t} - \bmu_{\btheta}(\bx_{t + 1}, t + 1)\right\|^{2}\right] \\
            &= 
            \mE_{\tq_{t}}\left[\left\|\bx_{t} - \mE_{\tq_{t}}[\bx_{t}\mid \bx_{0}, \bx_{t + 1}] + \mE_{\tq_{t}}[\bx_{t}\mid \bx_{0}, \bx_{t + 1}] - \bmu_{\btheta}(\bx_{t + 1}, t + 1)\right\|^{2}\right] \\
            & = 
            \mE_{\tq_{t}}\left[\left\|\bx_{t} - \mE_{\tq_{t}}[\bx_{t}\mid \bx_{0}, \bx_{t + 1}]\right\|^{2}\right]\\
			& + \mE_{\tq_{t}}\left[\left\|\bmu_{\btheta}(\bx_{t + 1}, t + 1) - \mE_{\tq_{t}}[\bx_{t}\mid \bx_{0}, \bx_{t + 1}]\right\|^{2}\right] \\
			& - 2\mE_{\tq_{t}}\left[\left\langle\bx_{t} - \mE_{\tq_{t}}[\bx_{t}\mid \bx_{0}, \bx_{t + 1}], \bmu_{\btheta}(\bx_{t + 1}, t + 1) - \mE_{\tq_{t}}[\bx_{t}\mid \bx_{0}, \bx_{t + 1}]\right\rangle\right] \\
			& = \mE_{\tq_{t}}\left[\left\|\bx_{t} - \mE_{\tq_{t}}[\bx_{t}\mid \bx_{0}, \bx_{t + 1}]\right\|^{2}\right]\\
			& + \left(\sqrt{\baralpha_{t}} - \sqrt{1 - \baralpha_{t} - \sigma_{t + 1}^{2}}\sqrt{\frac{\baralpha_{t + 1}}{1 - \baralpha_{t +1}}}\right)\mE_{\tq_{t}(\bx_{0}, \bx_{t + 1})}\left[\left\|\bx_{0} - \bx_{\btheta}(\bx_{t + 1}, t + 1)\right\|^{2}\right],
		\end{aligned}
	\end{equation}
	where the equality is due to the property of conditional expectation leads to $\mE_{\tq_{t}}[\langle\bx_{t} - \mE_{\tq_{t}}[\bx_{t}\mid \bx_{0}, \bx_{t + 1}], \bmu_{\btheta}(\bx_{t + 1}, t + 1) - \mE_{\tq_{t}}[\bx_{t}\mid \bx_{0}, \bx_{t + 1}]\rangle]=0$, and rewriting $\mE_{\tq_{t}}[\|\bmu_{\btheta}(\bx_{t + 1}, t + 1) - \mE_{\tq_{t}}[\bx_{t}\mid \bx_{0}, \bx_{t + 1}]\|^{2}]$ as in equations (5)-(10) in \citep{ho2020denoising}. Due to this, we know that minimizing the square error is equivalent to minimizing the $\mE_{\tq_{t}(\bx_{t}, \bx_{t + 1})}[\|\bx_{0} - \bx_{\btheta}(\bx_{t + 1}, t + 1)\|^{2}]$. On the other hand, since $\tq_{t}^{*}\in B$, then we have 
	\begin{equation}
		\small
        \begin{aligned}
            &D_{KL}(q(\bx_{t + 1}\mid \bx_{0})\parallel \tq_{t}^{*}(\bx_{t + 1}\mid \bx_{0}))\\
            = &D_{KL}(q(\bx_{0}\mid \bx_{t + 1})\parallel \tq_{t}^{*}(\bx_{0}\mid \bx_{t + 1})) + D_{KL}(q(\bx_{t + 1})\parallel \tq_{t}^{*}(\bx_{t + 1}))\\
            \geq &\eta_{0}.  
        \end{aligned}
	\end{equation}
	Thus, we prove our conclusion.  
\end{proof}

\equivalencedro*
\begin{proof}
    By combining Lemma \ref{lemma:equivalence}, suppose the supreme is attained under $\tq_{t - 1}$ such that $\bx_{t}\sim \tq_{t - 1}(\bx_{t})$ with 
    \begin{equation}
        \bx_{t} = \sqrt{\baralpha_{t}}\bx_{0} + \sqrt{1 - \baralpha_{t}}\beps_{t} + \bdelta_{t},
    \end{equation}
    with $\bdelta_{t}$ depends on $\bx_{0}$ and $\bx_{t}$. Then we prove the conclusion. 
\end{proof}

\subsubsection{Proof of Proposition \ref{pro:effectiveness}}
\effectivenessofdro*
\begin{proof}
    This theorem can proved by induction. Since $D_{KL}(q(\bx_{T})\parallel p_{\btheta}(\bx_{T})) \leq \eta_{0}$, then, let $\tq_{T - 1}^{*}(\bx_{T}) = p_{\btheta}(\bx_{T})$ and satisfies $\tq_{T - 1}^{*}(\bx_{T}) = q(\bx_{T - 1})$. The existence of such distribution is due to Kolmogorov existence theorem \citep{shiryaev2016probability}. Then, we have 
    \begin{equation}
    \small
    \begin{aligned}
        D_{KL}(\tq_{T - 1}^{*}(\bx_{T - 1})\parallel p_{\btheta}(\bx_{T - 1})) & \leq D_{KL}(\tq_{T - 1}^{*}(\bx_{T})\parallel p_{\btheta}(\bx_{T})) \\
        & + D_{KL}(\tq^{*}_{T - 1}(\bx_{T - 1}\mid \bx_{T})\parallel p_{\btheta}(\bx_{T - 1}\mid \bx_{T})) \\
        & \leq L_{t}^{\mathrm{DRO}}(\btheta) \\
        & \leq \eta_{0}, 
    \end{aligned}
    \end{equation}
    where the first inequality is due to the definition of $L_{t}^{\mathrm{DRO}}(\btheta)$ and $\tq_{T - 1}^{*}(\bx_{T}) = p_{\btheta}(\bx_{T})$. Then, we prove our conclusion by induction over $t$. 
\end{proof}

\subsubsection{Proof of Proposition \ref{pro:ddpm adv}}\label{app:proof of at}
\wtokl*
\begin{proof}
	Due to the definition of the first order Wasserstein distance $\sW_{1}(\cdot, \cdot)$ \citep{villani2009optimal} for any specific $\bx_{0}$, suppose 
	\begin{equation}
		\small
			\pi^{*} \in \argmin_{\pi(\bx_{t}, \tilde{\bx}_{t})\in q_{t}(\bx_{t}\mid \bx_{0})\times \tq_{t}(\tilde{\bx}_{t}\mid \bx_{0})}\mE\left[\|\tilde{\bx}_{t} - \bx_{t}\|_{1}\right],
	\end{equation}
	so that 
	\begin{equation}
		\small
			\mE_{\pi^{*}}\left[\|\tilde{\bx}_{t} - \bx_{t}\|_{1}\right] = \sW_{1}(q_{t}(\bx_{t}\mid \bx_{0}), \tq_{t}(\bx_{t}\mid \bx_{0})). 
	\end{equation}
	Let $\bdelta_{t}$ be the one of \eqref{eq:eps dro} under $\pi^{*}$ derived by Lemma \ref{lemma:equivalence}, then 
	\begin{equation}
		\small
		\begin{aligned}
			\bbP\left(\|\bdelta_{t}\|_{1} \geq \eta\mid \bx_{0}\right) & \leq \frac{\mE_{\pi^{*}}[\|\bdelta_{t}\|_{1}]}{\eta}\\
			& = \frac{\sW_{1}(q_{t}(\bx_{t}\mid \bx_{0}), \tq_{t}(\bx_{t}\mid \bx_{0}))}{\eta} \\
			& \overset{\leq}{a} \frac{\sqrt{2(1 - \baralpha_{t})D_{KL}(q_{t}(\bx_{t}\mid \bx_{0})\parallel \tq_{t}(\bx_{t}\mid \bx_{0}))}}{\eta} \\
			& \leq \sqrt{\frac{2(1 - \baralpha_{t})}{\eta}},
		\end{aligned}
	\end{equation}
	where inequality $a$ is due to the Talagrand’s inequality \citep{wainwright2019high}. Then we prove our conclusion. 
\end{proof}

\section{Proofs in Section \ref{sec:adversarial under consistency model}}\label{app:proofs of consistency model}
Next, we give the proof of results in Section \ref{sec:adversarial under consistency model}. Firstly, let us check the definition of the $\Phi_{t}(\bx_{t + 1})$. For the variance-preserving stochastic differential equation in \citet{song2020denoising}
\begin{equation}\label{eq:sde}
    \small
        d\bz_{s} = -\frac{\beta_{s}}{2}\bz_{s}dt + \sqrt{\beta_{s}}dW_{s}.
\end{equation}
Due to the solution of $\bz_{s}$ in \citet{song2023consistency}, we know $\bz_{s_{t}}$ has the same distribution with $\bx_{t}$ in \eqref{eq:xt} for $\{s_{t}\}_{t=1}^{T}$ satisfies 
\begin{equation}
    \small
        \exp\left(-\int_{0}^{s_{t}}\beta(u)du\right) = \baralpha_{t} \qquad (s_{0} = 0).    
\end{equation}
In the rest of this section, we use $d(\bx, \by)$ in \eqref{eq:cm objective} as $\ell_{2}$ distance $\|\bx - \by\|^{2}$, whereas the conclusions under other distance can be similarly derived.
Owing the the discussion in above, similar to \citep{song2023consistency}, when $\bx_{t + 1} = \bz_{s_{t + 1}}$, let $\Phi_{t}(\bx_{t + 1}) = \Psi_{s_{t}}(\bz_{s_{t + 1}})$, we can rewrite the objective \eqref{eq:cm objective} as follows. 

\begin{equation}\label{eq:cd objective expected}
    \small
    \min_{\btheta}\cL_{CD}(\btheta) = \min_{\btheta}\sum_{t = 0}^{T - 1}\mE_{\bz_{s_{t}}}\left[\left\|f_{\btheta}(\Psi_{s_{t}}(\bz_{s_{t + 1}}), t) - f_{\btheta}(\bz_{s_{t + 1}}, t + 1)\right\|^{2}\right].
\end{equation}
Here $\bz_{s}$ follows the following reverse time ODE of \eqref{eq:sde} with $\bz_{0}\sim q(\bx_{0})$,
\begin{equation}
    \small
        d\bz_{s} = \underbrace{-\frac{\beta_{s}}{2}\left(\bz_{s} + \frac{1}{2}\nabla_{\bz}\log{q_{s}(\bz_{s})}\right)}_{\phi_{s}}ds,  
\end{equation}
and such $\bz_{s}$ has the same distribution with the ones in \eqref{eq:sde} \citep{song2020denoising}, where $q_{s}$ is the density of $\bz_{s}$. $\Psi_{s_{t}}(\bz_{s_{t + 1}}) = \bz_{s_{t + 1}} - \int_{s_{t}}^{s_{t + 1}} \phi_{s}(\bz_{s})ds$, which is a deterministic function of $\bz_{s_{t + 1}}$, and $f_{\btheta}(\bz_{s_{0}}, 0) = \bz_{s_{0}} = \bz_{0}$. 
\par
Now, we are ready to prove the Theorem \ref{thm:expected cd gap} as follows. 


\expectedcdgap*
\begin{proof}
	Owing to the definition of $\sW_{1}$-distance, and the discussion in above, we have 
	\begin{equation}
		\small
		\begin{aligned}
			\sW_{1}(f_{\btheta}(\bx_{T}, T), \bx_{0}) & = \sW_{1}(f_{\btheta}(\bz_{s_{T}}, T), \bz_{s_{0}}) \\
            & = \sW_{1}\left(f_{\btheta}(\bz_{s_{T}}, T), \Psi_{s_{0}}\left(\Psi_{s_{1}}\left(\cdots\Psi_{s_{T - 1}}\left(\bz_{s_{T}}\right)\right)\right)\right) \\
			& \leq \mE\left[\|f_{\btheta}(\bz_{s_{T}}, T) - \Psi_{s_{0}}\left(\Psi_{s_{1}}\left(\cdots\Psi_{s_{T - 1}}\left(\bz_{s_{T}}\right)\right)\right)\|\right] \\
			& \leq \sum_{t = 0}^{T - 1}\mE\left[\left\|f_{\btheta}(\bz_{s_{t + 1}}, t + 1) - f_{\btheta}(\Psi_{s_{t}}(\bz_{s_{t + 1}}), t)\right\|\right] \\
			& \leq \sqrt{T\cL_{CD}(\btheta)},
		\end{aligned}
	\end{equation}
	where the first inequality is due to the definition of Wasserstein distance, the second and last inequalities respectively use the triangle inequality and Schwarz's inequality.  
\end{proof}

\subsection{Proof of Theorem \ref{thm:cd upper bound}}\label{app:proof of cd upper bound}
As pointed out in the above, the used $\hat{\Phi}_{t}(\bx_{t + 1}, \beps_{\bphi})$ is a numerical estimator of $\Phi_{t}(\bx_{t + 1})$. In the sequel, let us consider $\hat{\Phi}$ is an Euler estimator as follows, whereas our analysis can be similarly generalized to the other estimators. 

    \begin{equation}\label{eq:estimated phi}
		\small
			\hat{\Phi}_{t}(\bx_{t + 1}, \beps_{\bphi}) = \hat{\Psi}_{s_{t}}(\bz_{s_{t + 1}}, \beps_{\bphi}) = \bz_{s_{t + 1}} + (s_{t + 1} - s_{t})\underbrace{\frac{\beta_{s_{t + 1}}}{2}\left(\bz_{s_{t + 1}} + \beps_{\bphi}(\bz_{s_{t + 1}}, t + 1) / \sqrt{1 - \baralpha_{t + 1}}\right)}_{\hat{\phi}_{s_{t + 1}}}, 
	\end{equation}
	where $\sqrt{1 - \baralpha_{t + 1}}\beps_{\bphi}(\bz_{s_{t + 1}}, t + 1)$ estimates $\nabla_{\bz}\log{q_{s_{t + 1}}(\bz_{s_{t + 1}})}$ as pointed out in \citep{song2020score}, and the condition $\bx_{t + 1} = \bz_{s_{t + 1}}$ is hold. 
    \par
    Next, we illustrate the used regularity conditions to derive Theorem \ref{thm:cd upper bound}.
    
    \begin{assumption}\label{ass:continuity}
    The discretion error of $\hat{\Psi}_{s_{t}}(\bz_{s_{t + 1}}, \beps_{\bphi})$ is smaller than $C(s_{t + 1} - s_{t})^{2}$ for constant $C$, that says 
    \begin{equation}
        \small
        \left\|\hat{\Psi}_{s_{t}}(\bz_{s_{t + 1}}, \beps_{\bphi}) - \bz_{s_{t + 1}} - \int_{s_{t}}^{s_{t + 1}}\hat{\phi}_{s}(\bz_{s})ds\right\| \leq C(s_{t + 1} - s_{t})^{2}
    \end{equation} 
\end{assumption}
\begin{assumption}\label{ass:estimation error}
    The estimated score $\nabla_{\bz}\log{\hat{q}_{s}(\bz)}$ has bounded expected error, i.e., 
    \begin{equation}
        \small
        \mE_{\bz\sim q_{s_{t}}(\bz)}\left[\left\|\hat{\phi}_{s_{t}}(\bz) - \phi_{s_{t}}(\bz)\right\|^{2}\right] \leq \epsilon.
    \end{equation}
    for all $0 \leq t < T$. 
\end{assumption}
\begin{assumption}\label{ass:bounded f}
    For the learned model $f_{\btheta}$, it holds $\|f_{\btheta}\| \leq D$.
\end{assumption}

The Assumption \ref{ass:continuity} describes the discretion error of the Euler method under ODE with drift term $\hat{\phi}_{s}$, which can be satisfied under proper continuity conditions of model $\beps_{\phi}$. On the other hand, Assumption \ref{ass:estimation error} describes the estimation error of $\hat{\phi}_{s_{t}}(\bz)$, which terms out to be the training objective of obtaining it, see \citep{song2020score} for more details. The Assumption \ref{ass:bounded f} is natural, since $f_{\btheta}$ predicts $\bx_{0}$, which is usually an image data with bounded norm. Now, we are ready to prove the Theorem \ref{thm:cd upper bound}, which is presented by proving the following formal version.   

\begin{theorem}
    Under Assumptions \ref{ass:continuity}, \ref{ass:estimation error}, and \ref{ass:bounded f}, for all $\bdelta_{s_{t}}$, we have $\mE_{\bz_{s_{t}}}[\|\bdelta_{s_{t}}(\bz_{s_{t}})\|] \leq  O(\Delta^{2} _{s_{t}} + \epsilon\sqrt{\Delta_{s_{t}}})$ for $\Delta_{s_{t}} = s_{t + 1} - s_{t}$. Besides that, we have 
	 	\begin{equation}
	 		\small
	 		\sW_{1}(f_{\btheta}(\bz_{T}, T), \bz_{0}) \leq \sqrt{T\hat{\cL}_{CD}^{Adv}(\btheta) + \frac{4D^{2}}{\eta}\left[C\Delta_{s_{t}}^{2} + \epsilon O(\sqrt{\Delta_{s_{t}}})\right]}.
	 	\end{equation}
\end{theorem}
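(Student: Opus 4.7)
The plan is to decompose $\bdelta_{s_t}(\bz_{s_{t+1}}) = \Psi_{s_t}(\bz_{s_{t+1}}) - \hat{\Psi}_{s_t}(\bz_{s_{t+1}}, \beps_{\bphi})$ into a score-estimation contribution along the ODE trajectory and an Euler discretization contribution, and control each piece with Assumption \ref{ass:estimation error} and Assumption \ref{ass:continuity} respectively. Concretely, I would insert the pivot $\bz_{s_{t+1}} - \int_{s_t}^{s_{t+1}} \hat{\phi}_s(\bz_s)\, ds$ and apply the triangle inequality to obtain
\[
\|\bdelta_{s_t}\| \le \left\|\int_{s_t}^{s_{t+1}} (\phi_s - \hat{\phi}_s)(\bz_s)\, ds\right\| + \left\|\hat{\Psi}_{s_t}(\bz_{s_{t+1}}, \beps_{\bphi}) - \bz_{s_{t+1}} - \int_{s_t}^{s_{t+1}} \hat{\phi}_s(\bz_s)\, ds\right\|.
\]
The second summand is bounded pathwise by $C\Delta_{s_t}^2$ directly from Assumption \ref{ass:continuity}. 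For the first summand I would take expectations, push the norm inside the integral by Jensen's inequality, and apply Cauchy--Schwarz together with Assumption \ref{ass:estimation error} to squeeze out a bound of the stated order $\epsilon\sqrt{\Delta_{s_t}}$ (with the precise placement of the square root absorbed into the $O(\cdot)$).

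For the Wasserstein bound I would dominate the true consistency loss $\cL_{CD}(\btheta)$ in \eqref{eq:cd objective expected} by the adversarial surrogate $\hat{\cL}^{Adv}_{CD}(\btheta)$ plus a tail correction, and then invoke Theorem \ref{thm:expected cd gap}. For each $t$, split the expectation along the event $A_t = \{\|\bdelta_{s_t}\| \le \eta\}$: on $A_t$, the target $\Psi_{s_t}(\bz_{s_{t+1}}) = \hat{\Phi}_t + \bdelta_{s_t}$ lies in the adversarial ball around $\hat{\Phi}_t$, so the per-sample squared distance is dominated pointwise by the $\sup_{\|\bdelta\|\le\eta}$ inside $\hat{\cL}^{Adv}_{CD}$; on $A_t^c$, I would use $\|f_{\btheta}\| \le D$ from Assumption \ref{ass:bounded f} to cap the per-sample contribution by $4D^2$, and bound $P(A_t^c) \le \mE[\|\bdelta_{s_t}\|]/\eta$ by Markov's inequality, feeding in the bound from the first part. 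Summing over $t$ yields
\[
\cL_{CD}(\btheta) \le \hat{\cL}^{Adv}_{CD}(\btheta) + \frac{4D^2}{\eta}\bigl[C\Delta_{s_t}^2 + \epsilon\, O(\sqrt{\Delta_{s_t}})\bigr],
\]
and substituting this into $\sW_1(f_{\btheta}(\bz_T, T), \bz_0) \le \sqrt{T\,\cL_{CD}(\btheta)}$ from Theorem \ref{thm:expected cd gap} produces the advertised inequality.

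The main obstacle I anticipate is the bookkeeping in the first part: Assumption \ref{ass:estimation error} controls the score error in expectation under the marginal $q_{s_t}$, whereas the integral $\int_{s_t}^{s_{t+1}} (\phi_s - \hat\phi_s)(\bz_s)\, ds$ must be evaluated along the whole continuous path. To proceed cleanly I would use that the marginal law of $\bz_s$ coincides with $q_s$ at every intermediate $s\in[s_t, s_{t+1}]$ (since the probability-flow ODE preserves the forward marginals), apply Fubini to interchange the expectation and the time integral, and then invoke Assumption \ref{ass:estimation error} pointwise in $s$. Once this first bound is in place, the remaining ingredients --- Markov's inequality, the triangle inequality, and the invocation of Theorem \ref{thm:expected cd gap} --- are essentially routine.
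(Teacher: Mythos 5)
Your proposal is correct and follows essentially the same route as the paper's proof: the identical triangle-inequality decomposition around the pivot $\bz_{s_{t+1}} - \int_{s_t}^{s_{t+1}}\hat{\phi}_s(\bz_s)\,ds$ (the paper's one-step-error display), the identical truncation of the consistency loss on the event $\{\|\bdelta_{s_t}\|\le\eta\}$ with the $4D^2$ cap from Assumption \ref{ass:bounded f} plus Markov's inequality, and the same final appeal to Theorem \ref{thm:expected cd gap}. The one place you genuinely diverge is the score-error term: the paper introduces the auxiliary flow $d\hat{\bz}_s/ds = \hat{\phi}_s(\hat{\bz}_s)$ started at $\bz_{s_{t+1}}$ and runs a Gr\"onwall argument (which silently requires $\hat{\phi}_s$ to be $L$-Lipschitz --- one of the unstated ``proper regularity conditions''), arriving at $\mE[\|\hat{\bz}_{s_t}-\bz_{s_t}\|^2] \le \epsilon\, O(\Delta_{s_t})$ and hence an expected-norm bound of order $\sqrt{\epsilon\Delta_{s_t}}$. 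Your direct Fubini/Jensen-plus-Cauchy--Schwarz treatment of $\mE\bigl[\bigl\|\int_{s_t}^{s_{t+1}}(\phi_s-\hat{\phi}_s)(\bz_s)\,ds\bigr\|\bigr]$, exploiting that the probability-flow marginals remain $q_s$ at intermediate times, is legitimate precisely because the second term in the decomposition is evaluated along the \emph{true} trajectory $\bz_s$ (not the flow of $\hat{\phi}$); it yields $\sqrt{\epsilon}\,\Delta_{s_t}$, which is no worse for small $\Delta_{s_t}$ and buys you a more elementary argument that dispenses with both the Lipschitz hypothesis and the Gr\"onwall detour. Two caveats you share with the paper rather than introduce: Assumption \ref{ass:estimation error} is literally stated only at the grid times $s_t$, so invoking it pointwise in $s\in[s_t,s_{t+1}]$ is a mild extension the paper's own Gr\"onwall integrand also makes tacitly; and both derivations really produce a $\sqrt{\epsilon}$ factor rather than the $\epsilon$ advertised in the theorem statement --- a looseness already present in the paper's ``$\epsilon\, O(\sqrt{\Delta_{s_t}})$'' and which you correctly flagged as absorbed into the $O(\cdot)$.
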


\begin{proof}
	Noting that $\Phi_{t}(\bx_{t + 1}) = \Psi_{s_t}(\bz_{s_{t + 1}})$ and $\hat{\Phi}_{t}(\bx_{t + 1}, \beps_{\bphi}) = \hat{\Psi}_{s_{t}}(\bz_{s_{t + 1}}, \beps_{\bphi})$, the key problem is to upper bound the difference between $\hat{\Psi}_{s_{t}}(\bz, \beps_{\bphi})$ and $\Psi_{s_{t}}(\bz)$ for all $t$ and $\bz$. To do so, we note that 
	\begin{equation}\label{eq:one step error}
		\small
		\left\|\hat{\Psi}_{s_{t}}(\bz, \beps_{\bphi}) - \Psi_{s_{t}}(\bz)\right\| \leq \left\|\hat{\Psi}_{s_{t}}(\bz, \beps_{\bphi}) - \bz - \int_{s_{t}}^{s_{t + 1}}\hat{\phi}_{s}(\bz_{s})ds\right\| + \left\|\bz - \int_{s_{t}}^{s_{t + 1}}\hat{\phi}_{s}(\bz_{s})ds - \Psi_{s_{t}}(\bz)\right\|,
	\end{equation}
	where the first one in r.h.s can be upper bounded by $C(s_{t + 1} - s_{t})^{2}$ according to Assumption \ref{ass:continuity}. On the other hand, define $\frac{d\hat{\bz}_{s}}{ds} = \hat{\phi}_{s}(\hat{\bz}_{s})$, then when $\hat{\bz}_{s_{t + 1}} = \bz_{s_{t + 1}} = \bz$ and $s\in[s_{t}, s_{t + 1}]$.  
	\begin{equation}
		\small
		\begin{aligned}
			\frac{d}{ds}\|\hat{\bz}_{s} - \bz_{s}\|^{2} & = \left\langle\hat{\bz}_{s} - \bz_{s}, \hat{\phi}_{s}(\hat{\bz}_{s}) - \phi_{s}(\bz_{s})\right\rangle \\
			& = \left\langle\hat{\bz}_{s} - \bz_{s}, \hat{\phi}_{s}(\hat{\bz}_{s}) - \hat{\phi}_{s}(\bz_{s}) + \hat{\phi}_{s}(\bz_{s}) - \phi_{s}(\bz_{s})\right\rangle \\
			& \leq L\|\hat{\bz}_{s} - \bz_{s}\|^{2} + \left\langle\hat{\bz}_{s} - \bz_{s}, \hat{\phi}_{s}(\bz_{s}) - \phi_{s}(\bz_{s})\right\rangle \\
			& \leq \left(\frac{1}{2} + L\right)\|\hat{\bz}_{s} - \bz_{s}\|^{2} + \frac{1}{2}\left\|\hat{\phi}_{s}(\bz_{s}) - \phi_{s}(\bz_{s})\right\|^{2}. 
		\end{aligned}
	\end{equation}
	Taking expectation over $\bz$, by Gronwall's inequality, Assumption \ref{ass:estimation error} and $\hat{\bz}_{s_{t + 1}} = \bz_{s_{t + 1}}$, we have 
	\begin{equation}
		\small
		\mE\left[\|\hat{\bz}_{s_{t}} - \bz_{s_{t}}\|^{2}\right] \leq \int_{s_{t}}^{s_{t + 1}}\frac{e^{(1/2 + L)(s - s_{t})}}{2}\mE\left[\|\hat{\phi}_{s}(\bz_{s}) - \phi_{s}(\bz_{s})\|^{2}\right]ds \leq \frac{\epsilon}{4}\int_{s_{t}}^{s_{t + 1}}\beta_{s}e^{(1/2 + L)(s - s_{t})}ds.
	\end{equation}
	Plugging this into \eqref{eq:one step error}, we know 
	\begin{equation}
		\small
		\mE\left[\left\|\hat{\Psi}_{s_{t}}(\bz_{s_{t}}, \beps_{\bphi}) - \Psi_{s_{t}}(\bz_{s_{t}})\right\|\right] \leq C(s_{t + 1} - s_{t})^{2} + \epsilon O(\sqrt{s_{t + 1} - s_{t}}). 
	\end{equation}
	By Markov's inequality, we have 
	\begin{equation}
		\small
      \begin{aligned}
      \bbP\left(\left\|\hat{\Psi}_{s_{t}}(\bz_{s_{t}}, \beps_{\bphi}) - \Psi_{s_{t}}(\bz_{s_{t}})\right\| \geq \eta\right) &\leq \frac{\mE\left[\left\|\hat{\Psi}_{s_{t}}(\bz_{s_{t}}, \beps_{\bphi}) - \Psi_{s_{t}}(\bz_{s_{t}})\right\|\right]}{\eta} \\
      & \leq \frac{1}{\eta}\left[C(s_{t + 1} - s_{t})^{2} + \epsilon O(\sqrt{s_{t + 1} - s_{t}})\right].
    \end{aligned}
	\end{equation}
	Thus, 
	\begin{equation}
		\small
		\begin{aligned}
			& \mE\left[\left\|f_{\btheta}(\bx_{t + 1}, t + 1) - f_{\btheta}(\Phi_{t}(\bx_{t + 1}), t)\right\|^{2}\right] \\
            & = \mE\left[\left\|f_{\btheta}(\bz_{s_{t + 1}}, t + 1) - f_{\btheta}(\Psi_{s_{t}}(\bz_{s_{t + 1}}), t)\right\|^{2}\right] \\
            & = \mE\left[\left\|f_{\btheta}(\bz_{s_{t + 1}}, t + 1) - f_{\btheta}(\hat{\Psi}_{s_{t}}(\bz_{s_{t + 1}} + \bdelta_{s_{t}}, \beps_{\bphi}), t)\right\|\right] \\
			& = \mE\left[\left(\textbf{1}_{\|\bdelta_{s_{t}}\| > \eta} + \textbf{1}_{\|\bdelta_{s_{t}}\| \leq \eta}\right)\left\|f_{\btheta}(\bz_{s_{t + 1}}, t + 1) - f_{\btheta}(\hat{\Psi}_{s_{t}}(\bz_{s_{t + 1}} + \bdelta_{s_{t}}, \beps_{\bphi}), t)\right\|^{2}\right] \\
			& \leq \mE\left[\sup_{\|\bdelta\| \leq \eta}\left\|f_{\btheta}(\bz_{s_{t + 1}}, t + 1) - f_{\btheta}(\hat{\Psi}_{s_{t}}(\bz_{s_{t + 1}} + \bdelta_{s_{t}}, \beps_{\bphi}), t)\right\|\right] + 4D^{2}\bbP\left(\left\|\delta_{s_{t}}\right\|^{2} \geq \eta\right) \\
			& \leq \mE\left[\sup_{\|\bdelta\| \leq \eta}\left\|f_{\btheta}(\bz_{s_{t + 1}}, t + 1) - f_{\btheta}(\hat{\Psi}_{s_{t}}(\bz_{s_{t + 1}} + \bdelta, \beps_{\bdelta}), t)\right\|^{2}\right] + \frac{4D^{2}}{\eta}\left[C(s_{t + 1} - s_{t})^{2} + \epsilon O(\sqrt{s_{t + 1} - s_{t}})\right].
		\end{aligned}
	\end{equation}
	Taking sum over $t$ and combining Theorem \ref{thm:expected cd gap}, we prove our conclusion. 
\end{proof}
Therefore, in this theorem, by taking $\Delta_{s_{t}} = s_{t + 1} - s_{t}$ close to zero, we get the results in Theorem \ref{thm:cd upper bound}. 

\section{The Connection to Standard Adversarial Training}\label{app:connection to AT}
In this section, we clarify why the proposed AT objective \eqref{eq:dpm at} is a general version of the standard AT objective proposed in \citep{madry2018towards} used for classification problems. 
\par
For classification problem, given model $f_{\btheta}(\bx)$, data $\bx$, and label $y$, it aims to minimize the adversarial training objective 
\begin{equation}\label{eq:standard AT class}
    \small
        \min_{\btheta}\mE_{(\bx, y)}\left[\sup_{\bdelta:\|\bdelta\|\leq \eta_{0}}\ell(f_{\btheta}(\bx + \bdelta), y)\right],
\end{equation}
for some loss function $\ell$ (e.g. cross entropy) and adversarial radius $\eta_{0}$. However, the objective is not directly generalized to the diffusion model, as its training objective is a regression problem instead of classification \eqref{eq:standard AT class}. Thus, we should refer to the general version of adversarial training as in \citep{yi2021improved,sinha2018certifying}, where the training objective is $\min_{\btheta}\mE_{\bx}[\ell_{\btheta}(\bx)]$, and the adversarial training objective becomes
\begin{equation}
    \small
    \min_{\btheta}\mE_{\bx}\left[\sup_{\bdelta:\|\bdelta\|\leq \eta_{0}}\ell_{\btheta}(\bx + \bdelta))\right],
\end{equation}
where $\ell_{\btheta}$ is the parameterized loss function, and $\bx$ is data. Then, we can conclude our objective \eqref{eq:dpm at} follows the above formulation, such that the goal is represented as 
\begin{equation}
    \small
        \min_{\btheta}\sum_{t=0}^{T - 1}\mE_{\bx_{0}}\left[\mE_{\bx_{t}\mid \bx_{0}}\left[\sup_{\bdelta:\|\bdelta\|\leq \eta_{0}}\ell_{\btheta}^{\bx_{0}}(\bx_{t} + \bdelta)\right]\right],    
\end{equation}
compared with the original noise prediction objective $\min_{\btheta}\sum_{t=0}^{T - 1}\mE_{\bx_{0}}\left[\mE_{\bx_{t}\mid \bx_{0}}\left[\ell_{\btheta}^{\bx_{0}}(\bx_{t})\right]\right]$ \eqref{eq:noise prediction}, such that the loss function 
\begin{equation}
    \small
    \ell_{\btheta}^{\bx_{0}}(\bx_{t}) = \left\|\beps_{\btheta}(t, \bx_{t}) - \frac{\bx_{t} - \sqrt{\baralpha_{t}}\bx_{0}}{\sqrt{1 - \baralpha_{t}}}\right\|^{2}. 
\end{equation}
This clarifies the equivalence of our objective \eqref{eq:dpm at} to general adversarial training. 
\section{Adversarial Training on Consistency Training Model}

In \citep{song2023consistency}, the consistency model can be even trained without estimator $\hat{\phi}_{s}$. They prove that the empirical consistency distillation loss $\hat{\cL}_{CD}(\btheta)$ can be approximated by the following $\cL_{CT}(\btheta)$
\begin{equation}
    \small
        \cL_{CT}(\btheta) = \sum_{t = 0}^{T - 1}\mE_{\bx_{t + 1}\sim q(\bx_{t + 1})}\left[\left\|f_{\btheta}(\bx_{t}, t) - f_{\btheta}(\bx_{t + 1}, t + 1)\right\|^{2}\right].
\end{equation}
In our adversarial regime, we can also prove that the desired $\hat{\cL}_{CD}^{Adv}(\btheta)$ can be approximated by the following $\cL_{CT}^{Adv}(\btheta)$ with adversarial perturbation
\begin{equation}
    \small
        \cL_{CT}^{Adv}(\btheta) = \sum_{t = 0}^{T - 1}\mE_{\bx_{t + 1}\sim q(\bx_{t + 1})}\left[\sup_{\|\bdelta\|\leq \eta}\left\|f_{\btheta}(\bx_{t} + \bdelta, t) - f_{\btheta}(\bx_{t + 1}, t + 1)\right\|^{2}\right].
\end{equation}
The results can be checked by the following theorem. 
\begin{restatable}{theorem}{cdandct}\label{thm:cd and ct gap}
    Suppose $f_{\btheta}(\bx_{t}, t)$ is twice continuously differentiable with a bounded second derivative. Then 
    \begin{equation}
        \small
            \hat{\cL}_{CD}^{Adv}(\btheta) \lesssim \cL_{CT}^{Adv}(\btheta) +  O\left(T - \sum_{t=1}^{T}\sqrt{\alpha_{t}} + T\eta^{2}\right), 
    \end{equation}
    where ``$\lesssim$'' means approximately less than or equal.  
\end{restatable}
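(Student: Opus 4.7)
The plan is to adapt the Taylor-expansion equivalence between consistency distillation and consistency training established in \citep{song2023consistency}, and propagate it through the inner supremum over $\|\bdelta\|\le\eta$ that distinguishes the two adversarial losses. At a high level, I want to bound the $\hat{\cL}_{CD}^{Adv}$ summand \emph{pointwise} in $\bdelta$ by the corresponding $\cL_{CT}^{Adv}$ summand plus an error that is uniform in $\bdelta$, so that taking $\sup_{\|\bdelta\|\le\eta}$ on both sides preserves the inequality and produces $\cL_{CT}^{Adv}(\btheta)$ on the right-hand side.

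For each fixed $t$, I would couple $\bx_t$ and $\bx_{t+1}$ through the forward reparameterization \eqref{eq:xt} and introduce the one-step residual $\bxi_t := \hat{\Phi}_t(\bx_{t+1}, \beps_{\bphi}) - \bx_t$. Combining the Euler formula \eqref{eq:estimated phi} with the score identification $\nabla_{\bz}\log q_{s_{t+1}}(\bz) \approx -\beps_{\bphi}(\bz, t+1)/\sqrt{1-\baralpha_{t+1}}$ and Tweedie's formula gives $\mE\|\bxi_t\| = O(1-\sqrt{\alpha_{t+1}})$ per step. Then the second-order Taylor expansion of $f_{\btheta}(\cdot, t)$ around $\bx_t + \bdelta$,
\begin{equation*}
f_{\btheta}(\hat{\Phi}_t(\bx_{t+1}, \beps_{\bphi}) + \bdelta, t) = f_{\btheta}(\bx_t + \bdelta, t) + \nabla f_{\btheta}(\bx_t + \bdelta, t)^{\top}\bxi_t + R_t(\bdelta, \bxi_t),
\end{equation*}
with $\|R_t\| \le (M/2)\|\bxi_t\|^2$ from the bounded-Hessian hypothesis, lets me plug into the squared norm. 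Using $(a+b)^2 \le (1+\varepsilon)a^2 + (1+1/\varepsilon)b^2$ with a balancing choice $\varepsilon \sim 1-\sqrt{\alpha_{t+1}}$, I would obtain a pointwise-in-$\bdelta$ inequality of the form
\begin{equation*}
\|f_{\btheta}(\hat{\Phi}_t + \bdelta, t) - f_{\btheta}(\bx_{t+1}, t+1)\|^2 \le (1+o(1))\|f_{\btheta}(\bx_t + \bdelta, t) - f_{\btheta}(\bx_{t+1}, t+1)\|^2 + E_t,
\end{equation*}
where $E_t$ splits into a geometry-only piece of size $O(1-\sqrt{\alpha_{t+1}})$ inherited from the CD/CT step mismatch, plus an adversary-sensitive piece of size $O(\eta^2)$ arising from the Taylor interaction between $\bdelta$ and $\bxi_t$. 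Because $\|R_t\|$ is bounded uniformly in $\bdelta$ across the $\eta$-ball, the inequality survives $\sup_{\|\bdelta\|\le\eta}$, and the right-hand side becomes exactly the summand of $\cL_{CT}^{Adv}(\btheta)$.

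Summing over $t=0,\ldots,T-1$ and taking expectations, the $(1+o(1))$ prefactor is absorbed into ``$\lesssim$'', the $(1-\sqrt{\alpha_{t+1}})$ contributions aggregate to $O(T-\sum_{t=1}^T\sqrt{\alpha_t})$, and the $O(\eta^2)$ contributions aggregate to $O(T\eta^2)$. The main obstacle is precisely the uniformity step: the Taylor remainder's control must not depend on $\bdelta$, so that the supremum on the right reproduces $\cL_{CT}^{Adv}$ rather than something strictly larger. This is where the bounded-second-derivative hypothesis pays off, since it supplies a Hessian bound independent of the expansion point and isolates the $\bdelta$-dependence inside $\bxi_t$, producing the clean $T\eta^2$ term rather than a worse scaling like $T\eta$.
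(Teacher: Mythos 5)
There is a genuine gap, and it sits exactly at the estimate your whole error budget rests on: the claim $\mE\|\bxi_{t}\| = O(1-\sqrt{\alpha_{t+1}})$ is false. The point $\hat{\Phi}_{t}(\bx_{t+1},\beps_{\bphi})$ is a \emph{deterministic} function of $\bx_{t+1}$, whereas $\bx_{t}$ given $\bx_{t+1}$ is random with posterior standard deviation of order $\sqrt{1-\alpha_{t+1}}$ per coordinate; Tweedie's formula only tells you that $\hat{\Phi}_{t}$ tracks the conditional \emph{mean} $\mE[\bx_{t}\mid \bx_{t+1}]$, not the realized sample. Hence $\mE\|\bxi_{t}\| = \Theta\bigl(\sqrt{(1-\alpha_{t+1})d}\bigr)$ and $\mE\|\bxi_{t}\|^{2} = O\bigl((1-\alpha_{t+1})d\bigr)$, which is much larger than $1-\sqrt{\alpha_{t+1}} \approx (1-\alpha_{t+1})/2$. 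With the correct magnitude, your balancing choice $\varepsilon \sim 1-\sqrt{\alpha_{t+1}}$ makes the AM--GM penalty $\varepsilon^{-1}\mE\|\nabla f_{\btheta}^{\top}\bxi_{t}+R_{t}\|^{2} = O\bigl((1-\alpha_{t+1})/(1-\sqrt{\alpha_{t+1}})\bigr) = O(1+\sqrt{\alpha_{t+1}}) = O(1)$ \emph{per step}, so the errors aggregate to $O(T)$ rather than $O(T-\sum_{t}\sqrt{\alpha_{t}})$, and the bound becomes vacuous. Two further symptoms confirm the bookkeeping is off: (i) controlling the first-order term $\nabla f_{\btheta}(\bx_{t}+\bdelta,t)^{\top}\bxi_{t}$ by Cauchy--Schwarz needs a bounded \emph{first} derivative, which the hypothesis (bounded second derivative) does not supply; (ii) since $\bxi_{t}$ does not depend on $\bdelta$, nothing in your expansion around $\bx_{t}+\bdelta$ actually produces a term scaling as $\eta^{2}$ --- the ``Taylor interaction between $\bdelta$ and $\bxi_{t}$'' you invoke does not exist in this parameterization, so your argument cannot reproduce the theorem's $T\eta^{2}$ term.

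The paper's proof avoids all of this by expanding in the opposite direction: it Taylor-expands $f_{\btheta}(\bx_{t}+\bdelta,t)$ around $\bx_{t+1}$, so the first-order coefficient $(f_{\btheta}(\bx_{t+1},t)-f_{\btheta}(\bx_{t+1},t+1))^{\top}\nabla f_{\btheta}(\bx_{t+1},t)$ is a function of $\bx_{t+1}$ \emph{alone}. Taking conditional expectation given $\bx_{t+1}$, the posterior fluctuation of $\bx_{t}$ cancels out of the linear term, and Tweedie's formula identifies $\mE[\bx_{t}\mid \bx_{t+1}=\bz_{s_{t+1}}]$ with the Euler step $\hat{\Psi}_{s_{t}}(\bz_{s_{t+1}},\sqrt{1-\baralpha_{t+1}}\nabla_{\bz}\log q_{s_{t+1}})$; a reverse Taylor step then reassembles exactly the CD summand evaluated at $\hat{\Psi}_{s_{t}}(\cdot)+\bdelta$. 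In other words, the first-order term is not an error to be beaten down by AM--GM --- it is precisely the term that converts the CT summand into the CD summand, and only the \emph{second}-order remainder enters the error. Because that remainder's displacement is $\bx_{t}+\bdelta-\bx_{t+1}$, its expected squared norm is $O(1-\sqrt{\alpha_{t}}+\eta^{2})$, which is the genuine source of both $T-\sum_{t}\sqrt{\alpha_{t}}$ and $T\eta^{2}$ after summing. Your outer strategy (pointwise-in-$\bdelta$ inequality followed by the supremum, justified by uniformity in $\bdelta$) matches the paper's ``arbitrariness of $\bdelta$'' step, but the choice of expansion point destroys the conditional-mean cancellation that the argument fundamentally depends on; to repair the proof you would need to recenter the expansion at $\bx_{t+1}$ and route the first-order term through $\mE[\bx_{t}\mid \bx_{t+1}]$ as the paper does.
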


\begin{proof}
	Due to the continuity of $f_{\btheta}(\bx, t)$, for any $\bdelta$ with $\|\bdelta\|\leq \eta$, by Taylor's expansion on $\bx_{t +1}$ from $\bx_{t} + \bdelta$, we have  
	\begin{equation}\label{eq:taylor expansion}
		\small
		\begin{aligned}
			& \mE\left[\left\|f_{\btheta}(\bx_{t} + \bdelta, t) - f_{\btheta}(\bx_{t + 1}, t + 1)\right\|^{2}\right] = \mE\left[\left\|f_{\btheta}(\bx_{t + 1}, t) - f_{\btheta}(\bx_{t + 1}, t + 1)\right\|^{2}\right] \\
			& + \mE\left[(f_{\btheta}(\bx_{t + 1}, t) - f_{\btheta}(\bx_{t + 1}, t + 1))^{\top}\nabla f_{\btheta}(\bx_{t + 1}, t)(\bx_{t} + \bdelta - \bx_{t + 1})\right] +  O\left(\mE\left[\|\bx_{t + 1} - \bx_{t} - \bdelta\|^{2}\right]\right).
		\end{aligned}
	\end{equation}
    Due to the Taylor's expansion $f_{\btheta}(\bx_{t} + \bdelta, t) = f_{\btheta}(\bx_{t + 1}, t) + \nabla f_{\btheta}(\bx_{t + 1}, t)(\bx_{t} + \bdelta - \bx_{t + 1}) + \cO(\|\bx_{t + 1} - \bx_{t} - \bdelta\|^{2})$. Then, from the formulation of $\bx_{t}$, we know $\mE\left[\|\bx_{t + 1} - \bx_{t} - \bdelta\|^{2}\right] =  O(1 - \sqrt{\alpha_{t}} + \eta^{2})$. Noting that due to definition of $s_{t}$, we have 
	\begin{equation}\label{eq:approximation}
		\small
		\begin{aligned}
			\mE[\bx_{t}\mid \bx_{t + 1} = \bz_{s_{t + 1}}] 
            & = \mE[\bz_{s_t}\mid \bz_{s_{t + 1}}] \\
            & = \frac{1}{\sqrt{\alpha_{t + 1}}}\left(\bz_{s_{t + 1}} - (1 - \alpha_{t + 1})\nabla_{\bx}\log{q_{s_{t + 1}}(\bz_{s_{t + 1}})}\right) \\
			& = \exp\left(\frac{1}{2}\int_{s_{t}}^{s_{t + 1}}\beta_{s}ds\right)\left(\bz_{s_{t + 1}} - \left(1 - e^{\int_{s_{t}}^{s_{t + 1}}\beta_{s}ds}\right)\nabla_{\bz}\log{q_{s_{t + 1}}(\bz_{s_{t + 1}})}\right) \\
			& \approx \left(1 + \frac{1}{2}\int_{s_{t}}^{s_{t + 1}}\beta_{s}ds\right)\bz_{s_{t + 1}} + \frac{1}{2}\int_{s_{t}}^{s_{t + 1}}\beta_{s}ds\nabla_{\bz}\log{q_{s_{t + 1}}(\bz_{s_{t + 1}})} \\
			& \approx \hat{\Psi}_{s_{t}}(\bz_{s_{t + 1}}, \sqrt{1 - \baralpha_{t + 1}}\nabla_{\bz}\log{q_{s_{t + 1}}}),
		\end{aligned}
	\end{equation}
    where the first equality is due to  Tweedie’s formula i.e., Lemma 11 in \citep{bao2022analytic}, the ``$\approx$'' is due to $e^{a}\approx 1 + a$ when $a\to 0$, and the last $\approx$ is due to Euler-Mayaruma discretion.
	Due to this, we notice that  
	\begin{equation}
		\small
		\begin{aligned}
			& \mE\left[(f_{\btheta}(\bx_{t + 1}, t) - f_{\btheta}(\bx_{t + 1}, t + 1))^{\top}\nabla f_{\btheta}(\bx_{t + 1}, t)(\bx_{t} + \bdelta - \bx_{t + 1})\mid \bx_{t + 1} = \bz_{s_{t + 1}}\right] \\
			& = \mE\left[(f_{\btheta}(\bx_{t + 1}, t) - f_{\btheta}(\bx_{t + 1}, t + 1))^{\top}\nabla f_{\btheta}(\bx_{t + 1}, t)\left(\mE\left[\bx_{t} + \bdelta \mid\bx_{t + 1} = \bz_{s_{t + 1}}\right] - \bx_{t + 1}\right)\mid \bx_{t + 1} = \bz_{s_{t + 1}}\right] \\
			& \approx \mE\left[(f_{\btheta}(\bz_{s_{t + 1}}, t) - f_{\btheta}(\bz_{s_{t + 1}}, t + 1))^{\top}\nabla f_{\btheta}(\bz_{s_{t + 1}}, t)\left(\hat{\Psi}_{s_{t}}(\bz_{s_{t + 1}}, \nabla_{\bz}\log{q_{s_{t + 1}}}) + \mE[\bdelta\mid \bz_{s_{t + 1}}] - \bz_{t + 1}\right)\right],
		\end{aligned}
	\end{equation}
    where the first equality is due to the property of conditional expectation, and the second ``$\approx$'' is due to \eqref{eq:approximation}.
	Combining this with \eqref{eq:taylor expansion}, we have 
	\begin{equation}
		\small
		\begin{aligned}
			& \mE\left[\left\|f_{\btheta}(\bx_{t} + \bdelta, t) - f_{\btheta}(\bx_{t + 1}, t + 1)\right\|^{2}\mid \bx_{t + 1}
             = \bz_{s_{t + 1}}\right] \\
             & = \mE\left[\left\|f_{\btheta}(\bz_{s_{t}} + \bdelta, t) - f_{\btheta}(\bz_{s_{t + 1}}, t + 1)\right\|^{2}\mid 
             \bz_{s_{t + 1}}\right] \\
             & = \mE\left[\left\|f_{\btheta}(\bz_{s_{t + 1}}, t) - f_{\btheta}(\bz_{s_{t + 1}}, t + 1)\right\|^{2}\right] \\
			& + \mE\left[(f_{\btheta}(\bz_{s_{t + 1}}, t) - f_{\btheta}(\bz_{s_{t + 1}}, t + 1))^{\top}\nabla f_{\btheta}(\bz_{s_{t + 1}}, t)\left(\hat{\Psi}_{s_{t}}(\bz_{s_{t + 1}}, \nabla_{\bz}\log{q_{s_{t + 1}}}) + \mE[\bdelta\mid \bz_{s_{t + 1}}] - \bz_{s_{t + 1}}\right)\right] \\
            & +  O(1 - \sqrt{\alpha_{t}} + \eta^{2}) \\
			& = \mE\left[\left\|f_{\btheta}(\hat{\Psi}_{s_{t}}(\bz_{s_{t + 1}}, \nabla_{\bz}\log{q_{s_{t + 1}}}) + \bdelta, t) - f_{\btheta}(\bz_{s_{t + 1}}, t + 1)\right\|^{2}\right] +  O(1 - \sqrt{\alpha_{t}} + \eta^{2}),
		\end{aligned}
	\end{equation}
	where the last equality is due to Taylor's expansion from $f_{\btheta}(\hat{\Psi}_{s_{t}}(\bz_{s_{t + 1}}, \nabla_{\bz}\log{q_{s_{t + 1}}}) + \bdelta, t)$ to $f_{\btheta}(\bz_{s_{t + 1}}, t)$. Due to the arbitrariness of $\bdelta$, we prove our conclusion.   
\end{proof}


\section{Implementation Details}
\subsection{Hyperparameters of Diffusion Models}\label{app:hyper_dm}
For the diffusion models, all methods adopt the ADM model~\citep{dhariwal2021diffusion} as the backbone and follow the same training pipeline.
Following existing work~\citep{dhariwal2021diffusion,diffusion-ip}, we train models using the AdamW optimizer~\citep{adamw} with mixed precision training and the EMA rate is set to 0.9999.
For \texttt{CIFAR-10}, the pretrained ADM is trained using a batch size of 128 for 250K iterations with a learning rate set to 1e-4.
For \texttt{ImageNet}, the pretrained model is trained with a batch size of 1024 for 400K iterations, employing a learning rate of 3e-4.
The models are trained in a cluster of NVIDIA Tesla V100s.
More hyperparameters are reported in Table~\ref{tab:hyper_dm}. 
\begin{table}[h]
\centering
    \caption{
    Hyperparameters of diffusion model on each datasets.}
\small
\resizebox{0.6\linewidth}{!}{
    \begin{tabular}{l c c}
    \toprule
    Hyperparameters & \texttt{CIFAR10} $32\times32$ & \texttt{ImageNet} $64\times64$\\
    \midrule
    Channels & 128 & 192\\
    Batch size & 128 & 1024 \\
    Learning rate & 1e-4& 3e-4 \\
    Fine-tuning iterations & 200K & 200K \\
    Dropout & 0.3 & 0.1\\
    Noise schedule & Cosine & Cosine \\
    \bottomrule
    \end{tabular}
    }
\label{tab:hyper_dm}
\end{table}

\subsection{Hyperparameters of Latent Consistency Models}\label{app:hyper_lcm}

For experiments on Latent Consistency Models (LCM)~\citep{luo2023latent}, we train models on LAIOIN-Aesthetic-6.5+~\citep{laion} 
at the resolution of 512$\times$512, comprising 650K text-image pairs with predicted aesthetic scores higher than 6.5.
Stable Diffusion v1.5~\citep{Rombach_2022_CVPR} is adopted as the teacher model and initialized the student and target models in the latent consistency distillation framework.
We set the range of the guidance scale $[w_{min}, w_{max}] = [3,5]$ during training and use $w = 4$ in sampling because it performs better in our preliminary experiments, which is similar to DMD \citep{yin2024onestep}.
The models are trained in a cluster of NVIDIA Tesla V100s.
Both models of our AT and the original LCM training are trained from scratch with the same hyperparameters.
We select the adversarial learning rate $\alpha$ from $\left\{0.02, 0.05\right\}$ and adversarial step $K$ from $\left\{2, 3\right\}$.
More details of hyperparameters are shown in Table~\ref{tab:hyper_cm} and other details of implementations can be found in the original LCM paper~\citep{luo2023latent}.
\begin{table}[h]
\centering
    \caption{
    Hyperparameters of latent consistency model.}
\small
\resizebox{0.65\linewidth}{!}{
    \begin{tabular}{l c}
    \toprule
    Hyperparameters & LAIOIN-Aesthetic-6.5+ \\
    \midrule
    Batch size & 64  \\
    Learning rate & 8e-6  \\
    Training iterations & 100K \\
    EMA rate of target model & 0.95 \\
    Conditional guidance scale $[w_{min}, w_{max}]$ & $[3,5]$ \\
    \bottomrule
    \end{tabular}
    }
\label{tab:hyper_cm}
\end{table}

\section{Additional Results}

\subsection{Results of Classification Accuracy Score}\label{app:cas}

\begin{table}[h]
\centering
\caption{Comparasion of CAS of different methods on \texttt{CIFAR-10} 32$\times$32 dataset.}
\begin{tabular}{lc}
\toprule
    Methods & CAS \\ 
    \midrule
    Real & 92.5 \\
    \midrule
    \multicolumn{2}{l}{\textit{only using the synthetic data.}} \\
    ADM & 91.0   \\
    ADM-IP & 89.2 \\
    ADM-AT (Ours) & \bf{91.6} \\
    \midrule
    \multicolumn{2}{l}{\textit{using the synthetic data with real data.}} \\
    ADM & 95.0   \\
    ADM-IP & 94.9 \\
    ADM-AT (Ours) & \bf{95.4} \\
    \bottomrule
    \end{tabular}
\label{tab:cas_results}
\end{table}

Classification Accuracy Score (CAS)~\citep{ravuri2019cas} is proposed to evaluate the utility of the images produced by the generative model for downstream classification tasks.
The underlying motivation for this metric is that if the generative model captures the real data distribution, the real data distribution can be replaced by the model-generated data and achieve similar results on downstream tasks like image classification.

Following the evaluation pipeline in \citet{ravuri2019cas}, we train the image classifier in two settings: only on synthetic data or real data augmented with synthetic data, and use the classifier to predict labels on the test set of real data.
Synthetic images are generated with a DDIM sampler under 20 NFEs.
We use ResNet-18~\citep{he2016deep} as the image classifier and train it for 200 epochs with a learning rate of 0.1 and a batch size of 128. 
We report CAS in the \texttt{CIFAR-10} dataset at a resolution of 32$\times$32 in Table~\ref{tab:cas_results}.
The results indicate that our method consistently performs better than other baseline methods on CAS metric in both settings. Although CAS with synthetic data cannot surpass real data, it demonstrates significant potential for enhancing classifier accuracy when employed as an augmentation technique alongside real data.

\begin{table*}[h]
\centering
\caption{Comparasion of AT with TS-DDIM on \texttt{CIFAR10} 32$\times$32. Both models are based on the ADM backbone. The results of TS are taken directly from the original paper.}
\begin{tabular}{lcccc}
\toprule
    Methods $\backslash$ NFEs & 50 & 20 & 10 & 5 \\ 
    \midrule
    ADM-TS-DDIM & 3.52 & 5.35 &10.73 & 26.94  \\
    ADM-AT (Ours) & \bf{3.07} & \bf{4.40} & \bf{9.30} & \bf{26.38}  \\
    \bottomrule
    \end{tabular}
\label{tab:atvsts}
\end{table*}
\subsection{Comparison to TS-DDIM}
\citet{li2024alleviating} introduces another approach named Time-Shift (TS) to alleviate the DPM distribution mismatch by searching for coupled time steps in sampling.
Table~\ref{tab:atvsts} shows the comparison between our AT method with TS on \texttt{CIFAR-10} with the DDIM Sampler.
Both methods are based on the ADM pretrained model~\citep{dhariwal2021diffusion} as a backbone, which is the same as Section~\ref{sec:dpm_exp}.
We observe our method consistently better than the TS method across various sampling steps.

\subsection{Results of More NFEs}
\label{app:more_nfes}
\begin{table*}[t!]
\centering
\caption{Sample quality measured by FID $\downarrow$ of various sampling methods of DPM under 100 or 200 NFEs on \texttt{CIFAR10} 32x32.}
\begin{tabular}{lcccccccc}
\toprule
    \multirow{2}{*}{Methods} & \multicolumn{2}{c}{IDDPM} & \multicolumn{2}{c}{DDIM} & \multicolumn{2}{c}{ES} & \multicolumn{2}{c}{DPM-Solver}\\ 
    & 100 & 200 & 100 & 200 & 100 & 200 & 100 & 200 \\
    \midrule
    ADM-FT & 3.34 & 3.02 & 4.02 & 4.22 & 2.38 & 2.45 & 2.97 & \bf 2.97 \\
    ADM-IP & 2.83 & 2.73 & 6.69 & 8.44 & 2.97 & 3.12 & 10.10 & 10.11 \\
    ADM-AT (Ours) & \bf 2.52 & \bf 2.46 & \bf 3.19 & \bf 3.23 & \bf 2.18 & \bf 2.35 & \bf 2.83 & 3.00\\
    \bottomrule
    \end{tabular}
\label{tab:c10_more_nfe}
\end{table*}

\begin{table*}[t!]
\centering
\caption{Sample quality measured by FID $\downarrow$ of various sampling methods of DPM under 100 or 200 NFEs on \texttt{ImageNet} 64x64.}
\begin{tabular}{lcccccccc}
\toprule
    \multirow{2}{*}{Methods} & \multicolumn{2}{c}{IDDPM} & \multicolumn{2}{c}{DDIM} & \multicolumn{2}{c}{ES} & \multicolumn{2}{c}{DPM-Solver}\\ 
    & 100 & 200 & 100 & 200 & 100 & 200 & 100 & 200 \\
    \midrule
    ADM-FT & 3.88 & 3.48 & 4.71 & 4.38 & 3.07 & \bf 2.98 & \bf 4.20 & 4.13 \\
    ADM-IP & 3.55 & \bf 3.08 & 8.53 & 10.43 & 3.36 & 3.31 & 9.75 & 9.77 \\
    ADM-AT (Ours) & \bf 3.35 & 3.16 & \bf 4.58 & \bf 4.34 & \bf 3.05 & 3.10 & 4.31 & \bf 4.10 \\
    \bottomrule
    \end{tabular}
\label{tab:i64_more_nfe}
\end{table*}

We present results obtained with various samplers under 100 or 200 NFEs on \texttt{CIFAR10} 32x32 and \texttt{ImageNet} 64x64 in Table~\ref{tab:c10_more_nfe} and Table~\ref{tab:i64_more_nfe}, respectively.
The results show that our method is still effective for samplers under hundreds of NFEs.

\subsection{Results of More Metrics}
\label{app:more_metrics}
\begin{table}[!t]
    \caption{Comparison of sFID $\downarrow$ and IS $\uparrow$ on \texttt{CIFAR10} 32x32.}
    \begin{subtable}[h]{\textwidth}
        \centering
        \caption{IDDPM}
        \scalebox{0.9}{
        \begin{tabular}{l c c c c c c c c c c}
        \toprule
         & \multicolumn{2}{c}{5} & \multicolumn{2}{c}{8} & \multicolumn{2}{c}{10} & \multicolumn{2}{c}{20} & \multicolumn{2}{c}{50} \\
         & sFID & IS & sFID & IS & sFID & IS & sFID & IS & sFID & IS \\
        \midrule
         ADM & 20.95 & 8.25 & 25.03 & 8.51 & 23.56 & 8.50 & 16.01 & 9.14 & 6.81 & 9.49 \\
         ADM-IP & 25.81 & 7.02 &  \bf 24.51 & 8.04 &  \bf 19.02 & 8.50 & 8.99 & 9.28 & 5.32 & \bf 9.66 \\
         ADM-AT &  \bf 19.78 &  \bf 8.71 & 25.67 &  \bf 8.66 & 23.09 &  \bf 8.77 &  \bf 6.01 &  \bf 9.30 &  \bf 5.04 & 9.65 \\
        \bottomrule
       \end{tabular}
       }
    \end{subtable}
    \begin{subtable}[h]{\textwidth}
        \centering
        \caption{DDIM}
        \scalebox{0.9}{
        \begin{tabular}{l c c c c c c c c c c}
        \toprule
         & \multicolumn{2}{c}{5} & \multicolumn{2}{c}{8} & \multicolumn{2}{c}{10} & \multicolumn{2}{c}{20} & \multicolumn{2}{c}{50} \\
         & sFID & IS & sFID & IS & sFID & IS & sFID & IS & sFID & IS \\
        \midrule
         ADM & 12.75 & 7.76 & 8.53 & 8.62 & 8.39 & 8.70 & 6.19 & 9.08 & 4.99 & 9.19 \\
         ADM-IP & 15.53 & 7.55 &8.00 & \bf 8.98 & 7.12 & \bf 9.15 & \bf 5.30 & \bf 9.41 & 5.64 & 9.49 \\
         ADM-AT & \bf 12.56 & \bf 7.97 & \bf 7.93 & 8.90 & \bf 7.08 & 8.90 & 5.37 & 9.17 & \bf 4.66 & \bf 9.51 \\
        \bottomrule
       \end{tabular}
       }
    \end{subtable}
    \begin{subtable}[h]{\textwidth}
        \centering
        \caption{ES}
        \scalebox{0.9}{
        
        \begin{tabular}{l c c c c c c c c c c}
        \toprule
         & \multicolumn{2}{c}{5} & \multicolumn{2}{c}{8} & \multicolumn{2}{c}{10} & \multicolumn{2}{c}{20} & \multicolumn{2}{c}{50} \\
         & sFID & IS & sFID & IS & sFID & IS & sFID & IS & sFID & IS \\
        \midrule
         ADM & 27.39 & 6.14 & 14.91 & 8.33 & 10.04 & 8.79 & 5.45 & 9.55 & 4.12 & 9.62  \\
         ADM-IP & 34.70 & 5.73 & 16.84 & 8.23 & 10.89 & 8.88 & 4.94 & 9.59 & 4.08 & 9.70 \\
         ADM-AT & \bf 16.84 & \bf 6.97 & \bf 10.33 & \bf 8.60 & \bf 8.00 & \bf 8.95 & \bf 4.78 & \bf 9.65 & \bf 4.04 & \bf 9.77 \\
        \bottomrule
       \end{tabular}
       }
    \end{subtable}
    \begin{subtable}[h]{\textwidth}
        \centering
        \caption{DPM-Solver}
        \scalebox{0.9}{
        \begin{tabular}{l c c c c c c c c c c}
        \toprule
         & \multicolumn{2}{c}{5} & \multicolumn{2}{c}{8} & \multicolumn{2}{c}{10} & \multicolumn{2}{c}{20} & \multicolumn{2}{c}{50} \\
         & sFID & IS & sFID & IS & sFID & IS & sFID & IS & sFID & IS \\
        \midrule
         ADM & 11.82 & 8.00 & 5.79 & 9.12 & \bf 5.05 & 9.41 & \bf 4.43 & 9.78 & \bf 4.32 & 9.82 \\
         ADM-IP & 26.46 & 7.09 & 5.93 & 9.19 & 5.49 & 9.45 & 7.53 & 9.66 & 8.37 & 9.75 \\
         ADM-AT &  \bf 11.19 &  \bf 8.43 &  \bf 5.10 & \bf 9.35 & 5.29 & \bf 9.65 & 4.75 & \bf 10.03 & 4.59 &  \bf 9.93 \\
        \bottomrule
       \end{tabular}
       }
    \end{subtable}
    \label{tab:sfid_is_c10}
\end{table}
\begin{table}[!t]
    \caption{Comparison of Precision (P) $\uparrow$ and Recall (R) $\uparrow$ on \texttt{CIFAR10} 32x32.}
    \begin{subtable}[h]{\textwidth}
        \centering
        \caption{IDDPM}
        \scalebox{0.9}{
        \begin{tabular}{l c c c c c c c c c c}
        \toprule
         & \multicolumn{2}{c}{5} & \multicolumn{2}{c}{8} & \multicolumn{2}{c}{10} & \multicolumn{2}{c}{20} & \multicolumn{2}{c}{50}\\
         & P & R & P & R & P & R & P & R & P & R \\
        \midrule
         ADM &  \bf 0.54 & \bf 0.47 & \bf 0.59 & \bf 0.45 & 0.61 & \bf 0.46 & 0.64 & 0.52 & 0.68 & 0.58 \\
         ADM-IP & \bf 0.54 & 0.39 & \bf 0.59 & 0.43 & 0.61 & \bf 0.46 & 0.66 & 0.54 & 0.68 &  0.59\\
         ADM-AT & 0.52 & \bf 0.47 & 0.57 &  \bf 0.45 & \bf 0.62 & \bf 0.46 &  \bf 0.68 &  \bf 0.55 & \bf 0.69 &  \bf 0.59 \\
        \bottomrule
       \end{tabular}
       }
    \end{subtable}
    \begin{subtable}[h]{\textwidth}
        \centering
        \caption{DDIM}
        \scalebox{0.9}{
        \begin{tabular}{l c c c c c c c c c c}
        \toprule
         & \multicolumn{2}{c}{5} & \multicolumn{2}{c}{8} & \multicolumn{2}{c}{10} & \multicolumn{2}{c}{20} & \multicolumn{2}{c}{50} \\
         & P & R & P & R & P & R & P & R & P & R\\
        \midrule
         ADM & 0.57 & \bf 0.47 & 0.59 & 0.52 & 0.61 & 0.52 & 0.64 & 0.52 & 0.63 & 0.60 \\
         ADM-IP & 0.57 & 0.44 & \bf 0.62 &  \bf 0.53 & \bf 0.63 & \bf 0.56 &  0.65 & \bf 0.60 & 0.65 & \bf 0.61 \\
         ADM-AT & \bf 0.59 & 0.46 & \bf 0.62 & 0.52& \bf 0.63 & 0.54 &  \bf 0.65 & 0.58 & \bf 0.66 & \bf 0.61 \\
        \bottomrule
       \end{tabular}
       }
    \end{subtable}
    \begin{subtable}[h]{\textwidth}
        \centering
        \caption{ES}
        \scalebox{0.9}{
        \begin{tabular}{l c c c c c c c c c c}
        \toprule
         & \multicolumn{2}{c}{5} & \multicolumn{2}{c}{8} & \multicolumn{2}{c}{10} & \multicolumn{2}{c}{20} & \multicolumn{2}{c}{50} \\
         & P & R & P & R & P & R & P & R & P & R\\
        \midrule
         ADM & 0.54 & 0.37 & 0.60 & 0.48 & 0.61 & 0.52 & 0.64 & 0.52 & 0.63 & 0.60 \\
         ADM-IP & 0.46 & 0.32 & 0.58 & 0.45 & 0.62 & 0.51 & \bf 0.67 & \bf 0.58 & \bf 0.68 & 0.60 \\
         ADM-AT & \bf 0.61 & \bf 0.45 & \bf 0.64 & \bf 0.51 & \bf 0.65 & \bf 0.54 & 0.65 & \bf 0.58 & 0.66 & \bf 0.61 \\
        \bottomrule
       \end{tabular}
       }
    \end{subtable}
    \hfill
    \begin{subtable}[h]{\textwidth}
        \centering
        \caption{DPM-Solver}
        \scalebox{0.9}{
        \begin{tabular}{l c c c c c c c c c c}
        \toprule
         & \multicolumn{2}{c}{5} & \multicolumn{2}{c}{8} & \multicolumn{2}{c}{10} & \multicolumn{2}{c}{20} & \multicolumn{2}{c}{50} \\
         & P & R & P & R & P & R & P & R & P & R\\
        \midrule
         ADM & 0.61 & 0.47 &  \bf 0.65 & 0.58 & \bf 0.65 & 0.59 &  0.66 & 0.61 & 0.63 &  \bf 0.62 \\
         ADM-IP & 0.49 & 0.32 &  \bf 0.65 & 0.58 & \bf 0.65  & 0.59 & 0.62 & 0.58 & 0.61 & 0.56 \\
         ADM-AT &  \bf 0.62 &  \bf 0.49 &  \bf 0.65 & \bf 0.59 &  \bf 0.65 &  \bf 0.61 & \bf 0.67 & \bf 0.62 &  \bf 0.65 & 0.61 \\
        \bottomrule
       \end{tabular}
       }
    \end{subtable}
    \label{tab:pr_c10}
\end{table}
\begin{table}[!t]
    \caption{Comparison of sFID $\downarrow$ and IS $\uparrow$ on \texttt{ImageNet} 64x64.}
    \begin{subtable}[h]{\textwidth}
        \centering
        \caption{IDDPM}
        \scalebox{0.9}{
        \begin{tabular}{l c c c c c c c c c c}
        \toprule
         & \multicolumn{2}{c}{5} & \multicolumn{2}{c}{8} & \multicolumn{2}{c}{10} & \multicolumn{2}{c}{20} & \multicolumn{2}{c}{50} \\
         & sFID & IS & sFID & IS & sFID & IS & sFID & IS & sFID & IS \\
        \midrule
         ADM & 26.17 & 12.55 & \bf 36.34 & 22.61 & 40.52 & 26.55 & 26.08 & 39.10 & 11.35 & 45.68 \\
         ADM-IP & 40.90 & 12.19 & 47.98 & 23.47 & 37.72 & 27.86 & 25.06 & \bf 39.40 & 6.75 & 44.87 \\
         ADM-AT & \bf 24.82 & \bf 14.50 & 37.04 & \bf 23.84 & \bf 36.50 & \bf 30.03 & \bf 22.83 & 39.12 & \bf 5.69  & \bf 46.25 \\
        \bottomrule
       \end{tabular}
       }
    \end{subtable}
    \begin{subtable}[h]{\textwidth}
        \centering
        \caption{DDIM}
        \scalebox{0.9}{
        \begin{tabular}{l c c c c c c c c c c}
        \toprule
         & \multicolumn{2}{c}{5} & \multicolumn{2}{c}{8} & \multicolumn{2}{c}{10} & \multicolumn{2}{c}{20} & \multicolumn{2}{c}{50} \\
         & sFID & IS & sFID & IS & sFID & IS & sFID & IS & sFID & IS \\
        \midrule
         ADM & 27.74 & 14.30 & 14.27 & 25.88 & 12.78 & 28.29 & 8.84 & 33.54 & 6.31 & 38.08 \\
         ADM-IP & 52.08 & 10.21 & 16.40 & 22.03 & 11.70 & 25.94 & 9.09 & 32.04 & 15.14 & 31.62 \\
         ADM-AT & \bf 25.49 & \bf 14.82 & \bf 10.68 & \bf 26.62 & \bf 9.22 & \bf 29.29 & \bf 6.41 & \bf 34.33 & \bf 4.66 & \bf 39.36 \\
        \bottomrule
       \end{tabular}
       }
    \end{subtable}
    \begin{subtable}[h]{\textwidth}
        \centering
        \caption{ES}
        \scalebox{0.9}{
        \begin{tabular}{l c c c c c c c c c c}
        \toprule
         & \multicolumn{2}{c}{5} & \multicolumn{2}{c}{8} & \multicolumn{2}{c}{10} & \multicolumn{2}{c}{20} & \multicolumn{2}{c}{50} \\
         & sFID & IS & sFID & IS & sFID & IS & sFID & IS & sFID & IS \\
        \midrule
         ADM & 34.55 & 13.29 & 42.32 & 24.98 & 34.44 & 29.36 & 14.44 & 40.45 & 6.41 & 45.36 \\
         ADM-IP & 44.81 & 10.07 & 41.01 & 22.44 & 30.12 & 27.66 & \bf 10.13 & 39.50 & \bf 4.67 & 44.69 \\
         ADM-AT & \bf 29.72 & \bf 16.49 & \bf 33.58 & \bf 27.85 & \bf 27.64 & \bf 31.94 & 10.22 & \bf 42.18 & 5.10 & \bf 45.59\\
        \bottomrule
       \end{tabular}
       }
    \end{subtable}
    \begin{subtable}[h]{\textwidth}
        \centering
        \caption{DPM-Solver}
        \scalebox{0.9}{
        \begin{tabular}{l c c c c c c c c c c}
        \toprule
         & \multicolumn{2}{c}{5} & \multicolumn{2}{c}{8} & \multicolumn{2}{c}{10} & \multicolumn{2}{c}{20} & \multicolumn{2}{c}{50} \\
         & sFID & IS & sFID & IS & sFID & IS & sFID & IS & sFID & IS \\
        \midrule
         ADM & 25.70 & 24.34 & 11.08 & 34.77 & 8.05 & \bf 37.45 & 5.35 & \bf 40.54 & 4.69 & \bf 41.31 \\
         ADM-IP & 42.68 & 16.93 & \bf 7.47 & 33.85 & 7.22 & 33.57 & 14.74 & 31.29 & 18.99 & 30.32\\
         ADM-AT & \bf 20.79 & \bf 26.32 & 7.60 & \bf 34.89 & \bf 6.36 & 36.51 & \bf 4.51 & 38.79 & \bf 4.22 & 39.10 \\
        \bottomrule
       \end{tabular}
       }
    \end{subtable}
    \label{tab:sfid_is_i64}
\end{table}
\begin{table}[!t]
    \caption{Comparison of Precision (P) $\uparrow$ and Recall (R) $\uparrow$ on \texttt{ImageNet} 64x64.}
    \begin{subtable}[h]{\textwidth}
        \centering
        \caption{IDDPM}
        \scalebox{0.9}{
        \begin{tabular}{l c c c c c c c c c c}
        \toprule
         & \multicolumn{2}{c}{5} & \multicolumn{2}{c}{8} & \multicolumn{2}{c}{10} & \multicolumn{2}{c}{20} & \multicolumn{2}{c}{50} \\
         & P & R & P & R & P & R & P & R & P & R\\
        \midrule
         ADM & 0.34 & 0.48 & 0.46 & \bf 0.50 & 0.51 & 0.48 & 0.65 & 0.52 & 0.73 & 0.57 \\
         ADM-IP & 0.39 & 0.39 & \bf 0.50 & 0.45 & \bf 0.56 & 0.48 & 0.68 & \bf 0.55 & 0.73 & \bf 0.60 \\
         ADM-AT & \bf 0.40 & \bf 0.50 & \bf 0.50 & \bf 0.50 & 0.55 & \bf 0.49 & \bf 0.69 & 0.52 & \bf 0.77 & 0.59 \\
        \bottomrule
       \end{tabular}
       }
    \end{subtable}
    \begin{subtable}[h]{\textwidth}
        \centering
        \caption{DDIM}
        \scalebox{0.9}{
        \begin{tabular}{l c c c c c c c c c c}
        \toprule
         & \multicolumn{2}{c}{5} & \multicolumn{2}{c}{8} & \multicolumn{2}{c}{10} & \multicolumn{2}{c}{20} & \multicolumn{2}{c}{50} \\
         & P & R & P & R & P & R & P & R & P & R\\
        \midrule
         ADM & 0.42 & \bf 0.47 & 0.54 & \bf 0.56 & 0.58 & \bf 0.58 & 0.65 & 0.60 & 0.69 & \bf 0.61 \\
         ADM-IP & 0.38 & 0.40 & 0.51 & 0.53 & 0.55 & 0.57 & 0.63 & \bf 0.61 & 0.62 & \bf 0.61 \\
         ADM-AT & \bf 0.44 & 0.43 & \bf 0.58 & 0.55 & \bf 0.62 &0.56 & \bf 0.69 & 0.59 & \bf 0.72 & \bf 0.61 \\
        \bottomrule
       \end{tabular}
       }
    \end{subtable}
    \begin{subtable}[h]{\textwidth}
        \centering
        \caption{ES}
        \scalebox{0.9}{
        \begin{tabular}{l c c c c c c c c c c}
        \toprule
         & \multicolumn{2}{c}{5} & \multicolumn{2}{c}{8} & \multicolumn{2}{c}{10} & \multicolumn{2}{c}{20} & \multicolumn{2}{c}{50} \\
         & P & R & P & R & P & R & P & R & P & R\\
        \midrule
         ADM & 0.40 & 0.44 & 0.52 & 0.47 & 0.58 & 0.48 & 0.69 & 0.55 & 0.73 & 0.59 \\
         ADM-IP & 0.37  & 0.35 & 0.49 & 0.44 & 0.56 & \bf 0.49 & 0.68 & \bf 0.57 & 0.72 & \bf 0.60 \\
         ADM-AT & \bf 0.44 & \bf 0.46 & \bf 0.58 & \bf 0.48 & \bf 0.63 & \bf 0.49 & \bf 0.73 & 0.55 & \bf 0.76 & 0.59 \\
        \bottomrule
       \end{tabular}
       }
    \end{subtable}
    \begin{subtable}[h]{\textwidth}
        \centering
        \caption{DPM-Solver}
        \scalebox{0.9}{
        \begin{tabular}{l c c c c c c c c c c}
        \toprule
         & \multicolumn{2}{c}{5} & \multicolumn{2}{c}{8} & \multicolumn{2}{c}{10} & \multicolumn{2}{c}{20} & \multicolumn{2}{c}{50} \\
         & P & R & P & R & P & R & P & R & P & R\\
        \midrule
         ADM & 0.51 & 0.49 & 0.65 & 0.58 & 0.67 & \bf 0.60 & 0.69 & \bf 0.62 & 0.69 & \bf 0.62\\
         ADM-IP & 0.39 & 0.44 & 0.64 & \bf 0.60 & 0.64 & \bf 0.60 & 0.59 & 0.60 & 0.57 & 0.59 \\
         ADM-AT & \bf 0.56 & \bf 0.50 & \bf0.68 & 0.57 & \bf 0.69 & 0.59 & \bf 0.72& 0.60 & \bf 0.71 & 0.61 \\
        \bottomrule
       \end{tabular}
       }
    \end{subtable}
    \label{tab:pr_i64}
\end{table}

We present the results of more generation quality metrics, including sFID, Inception Score (IS), Precision, and Recall, on \texttt{CIFAR10} 32x32 (Table~\ref{tab:sfid_is_c10} and Table~\ref{tab:pr_c10}) and \texttt{ImageNet} 64x64 (Table~\ref{tab:sfid_is_i64} and Table~\ref{tab:pr_i64}). 
The evaluation is performed following ~\cite{dhariwal2021diffusion}. 
We observe that our method shows effectiveness across these metrics.

\section{More Analysis}
\subsection{Efficient AT vs Standard AT}\label{app:at_ablation}

In this section, we conduct an ablation of the AT method in diffusion model training.
We compare the performance of our used efficient AT and a standard AT method PGD on \texttt{CIFAR-10} dataset at the resolution of 32$\times$32.
The adversarial step $K$ is set to be 3 for both methods.
We fine-tune both models from the same pretrained ADM model with 100K update iterations of the model. 
The results are shown in Table~\ref{tab:at_ablation_results}.
We report the results of 4 sampler settings (method-NFEs): IDDPM-50, DDIM-50, ES-20, and DPM-Solver-10.
\begin{table*}[t!]
\centering
\caption{Comparison of different AT methods used in our AT framework. All models are trained with the same model-updating iterations while the efficient AT has less training time.}
\scalebox{0.8}{
\begin{tabular}{lccccc}
\toprule
    \multirow{2}{*}{Methods} & \multicolumn{4}{c}{FID} & Training Time \\ 
    & IDDPM-50 & DDIM-50 & ES-20 & DPM-Solver-10 & Speedup\\
    \midrule
 Standard AT PGD-3 & 4.02 & \bf 3.37 & 6.42 & 7.60 & 1.0$\times$ \\
 Efficient AT (Ours) & \bf 3.97 & 3.42 & \bf 5.98 & \bf 6.05 & \bf 2.6$\times$ \\
    \bottomrule
    \end{tabular}
}
\label{tab:at_ablation_results}
\end{table*}

We observe that efficient AT achieves performance comparable to or even better than PGD with the same model update iterations while accelerating the training (2.6$\times$ speed-up). 
Thus, we propose applying the efficient AT method for our adversarial training framework.

\subsection{Convergence of AT on Diffusion Models}\label{app:convergence}

\begin{figure}[h]
    \centering
\includegraphics[width=0.5\textwidth]{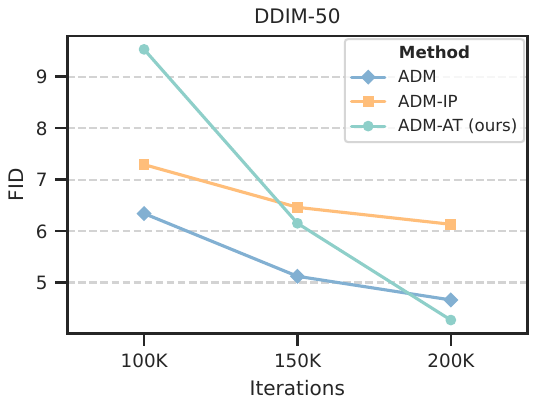}
    \caption{The convergence of methods trained from scratch on \texttt{CIFAR-10} $32\times32$. We use the DDIM sampler with 50 NFEs for sampling.}\label{fig:convergence_from_scratch}
\end{figure}

\begin{figure}[t]
    \centering
    \includegraphics[width=\textwidth]{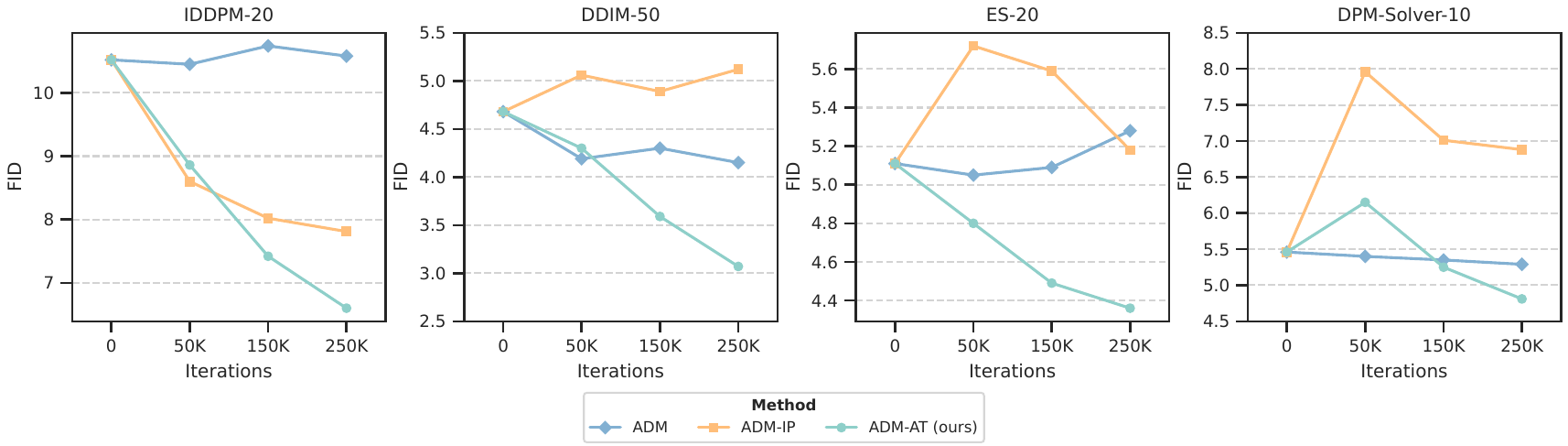}
    \caption{The convergence of methods fine-tuned from a same pretrained model on \texttt{CIFAR-10} $32\times32$. We compare the performance of methods on various samplers.}
    \label{fig:convergence_finetune}
\end{figure}

In classification tasks, adding adversarial perturbations usually slows the convergence of model training~\citep{freelb}. 
We are interested to see whether AT also affects the convergence of the diffusion training process. 

Firstly, we explore the convergence of models trained from scratch.
We utilize DDIM as the sampler with 50 NFEs and the results are shown in Figure~\ref{fig:convergence_from_scratch}. 
We observe that our AT method and ADM-IP exhibit slower convergence compared to ADM at the beginning (before 100K iterations), while as training more iterations (200K), our AT method shows a notable advantage.

Moreover, we explore the convergence of models under fine-tuning setting and the results are shown in Figure~\ref{fig:convergence_finetune}.
We observe under this setting, when given a pretrained diffusion model like ADM, fine-tuning it with our proposed AT improves performance faster than other baselines. 
Overall, we observe that incorporating AT with a diffusion framework does not affect the convergence of the model much, especially in the fine-tuning setting.

\subsection{More Ablation Study}
\label{app:ablation_hyperparams}

\begin{table*}[t!]
\centering
\caption{Comparison of different adversarial learning rate $\alpha$ of our AT framework on \texttt{CIFAR10} 32x32. IDDPM is adopted as the inference sampler.}
\begin{tabular}{lccccc}
\toprule
   $\alpha$ $\backslash$ NFEs & 5 & 8 & 10 & 20 & 50 \\ 
    \midrule
    $\alpha=0.05$ & 51.72 & 32.09 & 25.48 & 10.38 & 4.36 \\
    $\alpha=0.1$ & 37.15 & 23.59  & 15.88 & 6.60 & 3.34 \\
    $\alpha=0.5$ & 63.73 & 40.08 & 27.57 & 7.23 & 3.42 \\
    \bottomrule
    \end{tabular}
\label{tab:alr_ablation_c10}
\end{table*}

\begin{table*}[t!]
\centering
\caption{Comparison of different adversarial learning rate $\alpha$ of our AT framrwork on \texttt{ImageNet} 64x64. IDDPM is adopted as the inference sampler.}
\begin{tabular}{lccccc}
\toprule
   $\alpha$ $\backslash$ NFEs & 5 & 8 & 10 & 20 & 50 \\ 
    \midrule
    $\alpha=0.1$ & 56.92 & 27.39 & 24.06  & 10.17 & 5.82  \\
    $\alpha=0.5$ & 45.65 & 23.79 & 19.18 & 8.28 & 4.01 \\
    $\alpha=0.8$ & 46.92 & 28.46 & 22.47 & 9.70 & 4.25 \\
    \bottomrule
    \end{tabular}
\label{tab:alr_ablation_i64}
\end{table*}

\paragraph{Ablation on $\alpha$} We investigate the impact of adversarial learning rate $\alpha$ in our framework.
The results of various $\alpha$ on \texttt{CIFAR10} 32x32 and \texttt{ImageNet} 64x64 are shown in Table~\ref{tab:alr_ablation_c10} and Table~\ref{tab:alr_ablation_i64}, respectively.
We observe that $\alpha$ set to 0.1 is better on \texttt{CIFAR10} 32x32 and $\alpha=0.5$ is better for \texttt{ImageNet} 64x64. That says, the image in larger size corresponds to larger optimal perturbation level $\alpha$. We speculate this is because we use the perturbation measured under $\ell_{2}$-norm, where the $\ell_{2}$-norm of vector will increase with its dimension.

\begin{table*}[t!]
\centering
\caption{Comparison of different perturbation norms ($l_1$, $l_2$ $l_{\infty}$) of our AT framework on \texttt{CIFAR10} 32x32.}
\begin{tabular}{lcccc}
\toprule
   Perturbation Norm & IDDPM-50 & DDIM-50 & ES-20 & DPM-Solver-10 \\ 
    \midrule
    $l_1$ & 4.45 & 4.91 & 4.72 & 5.05 \\
    $l_2$ & 3.34 & 3.07 & 4.36 & 4.81 \\
    $l_{\infty}$ & 3.87 & 3.63 & 4.48 & 5.32 \\
    \bottomrule
    \end{tabular}
\label{tab:norm_ablation}
\end{table*}

\paragraph{Ablation on perturbation norm} During our experiments, we adopt $\ell_{2}$-adversarial perturbation. Actually, perturbations in Euclidean space under different $\ell_{p}$ norm are equivalent with each other, e.g., for vector $\bdelta\in \mathbb{R}^{d}$, it holds $\|\bdelta\|_{\infty}\leq \|\bdelta\|_{2} \leq \sqrt{d}\|\bdelta\|_{\infty}$. Therefore, we select $\|\cdot\|_{2}$ as representation in our paper. Next, we explore the proposed ADM-AT under different adversarial perturbations.  

The results are in Table~\ref{tab:norm_ablation}. We found that our method under $\ell_2$-perturbation is more stable and indeed has better performance, thus we suggest to use $\ell_2$-perturbation as in the main body of this paper. 

\subsection{Qualitative Comparisons}

\begin{figure}[h]
    \centering
\includegraphics[width=0.7\textwidth]{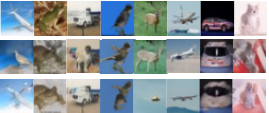}
    \caption{The qualitative comparsions of ADM-AT (top, FID 6.60), ADM-IP (middle, FID 7.81), and ADM (bottom, FID 10.58) on \texttt{CIFAR10} $32\times32$. We use the IDDPM sampler with 20 NFEs for sampling.}\label{fig:vis_cifar}
\end{figure}

\begin{figure}[h]
    \centering
\includegraphics[width=0.7\textwidth]{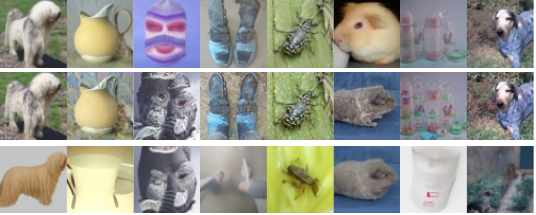}
    \caption{The qualitative comparsions of ADM-AT (top, FID 6.20), ADM-IP (middle, FID 8.40) and ADM (bottom, FID 8.32) on \texttt{ImageNet} $64\times64$. We use the DDIM sampler with 20 NFEs for sampling.}\label{fig:vis_imagenet64}
\end{figure}

\begin{figure}[h]
    \centering
\includegraphics[width=0.8\textwidth]{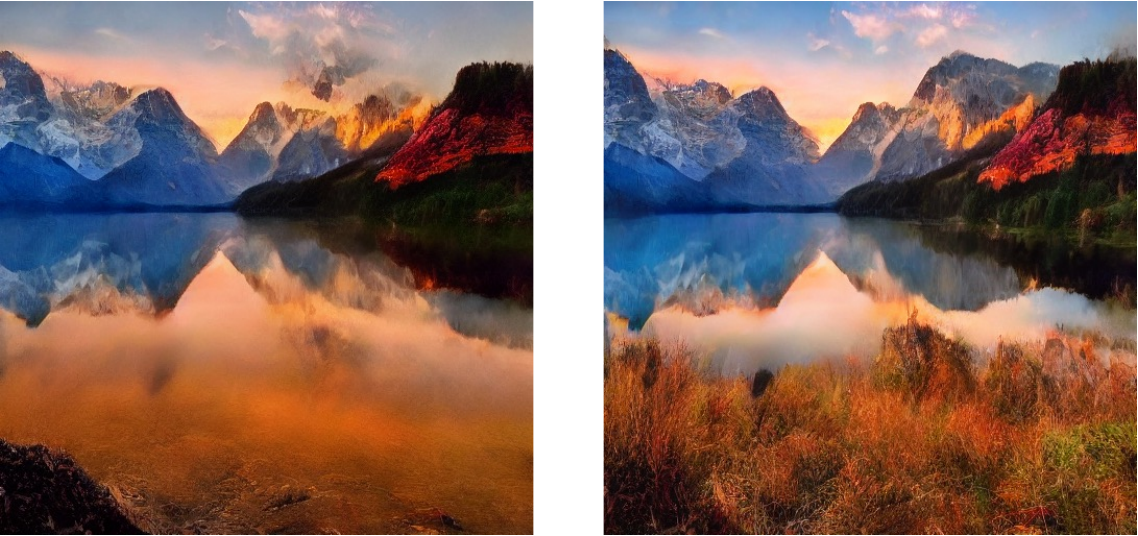}
    \caption{The qualitative comparsions of LCM (left) and LCM-AT (right) with one-step generation. The text prompt is \textit{A photo of beautiful mountain with realistic sunset and blue lake, highly detailed, masterpiece.}}\label{fig:lcm_vis1}
\end{figure}

\begin{figure}[h]
    \centering
\includegraphics[width=0.8\textwidth]{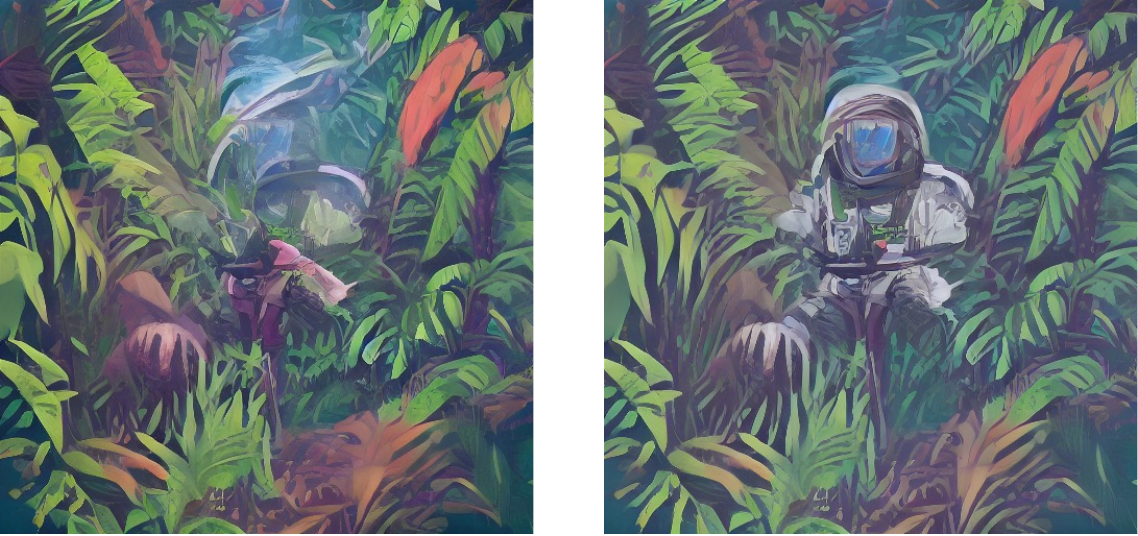}
    \caption{The qualitative comparsions of LCM (left) and LCM-AT (right) with one-step generation. The text prompt is \textit{Astronaut in a jungle, cold color palette, muted colors, detailed, 8k.}}\label{fig:lcm_vis2}
\end{figure}

Figure~\ref{fig:vis_cifar},~\ref{fig:vis_imagenet64},~\ref{fig:lcm_vis1},~\ref{fig:lcm_vis2} show the qualitative comparisons between our proposed AT method and baselines. Our proposed AT method generates more realistic and higher-fidelity samples. We attribute this to our AT algorithm mitigates the distribution mismatch problem.


\end{document}